\documentclass[accepted]{uai2024} 
                        

\usepackage[american]{babel}
\usepackage{amsmath}
\usepackage{algorithm}
\usepackage{algorithmic}
\usepackage{comment}
\usepackage{apacite}
\usepackage{hyperref}
\usepackage{bm}
\usepackage{amssymb}
\usepackage{graphicx}
\usepackage{authblk}
\usepackage{amssymb}
\usepackage[table]{xcolor}

\usepackage{natbib} 
    \bibliographystyle{plainnat}
    
\usepackage{mathtools} 
\usepackage{booktabs} 
\usepackage{tikz} 



\title{Finite-Time Analysis of Three-Timescale Constrained Actor-Critic and Constrained Natural Actor-Critic Algorithms}

%
%
\author[1]{\href{mailto:<prashansap@iisc.ac.in>}{Prashansa Panda}}
\author[1]{\href{mailto:<shalabh@iisc.ac.in>}{Shalabh Bhatnagar}}

\affil[1]{%
   Department of Computer Science and Automation\\
    Indian Institute of Science\\
    Bangalore, India
}

\begin{document}
\newcommand{\Ab}{\mathbf{A}}
\newcommand{\EE}{\mathbb{E}}
\newcommand{\btheta}{\bm{\theta}}
\newcommand{\bgamma}{\bm{\gamma}}
\newcommand{\bbb}{\mathbf{b}}
\newcommand{\cP}{\mathcal{P}}
\newcommand{\RR}{\mathbb{R}}
\newcommand{\cS}{{\mathcal{S}}}
\newcommand{\cE}{\mathcal{E}}
\newcommand{\PP}{\mathbb{P}}
\newcommand{\norm}[1]{\|#1\|}
\newcommand{\cO}{\mathcal{O}}
\newcommand{\customcite}[1]{\textbf{\citeauthor{1} (\citeyear{1})}}

\ifx\theorem\undefined
\newtheorem{theorem}{Theorem}

\newtheorem{lemma}{Lemma}
\newtheorem{sublemma}{lemma}[lemma]

\ifx\remark\undefined
\newtheorem{remark}{Remark}

\ifx\proposition\undefined
\newtheorem{proposition}{Proposition}

\ifx\assumption\undefined
\newtheorem{assumption}{Assumption}

\ifx\corollary\undefined
\newtheorem{corollary}{Corollary}  
\maketitle

\begin{abstract}
  Actor Critic methods have found immense applications on a wide range of Reinforcement Learning tasks especially when the state-action space is large. In this paper, we consider actor critic and natural actor critic algorithms with function approximation for constrained Markov decision processes (C-MDP) involving inequality constraints and carry out a non-asymptotic analysis for both  of these algorithms in a non-i.i.d (Markovian) setting. We consider the long-run average cost criterion where both the objective and the constraint functions are suitable policy-dependent long-run averages of certain prescribed cost  functions. We handle the inequality constraints using the Lagrange multiplier method.  We prove that these algorithms are guaranteed to find a first-order stationary point (i.e., $\Vert \nabla L(\theta,\gamma)\Vert_2^2 \leq \epsilon$) of the  performance (Lagrange) function $L(\theta,\gamma)$, with a sample complexity of $\mathcal{\tilde{O}}(\epsilon^{-2.5})$  in the case of both Constrained Actor Critic (C-AC)  and Constrained Natural Actor Critic (C-NAC) algorithms. 
  We also show the results of experiments on three different Safety-Gym environments.
\end{abstract}

\section{Introduction}\label{sec:intro}
In recent times, there has been significant research activity on constrained reinforcement learning algorithms,  motivated largely from applications in safe reinforcement learning (Safe-RL). Each state transition here not just receives a single-stage cost indicating the desirability of the action and the resulting next state but also receives additional constraint (single-stage) costs that may account for safety of the chosen action and the resulting next state. The goal then is to minimize a `long-term cost' criterion while ensuring at the same time that the `long-term constraint costs' stay within certain prescribed thresholds. The problem setting could generally involve more than one constraint cost. Problems of Safe-RL can in general be formulated in the setting of constrained Markov decision processes (C-MDP),
see \cite{hasan,jayant,wachi}. \cite{altman} provides a textbook treatment on C-MDP. As an example, one may consider the problem of navigation of an autonomous vehicle such as a drone or a self-driving car where the goal is to reach the destination in as short a time as possible while ensuring there are no collisions with obstacles or accidents on the way. Such problems can be well formulated in the setting of C-MDPs.
Constrained MDPs find several applications in various diverse domains. Some of these applications include finding optimal bandwidth allocation in resource constrained communication networks, determining strategic building and maintenance policies, safe navigation for self-driving cars, drones and robots, optimal energy sharing strategies in energy harvesting networks etc. 

In situations where the system model is unavailable, but one has access to data in the form of tuples of states, actions, rewards, penalties, as well as next states, one may formulate and present constrained reinforcement learning (C-RL) algorithms for finding the optimal policies. An important algorithm in this category is the constrained actor critic (C-AC) algorithm originally presented by \cite{borkarc-ac} for the long-run average cost setting but for look-up table representations. In \cite{bhatnagar_2010, cmdpsa}, C-AC algorithms with function approximation have been presented and analysed for the infinite horizon discounted cost and the long-run average cost objectives respectively.
In \cite{cmdpsa}, an application of the presented algorithm is also studied empirically on a constrained multi-stage routing problem.

The key idea in the aforementioned algorithms has been to relax the constraints into the objective by forming a Lagrangian and then perform a gradient ascent step in the policy parameter while simultaneously performing a descent in the Lagrange parameter. Note that usual actor-critic (\cite{first_actor_critic}) or natural actor critic algorithms (\cite{nac})  ordinarily require two timescale recursions. This is because these algorithms try and mimic the policy iteration procedure whereby the actor or policy parameter update proceeds on the slower timescale while the critic or value function parameter is updated on the faster timescale. In constrained actor critic and constrained natural actor critic algorithms, one needs to introduce an additional (the slowest) timescale over which the Lagrange multiplier is updated.

\begin{table*}
  \centering
  \caption{Comparison of  Finite-Time Analysis of Various Actor-Critic Algorithms.  }\label{table:comparision_fta}
  \begin{tabular}{|p{3.7 cm}|p{3 cm}|p{2cm}|p{1.5 cm}|p{3 cm}|p{2 cm}|}
  \toprule
  \textbf{Reference} & \textbf{Algorithm} & \textbf{Sample Complexity} & \textbf{Function Approximation} & \textbf{Timescales} & \textbf{Critic estimation}\\ \hline
  \cite{fta_2_timescale} & Actor-Critic & $\widetilde{\mathcal{O}}(\epsilon^{-2.5})$ & \checkmark & two-timescale & TD(0) \\ \hline
  \cite{chen_zhao} & Actor-Critic & $\widetilde{\mathcal{O}}(\epsilon^{-2})$ & \checkmark & single timescale & TD(0) \\
    \hline
    \cite{online_prime_dual} & Constrained NAC & $\mathcal{O}(\epsilon^{-6})$ & $\times$ & two time-scale & TD(0)\\ 
    \hline
    \cite{suttleetal} & Actor-Critic & $\widetilde{\mathcal{O}}(\tau^{2}_{mix}\epsilon^{-2})$ & \checkmark & two-timescale & Monte-Carlo\\
    \hline
    \rowcolor{blue!10} Our work & Constrained AC & $\widetilde{\mathcal{O}}(\epsilon^{-2.5})$ & \checkmark & three-timescale & TD(0)\\ \hline
    \rowcolor{blue!10} Our work & Constrained NAC & $\widetilde{\mathcal{O}}(\epsilon^{-2.5})$ & \checkmark & three-timescale & TD(0)\\
  \bottomrule
  \end{tabular}
\end{table*}

In this paper, we carry out a finite-time (non-asymptotic) convergence analysis of three-timescale constrained actor-critic and constrained natural actor-critic algorithms to find the sample complexity  of these algorithms (in the constrained setting). We assume that the system model via the transition probabilities is not known and linear function approximation is used for the critic recursion. A non-asymptotic analysis helps provide an estimate of the number of  samples needed for the algorithm to converge as well as helps provide  appropriate learning rates for the algorithm. In the case of the C-NAC algorithm, the natural gradient is estimated by linearly transforming the regular gradient by making use of the inverse Fisher information matrix of the policy which is clearly positive definite, see \cite{kakade_2001}. It is generally observed that using natural gradients speeds up the performance of these algorithms. 
To the best of our knowledge, a non-asymptotic convergence analysis of three-timescale constrained actor-critic and constrained natural actor-critic algorithms using linear function approximation has not been carried out in the past.

We summarise our principal contributions below:\\
\textit{(a)} We carry out the first finite-time analyses for the Constrained  Actor-Critic and the Constrained Natural Actor-Critic algorithms  with linear function approximation in the long-run average cost setting. \\
\textit{(b)} We conduct the aforementioned analyses under the general assumption of Markovian sampling using TD(0) for the critic recursion and obtain a sample complexity of $\widetilde{\mathcal{O}}(\epsilon^{-2.5})$ for both algorithms to find an $\epsilon$-optimal stationary point of the performance function.\\
\textit{(c)} It is important to note here that the sample complexity of both our constrained algorithms matches exactly the one obtained by \cite{fta_2_timescale} which is also $\mathcal{\tilde{O}}(\epsilon^{-2.5})$ even though the latter has been obtained for the case of two-timescale (unconstrained) regular actor-critic algorithms. Further, our setting is more general as we consider random single-stage costs and also constraint costs having distributions that are dependent on the current state, action and next state. This is unlike \cite{fta_2_timescale} where the single-stage reward is assumed fixed for the given current state and action. Our result thus shows that under a random cost structure, with  inequality constraints in the formulation, and having three-timescale algorithms as a result (instead of two-timescale algorithms) has no impact on the sample complexity which we believe is a significant outcome of our study.\\
\textit{(d)} We show the results of experiments on three different Safety-Gym environments, namely SafetyPointGoal1-v0, SafetyCarGoal1-v0 and SafetyPointPush1-v0, respectively, where we compare the performance of C-AC and C-NAC with Constrained DQN (C-DQN). For the latter, we incorporate the Lagrange based procedure and update the Lagrange multipliers in the same way as the C-AC and C-NAC procedures for a fair comparison. We observe that C-NAC shows the best results on all three settings while C-AC is the second-best performer on two of the three settings. Further, all algorithms satisfy the specified average cost constraint. 

\noindent\textbf{Notation:}
For two sequences $\{c_n\}$ and $\{d_n\}$, we can write $c_n = \mathcal{O}(d_n)$ if there exists a constant $P>0$ such that ${\displaystyle \frac{|c_n|}{|d_n|} \le P}$, $\forall n$. To further hide logarithm factors,we use the notation $\tilde{\mathcal{O}}(\cdot)$. Without any other specification, $\|\cdot\|$ denotes the $\ell_2$-norm of Euclidean vectors. Further, 
$d_{TV}(M,N)$ denotes the total variation norm between two probability measures $M$ and $N$, and is defined as $d_{TV}(M,N) = 1/2 \int_{\mathcal{X}}|M(dx)-N(dx)|$.


\section{Related Work}
The Actor-Critic Algorithm was first analyzed theoretically for its asymptotic convergence by \cite{kondaborkar}. This was however for the case of lookup table representations. In the case when function approximation is used, \cite{first_actor_critic} analyzed the actor critic algorithm for its asymptotic convergence. \cite{kakade_2001} proposed the natural gradient based algorithm. Under various settings, the asymptotic convergence of actor critic algorithms has also been studied in (\cite{kakade_2001}, \cite{castro_2010}, \cite{zhang_2020}). In \cite{nac}, natural actor critic algorithms that bootstrap in both the actor and the critic have been proposed and analyzed for their asymptotic convergence.  

In recent times, there have been many works focusing primarily on performing finite time analyses of reinforcement learning algorithms. Such analyses are important as they provide sample complexity estimates and non-asymptotic convergence bounds for these algorithms. More recently, such analyses for actor critic algorithms have also been carried out though in the unconstrained (regular MDP) setting. 
\cite{ding} obtain finite time bounds for a natural policy gradient algorithm for discounted cost MDP with constraints.
\cite{fta_2_timescale} show  a non-asymptotic analysis of a two time-scale actor-critic algorithm assuming non-i.i.d samples and obtain a sample complexity of $\tilde{\mathcal{O}}(\epsilon^{-2.5})$ for convergence to an  $\epsilon$-approximate stationary point of the performance function.  \cite{multi_agent} consider a fully decentralized multi-agent reinforcement learning (MARL) setting and show a finite-time convergence analysis for the actor-critic algorithm in the average reward MDP scenario. \cite{chen_zhao} carried out finite time analysis of single time-scale actor critic algorithm and obtained a sample complexity of $\tilde{\mathcal{O}}(\epsilon^{-2})$ for convergence to an  $\epsilon$-approximate stationary point of the performance function. \cite{suttleetal} studied the non-asymptotic convergence properties of  Multi-level
Monte Carlo Actor-Critic (MAC) algorithm. \cite{mondal_aggarwal} proposed and studied the convergence properties of Accelerated
Natural Policy Gradient (ANPG) algorithm. There have also been other recent works  that have analysed Natural Actor-Critic Algorithms for their non-asymptotic convergence, see for instance, \cite{nac_1}, \cite{nac_2}, \cite{nac_3}, \cite{nac_4}, \cite{nac_5}. Table \ref{table:comparision_fta} summarises a comparison of our work with a few related works in terms of sample complexity. 

In early work, \cite{Borkar} proposed the first actor-critic algorithm  for constrained Markov decision processes in the long-run average cost setting and proved its asymptotic convergence in the lookup table setting. \cite{bhatnagar_2010} presented the first actor-critic algorithm with function approximation for the infinite horizon discounted cost problem under multiple inequality constraints and proved its asymptotic convergence. \cite{cmdpsa} presented an actor-critic algorithm in the constrained long-run average cost MDP setting with function approximation under policy gradient actor and temporal difference critic and also analysed its asymptotic convergence. 

\cite{online_prime_dual} recently showed a finite-time analysis for the non-asymptotic convergence of the natural actor-critic algorithm to the global optimum of a CMDP problem in the case of lookup table representations and the infinite horizon discounted cost setting. In this paper, we consider the long-run average cost problem with function approximation that has been analysed for its asymptotic convergence in (\cite{cmdpsa}). We use TD(0) for the critic recursion and use projection for the critic. Further, we present the C-NAC algorithm, where we use natural gradients in the actor recursion along with a TD(0) critic. As  mentioned earlier, there are no prior finite time analyses for average cost/reward constrained actor critic algorithms with function approximation, so our work plugs in an important gap that previously existed in this direction.

\section{Preliminaries}
In this section, we present the C-MDP framework and algorithms that we analyze.

\subsection{Constrained Markov Decision Processes} 
We consider a discrete-time Markov Decision Process with finite state and action spaces. We first explain below the notations used.

$\bullet$  $S$ represents the state space and $A$ the action space. Further, we let $A(j)\subset A$ denote the set of feasible actions in state $j \in S$.

$\bullet$ Let $p(s,s',a)$ denote the probability of transition from state $s$ to $s'$ under action $a$.

$\bullet$ We shall consider only randomized policies $\pi$ in this work. Further, policies are assumed parameterized via a parameter $\theta\in {\cal R}^d$. Thus, given $\theta$, $\pi_{\theta}(a|s)$ is the probability of selecting an action $a\in A(s)$ in state $s$.

$\bullet$ The stationary distribution (over states) induced by the policy $\pi_{\theta}$ is denoted $\mu_{\pi_\theta}$ or simply $\mu_\theta$ (by an abuse of notation) and is assumed unique for any $\theta$.

Let $q(n),h_1(n),...,h_N(n),n \geq 0$, denote a set of costs obtained upon transitioning from state $s_n$ to state $s_{n+1}$ under action  $a_n\in A(s_n)$.  At any time instant $n$, the single-stage costs $q(n),h_k(n), k = 1,...,N$, do not depend on prior states and actions $s_m,a_m, m<n$ given the current state-action pair ($s_n,a_n$). For any $i\in S$, $a\in A(i)$, let $d(i,a), h_k(i,a)$ be defined as $d(i,a) =
E[q(n) | s_n = i,a_n = a], h_k(i,a) = E[h_k(n) | s_n = i,a_n = a], k = 1,...,N$, respectively. Note the abuse of notation here. We assume that the single-stage costs are real-valued, non-negative and mutually independent. Further, we assume that all the single-stage costs $q(n),h_1(n),\ldots,h_N(n)$ are absolutely bounded by a constant $U_{c}>0$.

\subsection{The Objective and Lagrange Relaxation}

Our aim here is to minimize $J(\pi)$ where
\begin{align}
    J(\pi) &= \lim_{n\rightarrow \infty}\frac{1}{n}\EE\big[\sum\limits_{m=0}^{n-1}q(m)|\pi\big]\notag\\
    \label{eq1}
    &=\sum\limits_{s \in S}\mu_\pi(s)\sum\limits_{a \in A(s)}\pi(s,a) d(s,a),
\end{align}
subject to the constraints
\begin{align}
    G_k(\pi) &= \lim_{n\rightarrow\infty}\frac{1}{n}\EE\big[\sum\limits_{m=0}^{n-1}h_k(m)|\pi\big]\notag\\
    \label{eq2}
    &=\sum\limits_{s \in S}\mu_\pi(s)\sum\limits_{a \in A(s)}\pi(s,a) h_k(s,a) \le \alpha_k,
\end{align}
$k = 1,\ldots,N$, where $\alpha_1,\ldots,\alpha_N$ are certain prescribed (positive) constant thresholds.
We consider here  that the Markov process $\{s_
n\}$ under any given policy  is ergodic. Hence, the limits in (\ref{eq1})-(\ref{eq2}) are well-defined.

Consider a vector  $\gamma = (\gamma_1,\ldots,\gamma_N )^T$ representing a set of Lagrange multipliers with $\gamma_1,\ldots,\gamma_N  \in R^{+} \cup \{0\}$.  We define the  Lagrangian  $L(\pi, \gamma)$ according to 
\begin{align*}
   & L(\pi, \gamma) \notag\\& = J(\pi) + \sum\limits_{k=1}^{N}\gamma_k(G_k(\pi) - \alpha_k)\\
    &= \sum\limits_{s \in S}\mu_\pi(s)\sum\limits_{a \in A(s)}\pi(s,a) (d(s,a) + \sum\limits_{k=1}^{N}\gamma_k(h_k(s,a) -\alpha_k)).
\end{align*}
We now have the unconstrained MDP problem with single-stage cost being $q(t) + \sum\limits_{k=1}^{N}\gamma_k(h_k(t) - \alpha_k)$ at instant $t$. 
The differential action value function in the relaxed control setting is defined as follows: 
\begin{align*}
   &M^{\pi,\gamma}(s,a)\\ &= \sum\limits_{t=1}^{\infty}\EE\bigg[q(t) + \sum\limits_{i=1}^{N}\gamma_i(h_i(t) - \alpha_i)\\
   &- \bigg( J(\btheta) + \sum\limits_{i=1}^{N}\gamma_i(G_i(\btheta) - \alpha_i) \bigg) | s_0 = s,a_0 = a,\pi\bigg]\\
   &=\sum\limits_{t=1}^{\infty}\EE\bigg[q(t) + \sum\limits_{i=1}^{N}\gamma_i h_i(t)\\
   &- \bigg( J(\btheta) + \sum\limits_{i=1}^{N}\gamma_i G_i(\btheta) \bigg) | s_0 = s,a_0 = a,\pi\bigg].
\end{align*}
 As mentioned by \cite{cmdpsa}, in the constraint scenario, the policy gradient of the Lagrangian would correspond to 
\begin{equation}
\label{pgt}
\nabla_{\btheta}L(\btheta,\gamma) = \sum\limits_{s \in S}\mu_\pi(s)\sum\limits_{a \in A(s)}\nabla\pi(a|s)\textit{A}^{\pi,\gamma}(s,a),
\end{equation}
where $\textit{A}^{\pi,\gamma}(s,a) = M^{\pi,\gamma}(s,a) - V^{\pi,\gamma}(s)$ is the advantage function for the relaxed setting. Here $V^{\pi,\gamma}(s)$ is the differential cost for a given policy $\pi$ and a set of Lagrange parameters $\gamma$.
We use linear function approximation for $M^{\pi,\gamma}(s,a)$. Thus, the same is approximated as follows:
\[\hat{M}_{w}^{\pi,\gamma}(s,a) \approx w^{\pi,\gamma^T}\Psi_{sa},\]
where $w^{\pi,\gamma} \in \RR^d$ are suitable parameters and 
$\Psi_{sa} \in \RR^d$ are  compatible features for the  tuples $(s,a)$. Thus, $\Psi_{sa} = \nabla \log \pi(a|s)$, $\forall s\in S, a\in A(s)$.

We also use linear function approximation for the differential value function $V^{\pi,\gamma}(s)$ as follows: We let 
\begin{align*}
    \hat{V}_{v}^{\pi,\gamma}(s) \approx v^{\pi,\gamma^T}f_{s},
\end{align*}
where $f_s$ is a $d_1$-dimensional feature vector $f_s = (f_s(1),f_s(2),.....,f_s(d_1))^T$ associated with state $s$ and $v^{\pi,\gamma} = (v^{\pi,\gamma}(1),v^{\pi,\gamma}(2),....,v^{\pi,\gamma}(d_1))^T$ is the corresponding weight vector.

\subsection{The Constrained Actor-Critic (C-AC) and Constrained Natural Actor-Critic (C-NAC) Algorithms}

\begin{algorithm}[H]
\caption{The Three-Timescale Actor-Critic Algorithm for Constrained MDP}\label{subsec:alg1}
\begin{algorithmic}[1]

\STATE \textbf{Input} $\theta_{0}$,  $v_{0}$, 
$L_{0}$, $U_k(0)$ for $1 \leq k \leq N$, $\gamma_k(0)$  for $1 \leq k \leq N$, step-size $a(n)$ for critic and average cost estimate, $b(n)$ for actor and $c(n)$ for Lagrange parameter.
\STATE Draw $s_0$ from some initial distribution
\FOR{$n > 0 $ and $k = 1,2,\ldots,N$}
    \STATE Sample $a_n \sim \pi_{\theta_n}(\cdot |s_n)$, $s_{n+1}\sim p(s_n, \cdot, a_n)$ 
    \STATE Observe  the costs $q(n),h_1(n),h_2(n),\ldots,h_N(n)$
    \STATE  $L_{n+1} = L_n + a(n)(q(n) + \sum_{k=1}^{N}\gamma_k(n)(h_k(n)-\alpha_k) - L_n)$
    \STATE $\delta_{n} = q(n) + \sum_{k=1}^{N}\gamma_k(n)(h_k(n)-\alpha_k) - L_n + v_{n}^T(f_{s_{n+1}} - f_{s_n})$
    \STATE $v_{n+1} = \Gamma(v_{n} + a(n)\delta_{n}f_{s_n})$
    \STATE $\theta_{n+1} = \theta_{n} + b(n)\delta_{n}\Psi_{s_{n}a_{n}}$
    \STATE $U_{k}(n+1) = U_{k}(n) + a(n)(h_{k}(n) - U_{k}(n))$
    \STATE $\gamma_{k}(n+1) = \hat{\Gamma}(\gamma_k(n) + c(n)(U_k(n) - \alpha_k))$
\ENDFOR
\end{algorithmic}
\end{algorithm}
We present here the two algorithms that we analyze in our work for their non-asymptotic convergence -- the constrained actor-critic algorithm (Algorithm 1) and the constrained natural actor-critic algorithm (Algorithm 2), respectively. At time instant $t$, we have $\theta_t$ as the actor parameter, $v_t$ as the critic parameter, $L_t$ as the average cost estimate, $U_{k}(t)$ as the average constraint cost estimate for $k = 1,2,\ldots,N$, $\gamma(t) = (\gamma_1(t),\gamma_2(t),\ldots,\gamma_N(t))^T$ as the vector of Lagrange multiplier estimates and $G(t)$ as the estimate of the Fisher information matrix  respectively.

Let $\Gamma : \RR^{d_1} \rightarrow C$  project any point in $\RR^{d_1}$ to the closest point within the set $C$ which is assumed compact and convex. For any point $h$ contained in the set $C$, 
 $\Vert h\Vert \leq U_{v}$ where $U_{v} > 0$ is a constant. Further, $\hat{\Gamma} : \RR \rightarrow [0,M]$ indicates the operation $\hat{\Gamma}(y) = \max(0,\min(y,M))$ for any $y \in \RR$ where $M < \infty$ represents a large positive constant. This projection operator guarantees the Lagrange multiplier to stay both non-negative and bounded. 

For the natural actor-critic algorithm, we take $G(0) = pI$, where $I$ is a $d \times d$-identity matrix and $p > 0$ is a constant. It can be concluded  that $G(n),n \geq 1$ are positive definite and symmetric matrices as from the update rule, we can see that these result from addition of $(1-a(n))G(n-1)$ and $a(n)\Psi_{s_{n}a_{n}}\Psi_{s_{n}a_{n}}^T$. Hence, $G(n)^{-1},n \geq 1$ are positive definite and symmetric matrices as well.
Let the smallest eigenvalue of $G(i)^{-1}$ be $\lambda_{i}>0$, where $i \geq 1$. Let $\lambda$ be the minimum of all such eigenvalues, i.e., $\lambda = \min\limits_{i}{\lambda_i} >0$.

\begin{algorithm}[H]
\caption{The Three-Timescale Natural Actor-Critic Algorithm for Constrained MDP}\label{subsec:alg2}
\begin{algorithmic}[1]

\STATE \textbf{Input} $\theta_{0}$, $v_{0}$, $L_{0}$, $U_k(0)$ for $1 \leq k \leq N$, $\gamma_k(0)$  for $1 \leq k \leq N$, $G(0)$, step-size $a(n)$ for critic and average cost estimate, $b(n)$ for actor and  $c(n)$ for Lagrange parameter.
\STATE Draw $s_0$ from some initial distribution
\FOR{$n > 0 $ and $k = 1,2,\ldots,N$}
    \STATE Sample $a_n \sim \pi_{\theta_n}(\cdot |s_n)$, $s_{n+1}\sim p(s_n,\cdot,a_n)$ 
    \STATE Observe  the costs $q(n),h_1(n),h_2(n),\ldots,h_N(n)$
    \STATE  $L_{n+1} = L_n + a(n)(q(n) + \sum_{k=1}^{N}\gamma_k(n)(h_k(n)-\alpha_k) - L_n)$
    \STATE $\delta_{n} = q(n) + \sum_{k=1}^{N}\gamma_k(n)(h_k(n)-\alpha_k) - L_n + v_{n}^T(f_{s_{n+1}} - f_{s_n})$
    \STATE $v_{n+1} = \Gamma(v_{n} + a(n)\delta_{n}f_{s_n})$
    \STATE $\theta_{n+1} = \theta_{n} + b(n)\delta_{n}G(n)^{-1}\Psi_{s_{n}a_{n}}$
    \STATE $U_{k}(n+1) = U_{k}(n) + a(n)(h_{k}(n) - U_{k}(n))$
    \STATE $\gamma_{k}(n+1) = \hat{\Gamma}(\gamma_k(n) + c(n)(U_k(n) - \alpha_k))$
    \STATE $G(n+1) = (1-a(n))G(n) + a(n)\Psi_{s_{n}a_{n}}\Psi_{s_{n}a_{n}}^T$
\ENDFOR
\end{algorithmic}
\end{algorithm}

\section{Finite-Time Convergence Results}

We provide in this section the main theoretical results for non-asymptotic convergence as well as provide the  convergence rate and sample complexity for the two algorithms. Due to lack of space, we provide the detailed proofs of these results in the appendix. We emphasize here that asymptotic convergence analysis of these algorithms has not been analysed here since for the case of Constrained Actor-Critic, it has been analysed in \cite{cmdpsa}. Further, the same for Constrained Natural Actor-Critic, it will carry through in a similar manner using the results of \cite{nac}.

\subsection{Assumptions and Basic Results}\label{assumandprop}

We consider TD(0) with function approximation for the critic recursion that estimates the state-value function. Let $v^{*}(\theta,\gamma)$ be the convergence point of the critic under the behavior policy $\pi_{\btheta}$ (for given actor and Lagrange parameters $\theta$ and $\gamma$ respectively), and define $\Ab$ and $\bbb$ as follows:
\begin{align*}
    \Ab &:= \EE_{s_n,a_n,s_{n+1}} \big[ f_{s_{n}} \big( f_{s_{n+1}} - f_{s_{n}}\big)^{\top} \big], \\
    \bbb &:= \EE_{s_n,a_n,s_{n+1}} [(C(s_n,a_n,\gamma)- L(\btheta,\gamma))f_{s_{n}} ],
\end{align*}
where $s_n \sim \mu_{\btheta}(\cdot), a_n \sim \pi_{\btheta}(\cdot | s), s_{n+1}\sim p(s_n, \cdot, a_n)$ and $C(s_n,a_n,\gamma) = d(s_n,a_n) + \sum\limits_{k=1}^{N}\gamma_k(h_k(s_n,a_n) - \alpha_k)$ corresponds to the single-stage cost for the relaxed problem.
Analogous to the unconstrained setting, it can be seen that (see \cite{cmdpsa})
\begin{align*}
    \Ab v^{*}(\theta,\gamma) + \bbb
    & = 
    \mathbf{0}.
\end{align*}

\begin{assumption} \label{assum:bounded_feature_norm}
    The norm of each state feature is bounded by 1,  i.e., $\Vert f_{i}\Vert \le 1$.
\end{assumption}

The following assumption is required for the existence and uniqueness of $v^{*}(\theta,\gamma)$ .

\begin{assumption} \label{assum:negative-definite}
    The matrix $\Ab$ (defined above) is negative definite with  maximum eigenvalue as $- \lambda_e <0$ for all values of $\btheta$.
\end{assumption}

The approximation error for the feature mapping can vary depending on its complexity.
We define the approximation error that arises due to linear function approximation as follows.
\begin{align*}
    \epsilon_{\text{app}}(\btheta,\bgamma) := 
    \sqrt{
    \EE_{s \sim \mu_{\btheta}} \big( f_s^{\top} v^{*}(\theta,\bgamma) - V^{\pi_{\btheta},\bgamma}(s) \big)^2
    }.
\end{align*} 

\begin{assumption}\label{epsilon_bound}
    \begin{align*}
     \forall \btheta ,\forall \bgamma,  \mbox{ } \epsilon_{\text{app}}(\btheta ,\bgamma) \le \epsilon_{\text{app}},
\end{align*}
where  $\epsilon_{\text{app}}\geq0$ is some constant.
\end{assumption}

Assumption \ref{epsilon_bound} is useful in finding upper bounds of some of the error terms.

\begin{assumption}[Uniform ergodicity] \label{assum:ergodicity}
     For a given  $\btheta$,  we consider the policy $\pi_{\btheta}(\cdot|s)$ and the transition probability measure $p(s,\cdot,a)$ that induce a stationary distribution $\mu_{\btheta}(\cdot)$. There exists $b > 0$ and $k \in (0,1)$ for the Markov chain where $a_t \sim \pi_{\btheta}(\cdot | s_t), s_{t+1} \sim p(s_t,\cdot,a_t)$ such that
    \begin{align*}
        d_{TV}\big(p^\tau(x,y,\cdot), \mu_{\btheta}(y)\big) \le b k^{\tau}, \forall \tau \ge 0, \forall x,y \in \cS.
    \end{align*}
\end{assumption}
Assumption \ref{assum:ergodicity} is needed to tackle the issue of Markov sampling in TD learning. It has been used in analyses of TD learning, for instance, in \citet{bhandari}. Refer to \cite{meyntweedie} for various results related to uniform ergodicity as well as other notions of ergodicity of Markov chains.

\begin{assumption} \label{assum:policy-lipschitz-bounded}
 There exist constants $L,D$, $M_{m}$ such that $\forall \btheta_1,\btheta_2 ,\btheta \in \RR^d$, we have
\begin{enumerate}
\item[(a)] $\big\|\nabla \log \pi_{\btheta}(a|i) \big\| \le D$, $\forall i,\forall a$,
\item[(b)] $\big\|\nabla \log \pi_{\btheta_1}(a|i) - \nabla \log \pi_{\btheta_2}(a|i) \big\| \le M_{m} \Vert\btheta_1 - \btheta_2\Vert$, $\forall i,\forall a$, 
\item[(c)] There exist scalars $\check{K}, \hat{K}>0$ such that for any $x\not=0$ and all $s_n,a_n$,
\[
\check{K}\| x\|^2 \leq x^T \Psi_{s_na_n}\Psi_{s_na_n}^T x \leq \hat{K}\|x\|^2.
\]
\end{enumerate}
\end{assumption}
\begin{remark}
\label{rem1}
As a consequence of Assumption~\ref{assum:policy-lipschitz-bounded}(a), it follows that
$\big|\pi_{\btheta_1}(a|i) - \pi_{\btheta_2}(a|i) \big| \le L \Vert \btheta_1 - \btheta_2\Vert$, $\forall i,\forall a$. In other words, the policy for any given $(i,a)$ tuple is Lipschitz continuous in $\theta$.
\end{remark}

 Assumption \ref{assum:policy-lipschitz-bounded} provides smoothness of the parameterized policies and can be seen to be verified by many policies. This assumption is useful for finding upper bounds for some of the error terms while proving the convergence of actor and critic recursions. 

\begin{proposition}\label{bound_G}
    The updates $G(t)$ satisfy $\sup\limits_{t}\Vert G(t) \Vert < \infty$ and $\sup\limits_{t}\Vert G(t)^{-1} \Vert$ $< \infty$, respectively.
\end{proposition}

\noindent {\em Proof:} 
Since $a(n)\rightarrow 0$ as $n\rightarrow\infty$, $\exists N_0\geq 1$ such that for all $n\geq N_0$, $G(n+1)$ is a convex combination of $G(n)$ and $\Psi_{s_na_n}\Psi^T_{s_na_n}$, with $G_0 = pI$, $p>0$, see step 12 of 
 Algorithm \ref{subsec:alg2}. Without loss of generality, assume that $a(n)\leq 1$, $\forall n$. Thus, observe that for $n=0$,
 \[
 (1-a(0))p\|x\|^2 + a(0) \check{K}\|x\|^2 \leq
 x^T G(1) x^T\]
 \[\leq (1-a(0))p\|x\|^2 + a(0) \hat{K}\|x\|^2. \]
Letting $\check{M} = \min(p,\check{K})$ and $\hat{M} = \max(p,\hat{K})$, it can be verified from induction 
 that
\[
\check{M}\|x\|^2 \leq x^T G(n) x \leq \hat{M} \|x\|^2,
\]
uniformly over $n\geq 0$. The claim now follows from
arguments on page 35 of \cite{bertn}.

\begin{proposition} \label{prop:optimal-lipschitz1}
     There exists a constant $L_{1}>0$    such that $\forall \bgamma \in \RR^N $ with $\bgamma = (\bgamma_1,\ldots,\bgamma_N )^T$ and $0 \leq \bgamma_j \leq M$ for $j=1,2,\ldots,N$,
\begin{align*}
    \big \|
    v^{*}(\btheta_{1},\bgamma) - v^{*}(\btheta_{2},\bgamma)
    \big\|
    \le L_{1} \Vert\btheta_{1} - \btheta_{2}\Vert, \forall \btheta_{1}, \btheta_{2}\in\RR^d.
\end{align*}
\end{proposition}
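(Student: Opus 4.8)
The plan is to exploit the fixed-point characterization $\Ab(\btheta) v^*(\btheta,\bgamma) + \bbb(\btheta,\bgamma) = \mathbf{0}$ together with Assumption~\ref{assum:negative-definite}, which guarantees that $\Ab(\btheta)$ is uniformly negative definite and hence invertible with $\|\Ab(\btheta)^{-1}\| \le 1/\lambda_e$ for all $\btheta$. Writing $v^*(\btheta,\bgamma) = -\Ab(\btheta)^{-1}\bbb(\btheta,\bgamma)$, the difference $v^*(\btheta_1,\bgamma) - v^*(\btheta_2,\bgamma)$ can be split into two contributions: one coming from the variation of $\Ab^{-1}$ between $\btheta_1$ and $\btheta_2$, and one from the variation of $\bbb$. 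The standard algebraic identity $\Ab_1^{-1} - \Ab_2^{-1} = \Ab_1^{-1}(\Ab_2 - \Ab_1)\Ab_2^{-1}$ reduces everything to Lipschitz bounds on $\btheta \mapsto \Ab(\btheta)$ and $\btheta \mapsto \bbb(\btheta,\bgamma)$, plus the uniform bounds $\|\Ab(\btheta)^{-1}\|\le 1/\lambda_e$ and $\|\bbb(\btheta,\bgamma)\| \le B$ for some constant $B$ (the latter follows since the features are bounded by Assumption~\ref{assum:bounded_feature_norm}, the single-stage costs by $U_c$, $L(\btheta,\bgamma)$ is bounded uniformly over the compact parameter ranges, and $\bgamma$ is bounded by $M$).

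The key remaining step is therefore to establish that both $\Ab(\cdot)$ and $\bbb(\cdot,\bgamma)$ are Lipschitz in $\btheta$. Each of these is an expectation of a bounded quantity against the occupation measure $\mu_{\btheta}(s)\pi_{\btheta}(a|s)p(s,\cdot,a)$; the only $\btheta$-dependence sits in $\mu_{\btheta}$, in $\pi_{\btheta}$, and (for $\bbb$) in $L(\btheta,\bgamma)$. I would first invoke Assumption~\ref{assum:policy-lipschitz-bounded}(b) to get that $\pi_{\btheta}(a|s)$ is $L$-Lipschitz in $\btheta$, and then show that $\btheta \mapsto \mu_{\btheta}$ is Lipschitz in total variation. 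The latter is a well-known consequence of the uniform ergodicity in Assumption~\ref{assum:ergodicity} combined with the Lipschitz continuity of the transition kernel induced by $\pi_{\btheta}$: one writes $\mu_{\btheta_1} - \mu_{\btheta_2}$ via a telescoping/perturbation argument on the respective Markov kernels $P_{\btheta_1}, P_{\btheta_2}$, using the geometric mixing $\|\nu P_{\btheta}^\tau - \mu_{\btheta}\|_{TV} \le b k^{\tau}$ to sum the resulting geometric series, yielding $\|\mu_{\btheta_1} - \mu_{\btheta_2}\|_{TV} \le L_{\mu}\|\btheta_1 - \btheta_2\|$ for an explicit $L_{\mu}$ depending on $b$, $k$, $L$, $|A|$. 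Finally, $L(\btheta,\bgamma)$ is itself Lipschitz in $\btheta$ since it is an average of bounded costs against this occupation measure. Assembling the pieces, $\|\Ab(\btheta_1) - \Ab(\btheta_2)\| \le L_A\|\btheta_1 - \btheta_2\|$ and $\|\bbb(\btheta_1,\bgamma) - \bbb(\btheta_2,\bgamma)\| \le L_b\|\btheta_1-\btheta_2\|$ uniformly over $\bgamma$ with $0\le\bgamma_j\le M$.

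Putting it together:
\begin{align*}
\|v^*(\btheta_1,\bgamma) - v^*(\btheta_2,\bgamma)\|
&\le \|\Ab(\btheta_1)^{-1}\|\,\|\bbb(\btheta_1,\bgamma) - \bbb(\btheta_2,\bgamma)\|\\
&\quad + \|\Ab(\btheta_1)^{-1}\|\,\|\Ab(\btheta_2) - \Ab(\btheta_1)\|\,\|\Ab(\btheta_2)^{-1}\|\,\|\bbb(\btheta_2,\bgamma)\|\\
&\le \Big(\tfrac{L_b}{\lambda_e} + \tfrac{L_A B}{\lambda_e^2}\Big)\|\btheta_1 - \btheta_2\|,
\end{align*}
so the claim holds with $L_1 := L_b/\lambda_e + L_A B/\lambda_e^2$, which is independent of $\bgamma$ as required.

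I expect the main obstacle to be the Lipschitz-in-total-variation bound for $\btheta\mapsto\mu_{\btheta}$: getting a clean, $\bgamma$-uniform constant requires carefully combining the perturbation estimate on the kernels with the geometric mixing rate, and one must be a little careful that the constant depends only on the ergodicity parameters $b,k$, the policy-smoothness constant $L$, and the (finite) cardinality of $S$ and $A$ — not on $\btheta$ or $\bgamma$. Everything downstream of that (the bounded-ness of $\bbb$, the resolvent identity, the final assembly) is routine.
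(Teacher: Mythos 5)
Your proposal is correct and follows essentially the same route as the paper: both write $v^*(\btheta,\bgamma) = -\Ab_{\btheta}^{-1}\bbb_{\btheta,\bgamma}$, split the difference via the resolvent identity, and combine $\Vert\Ab_{\btheta}^{-1}\Vert \le \lambda_e^{-1}$, boundedness of $\bbb$, and Lipschitz continuity of $\Ab_{\btheta}$ and $\bbb_{\btheta,\bgamma}$ in $\btheta$ (the paper imports these Lipschitz bounds from Section B.2 of \cite{fta_2_timescale}, whereas you sketch their proof via the total-variation Lipschitzness of $\mu_{\btheta}$ under Assumptions \ref{assum:ergodicity} and \ref{assum:policy-lipschitz-bounded}(b)).
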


\textit{Proof Sketch}

Let
\begin{align*}
    \Ab_{\btheta} &:= \EE_{s_n,a_n,s_{n+1}} \big[ f_{s_{n}} \big( f_{s_{n+1}} - f_{s_{n}}\big)^{\top} \big], \\
    \bbb_{\btheta,\bgamma} &:= \EE_{s_n,a_n,s_{n+1}} [(C(s_n,a_n,\gamma)- L(\btheta,\gamma))f_{s_{n}} ],
\end{align*}
where $s_n \sim \mu_{\btheta}(\cdot), a_n \sim \pi_{\btheta}(\cdot | s), s_{n+1}\sim p(s_n, \cdot, a_n)$ .

We have,
\begin{align*}
    \Ab_{\theta} v^{*}(\theta,\gamma) + \bbb_{\theta,\gamma}
    & = 
    \mathbf{0}.
\end{align*}

Thus, 
\begin{align*}
    &\big \|v^{*}(\theta_{1},\bgamma) - v^{*}(\theta_{2},\bgamma)
    \big\| \\
    =&
    \Vert \Ab_{\theta_1}^{-1}\bbb_{\theta_1,\gamma} - \Ab_{\theta_2}^{-1}\bbb_{\theta_2,\gamma}   \Vert \\
    \leq & 
    \Vert \Ab_{\theta_1}^{-1}\bbb_{\theta_1,\gamma} - \Ab_{\theta_2}^{-1}\bbb_{\theta_1,\gamma}   \Vert  + \Vert \Ab_{\theta_2}^{-1}\bbb_{\theta_1,\gamma} - \Ab_{\theta_2}^{-1}\bbb_{\theta_2,\gamma}   \Vert\\
     \leq & 
     \Vert \Ab_{\theta_1}^{-1} - \Ab_{\theta_2}^{-1} \Vert\Vert \bbb_{\theta_1,\gamma} \Vert + \Vert \Ab_{\theta_2}^{-1} \Vert\Vert \bbb_{\theta_1,\gamma} - \bbb_{\theta_2,\gamma} \Vert.
\end{align*}

It can be shown that
\begin{align*}
    \Vert \bbb_{\theta_1,\gamma} \Vert &\leq 2U_r,\\
    \Vert \Ab_{\theta_2}^{-1} \Vert &\leq \lambda_{e}^{-1},\\
    \Ab_{\theta_1}^{-1} - \Ab_{\theta_2}^{-1} &= \Ab_{\theta_1}^{-1}(\Ab_{\theta_2} - \Ab_{\theta_1})\Ab_{\theta_2}^{-1}.
\end{align*}

We have from section B.2 of \cite{fta_2_timescale} the following:
\begin{align*}
    \Vert \Ab_{\theta_1} - \Ab_{\theta_2} \Vert 
    &\le 
    4|A|L \bigg(1 + \lceil \log_{k}b^{-1} \rceil + \frac{1}{1-k} \bigg) \norm{\btheta_1 - \btheta_2},\\
    \Vert \bbb_{\theta_1,\gamma} - \bbb_{\theta_2,\gamma} \Vert &\leq 6|A|U_rL \bigg(1 + \lceil \log_{k}b^{-1} \rceil + \frac{1}{1-k} \bigg) \norm{\btheta_1 - \btheta_2}.
\end{align*}

After combining all the terms, we have
\begin{align*}
    \big \|
    v^{*}(\btheta_{1},\bgamma) - v^{*}(\btheta_{2},\bgamma)
    \big\|
    \le L_{1} \norm{\btheta_{1} - \btheta_{2}}, \forall \btheta_{1}, \btheta_{2}\in\RR^d, 
\end{align*}
where $L_1 = (8\lambda_{e}^{-2} + 6\lambda_{e}^{-1})U_{r}|A|L \bigg(1 + \lceil \log_{k}b^{-1} \rceil + \frac{1}{1-k} \bigg) $.

\begin{proposition}\label{prop:optimal-lipschitz2}
 Let $\bgamma^{1} = (\bgamma^1_1,\ldots,\bgamma^1_N )^T$ and $\bgamma^{2} = (\bgamma^2_1,\ldots,\bgamma^2_N )^T$ be any two vectors in $\RR^N$ with $0 \leq \bgamma^i_j \leq M$ for $i=1,2$ and $j=1,2,\ldots,N$. There exists a constant $L_{2}>0$ such that
\begin{align*}
    \big \|
    v^{*}(\btheta,\bgamma^1) - v^{*}(\btheta,\bgamma^2)
    \big\|
    \le L_{2}\vert \bgamma_p^1 - \bgamma_p^2\vert , \forall \btheta \in\RR^d,
\end{align*}
where $\vert \bgamma_p^1 - \bgamma_p^2\vert = \max\limits_{i = 1,2,\ldots,N}\vert \bgamma_i^1 - \bgamma_i^2\vert$. 
\end{proposition}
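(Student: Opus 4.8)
The plan is to read the result off from the linear fixed-point characterization of the critic limit. Recall from Section~\ref{assumandprop} that $v^{*}(\btheta,\bgamma)$ is the unique solution of $\Ab v^{*}(\btheta,\bgamma) + \bbb = \mathbf{0}$, and observe that $\Ab = \EE_{s_n,a_n,s_{n+1}}[f_{s_n}(f_{s_{n+1}} - f_{s_n})^{\top}]$ depends on $\btheta$ alone, whereas the dependence on $\bgamma$ is confined entirely to $\bbb = \EE_{s_n,a_n,s_{n+1}}[(C(s_n,a_n,\bgamma) - L(\btheta,\bgamma))f_{s_n}]$. Hence, for a fixed $\btheta$, $v^{*}(\btheta,\bgamma) = -\Ab^{-1}\bbb(\btheta,\bgamma)$ and
\begin{align*}
    v^{*}(\btheta,\bgamma^1) - v^{*}(\btheta,\bgamma^2) = -\Ab^{-1}\big(\bbb(\btheta,\bgamma^1) - \bbb(\btheta,\bgamma^2)\big),
\end{align*}
so it is enough to bound $\norm{\Ab^{-1}}$ and $\norm{\bbb(\btheta,\bgamma^1) - \bbb(\btheta,\bgamma^2)}$, each uniformly in $\btheta$.

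For the first factor I would invoke Assumption~\ref{assum:negative-definite}: for any $u$, writing $z = \Ab u$, negative-definiteness gives $\lambda_e \norm{u}^2 \le -u^{\top}\Ab u = -u^{\top}z \le \norm{u}\,\norm{z}$, whence $\norm{\Ab^{-1}} \le 1/\lambda_e$ with $\lambda_e$ independent of $\btheta$ (this also furnishes invertibility of $\Ab$). For the second factor I would use that $C(s,a,\bgamma) = d(s,a) + \sum_{k=1}^{N}\gamma_k(h_k(s,a) - \alpha_k)$ and $L(\btheta,\bgamma) = J(\btheta) + \sum_{k=1}^{N}\gamma_k(G_k(\btheta) - \alpha_k)$ are both affine in $\bgamma$; upon taking differences the $\bgamma$-independent terms $d(s,a)$ and $J(\btheta)$ drop out, and so do the constants $\alpha_k$, leaving
\begin{align*}
    \bbb(\btheta,\bgamma^1) - \bbb(\btheta,\bgamma^2) = \EE_{s_n,a_n,s_{n+1}}\bigg[\sum_{k=1}^{N}(\bgamma^1_k - \bgamma^2_k)\big(h_k(s_n,a_n) - G_k(\btheta)\big)f_{s_n}\bigg].
\end{align*}
Since each $h_k(s,a) = \EE[h_k(n)\mid s_n=s,a_n=a]$ lies in $[0,U_c]$ and $G_k(\btheta) = \sum_{s}\mu_{\btheta}(s)\sum_{a}\pi_{\btheta}(s,a)h_k(s,a)$ is a convex combination of such values, hence also lies in $[0,U_c]$, we have $|h_k(s_n,a_n) - G_k(\btheta)| \le U_c$; combined with $\norm{f_{s_n}}\le 1$ (Assumption~\ref{assum:bounded_feature_norm}) and the triangle inequality inside the expectation, this gives
\begin{align*}
    \norm{\bbb(\btheta,\bgamma^1) - \bbb(\btheta,\bgamma^2)} \le U_c\sum_{k=1}^{N}|\bgamma^1_k - \bgamma^2_k| \le N U_c \max_{1\le k\le N}|\bgamma^1_k - \bgamma^2_k| = N U_c\,|\bgamma_p^1 - \bgamma_p^2|.
\end{align*}

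Combining the two bounds yields $\norm{v^{*}(\btheta,\bgamma^1) - v^{*}(\btheta,\bgamma^2)} \le (N U_c/\lambda_e)\,|\bgamma_p^1 - \bgamma_p^2|$, so $L_2 := N U_c/\lambda_e$ works, and the estimate is manifestly uniform over $\btheta\in\RR^d$ because $N$, $U_c$ and $\lambda_e$ are all $\btheta$-independent. There is essentially no obstacle here: the argument amounts to the observation that $\Ab$ carries no $\bgamma$-dependence together with the affine (indeed $U_c$-Lipschitz) dependence of $\bbb$ on $\bgamma$. The only points meriting minor care are that Assumption~\ref{assum:negative-definite} is taken to hold with the same $\lambda_e$ for every $\btheta$ (which is exactly what makes $\norm{\Ab^{-1}}$ uniformly bounded) and that, if $\Ab$ is not symmetric, ``maximum eigenvalue $-\lambda_e$'' is to be read through the quadratic form $u^{\top}\Ab u = u^{\top}\Ab_{\mathrm{sym}}u$, which is how it is used above.
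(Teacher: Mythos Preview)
Your proof is correct and follows essentially the same route as the paper: both use $v^{*}(\btheta,\bgamma)=-\Ab^{-1}\bbb(\btheta,\bgamma)$, observe that $\Ab$ carries no $\bgamma$-dependence, bound $\norm{\Ab^{-1}}\le\lambda_e^{-1}$, and then bound $\norm{\bbb(\btheta,\bgamma^1)-\bbb(\btheta,\bgamma^2)}$ via the affine dependence on $\bgamma$. Your constant $L_2=NU_c/\lambda_e$ is in fact slightly sharper than the paper's $L_2=2N(U_c+U_\alpha)/\lambda_e$ because you exploit the cancellation of the $\alpha_k$ terms between $C$ and $L$, which the paper's bound does not.
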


\textit{Proof Sketch}

We have,
\begin{align*}
    \big \|
    v^{*}(\theta,\bgamma^1) - v^{*}(\theta,\bgamma^2)
    \big\| &=  \Vert \Ab_{\theta}^{-1}\bbb_{\theta,\bgamma^1} - \Ab_{\theta}^{-1}\bbb_{\theta,\bgamma^2}   \Vert \\
    &\leq \Vert \Ab_{\theta}^{-1} \Vert\underbrace{\Vert \bbb_{\theta,\bgamma^1} - \bbb_{\theta,\bgamma^2}\Vert}_{I_1}.
\end{align*}
Now for the term $I_1$, note that 
\begin{align*}
    &\Vert \bbb_{\theta,\bgamma^1} - \bbb_{\theta,\bgamma^2}\Vert\\
    &= \Vert E_{\{s_n \sim \mu_{\btheta}(\cdot), a_n \sim \pi_{\btheta}(\cdot | s), s_{n+1}\sim p(s_n, \cdot, a_n)\}}[(C(s_n,a_n,\bgamma^1)\\
    &\qquad- C(s_n,a_n,\bgamma^2) +  L(\btheta,\bgamma^2) - L(\btheta,\bgamma^1))f_{s_n}]\Vert\\
    &\leq 2N(U_c + U_\alpha)\vert \bgamma_p^1 - \bgamma_p^2\vert,
\end{align*}
where, $\vert \bgamma_p^1 - \bgamma_p^2\vert = \max\limits_{i = 1,2,...,N}\vert \bgamma_i^1 - \bgamma_i^2\vert$.
Hence,
\begin{align*}
    \big \|
    v^{*}(\btheta,\bgamma^1) - v^{*}(\btheta,\bgamma^2)
    \big\| \leq L_{2}\vert \bgamma_p^1 - \bgamma_p^2\vert, \forall \bgamma_{1}, \bgamma_{2}\in\RR^N, 
\end{align*}
where, $L_2 = (2N(U_c + U_\alpha))/\lambda_e$.

Let $\tau_{t}$ denote the mixing time of our ergodic Markov chain. So we have 
\begin{align}\label{eq:def_mixing_time}
    \tau_t & := 
    \min 
    \big \{
   m \ge 0 |
    bk^{m-1} \le
    \min \{ a(t), b(t),c(t) \}
    \big \},
\end{align}
where $b,k$ are defined as in Assumption \ref{assum:ergodicity}.  

We now present the  result of non-asymptotic analysis of constrained actor-critic methods. We consider $a(t) = c_a(1+t)^{-\omega}$, $b(t) = c_{b}(1+t)^{-\sigma}$ and $c(t) = c_{c}(1+t)^{-\beta}$, where $0 < \omega< \sigma <\beta \leq1$, 
with $c_{a}$, $c_b$ and $c_{c}$  being positive constants.

\subsection{Finite-Time Convergence Results for Algorithm \ref{subsec:alg1}}
We provide here the non-asymptotic convergence results for both the actor and the critic recursions in Algorithm \ref{subsec:alg1}. We also present the convergence rate and sample complexity of the algorithm.

\subsubsection{Convergence of the actor recursion for Algorithm \ref{subsec:alg1}}

We have the  following result after carrying out the non-asymptotic analysis of the actor.

\begin{theorem} \label{thm:actor_1}
 At the $t$-th iteration we have,
\[
    \min_{0\leq m \le t} \EE \big\|\nabla_{\btheta} L(\btheta_{m},\bgamma(m))\big\|^2 
    =
    \cO(\epsilon_{\text{app}})
    +
    \cO\big(t^{ \sigma -\beta}\big)\]
\[   
    +\cO\bigg(\frac{\log^2 t}{t^{\sigma}}\bigg)
    +
     \cO\bigg(\frac{\sum_{k=\tau_t}^{t}E\Vert A_k\Vert^2}{1 + t -\tau_t} \bigg)\]
\[    
+ \cO\bigg(\frac{\sum_{k=\tau_t}^{t}E\Vert B_k\Vert^2}{1 + t -\tau_t} \bigg),
\]
where 
\begin{align}\label{eq:def_average_critic1}
    A_k &= L_k - L(\btheta_k,\bgamma(k)),\\
    B_k &= v_{k} - v^{*}(\theta_k,\gamma(k)).
\end{align}
\end{theorem}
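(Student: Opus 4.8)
## Proof Proposal for Theorem 1

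The plan is to follow the standard ascent-lemma approach for the (slow-timescale) actor recursion, but carefully tracking the errors contributed by the two faster-evolving auxiliary quantities $A_k$ (average-cost estimate error) and $B_k$ (critic error), as well as the slowest-timescale Lagrange-multiplier drift. First I would invoke smoothness of the Lagrangian $L(\btheta,\bgamma)$ in $\btheta$: using Assumption~\ref{assum:policy-lipschitz-bounded} together with Propositions~\ref{prop:optimal-lipschitz1}--\ref{prop:optimal-lipschitz2} and the boundedness of $v$ (via the projection onto $C$) and of $\bgamma$ (via $\hat\Gamma$), one establishes that $\nabla_\btheta L(\cdot,\bgamma)$ is $L_L$-Lipschitz for some constant $L_L$. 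This gives the descent/ascent inequality
\begin{align*}
L(\btheta_{m+1},\bgamma(m)) \ge L(\btheta_m,\bgamma(m)) + b(m)\langle \nabla_\btheta L(\btheta_m,\bgamma(m)), \delta_m \Psi_{s_m a_m}\rangle - \frac{L_L}{2} b(m)^2 \|\delta_m\Psi_{s_m a_m}\|^2.
\end{align*}
Since $\delta_m \Psi_{s_m a_m}$ is uniformly bounded (bounded costs $U_c$, bounded $\bgamma$, bounded $v$, bounded $\|\Psi\|\le D$), the last term is $\cO(b(m)^2)$.

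The core step is to decompose the actor update direction $\delta_m \Psi_{s_m a_m}$ and relate its conditional expectation to the true gradient $\nabla_\btheta L(\btheta_m,\bgamma(m))$ given by the policy gradient theorem~\eqref{pgt}. I would write
$$\delta_m = \big(C(s_m,a_m,\bgamma(m)) - L(\btheta_m,\bgamma(m))\big) + \big(L(\btheta_m,\bgamma(m)) - L_m\big) + v_m^\top(f_{s_{m+1}} - f_{s_m}),$$
and further split $v_m = v^*(\btheta_m,\bgamma(m)) + B_m$. Taking conditional expectation over the stationary distribution $\mu_{\btheta_m}$ would reproduce the true gradient up to the function-approximation bias $\epsilon_{\text{app}}$ (this is where the $\cO(\epsilon_{\text{app}})$ term enters, using the TD fixed-point relation $\Ab v^*+\bbb=\mathbf 0$), while the remaining pieces contribute inner products with $A_m$ and with $B_m$. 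For the Markovian-noise part — the gap between the conditional expectation under the actual chain and under $\mu_{\btheta_m}$ — I would use the mixing-time machinery: shift back by $\tau_t$ steps, use uniform ergodicity (Assumption~\ref{assum:ergodicity}) so that the bias is $\cO(bk^{\tau_t}) = \cO(\min\{a(t),b(t),c(t)\})$, and absorb the drift of $\btheta$, $\bgamma$ over a window of length $\tau_t$ using the step-size bounds and Lipschitzness. This standard argument yields the $\cO(\log^2 t / t^\sigma)$ term (the $\log^2 t$ coming from $\tau_t = \cO(\log t)$ and a sum $\sum b(k)\tau_k$-type bound), and the $\bgamma$-drift across the horizon contributes $\cO(t^{\sigma-\beta})$ since $c(t)$ decays faster than $b(t)$ (here Proposition~\ref{prop:optimal-lipschitz2} controls how $v^*$ and $L$ move with $\bgamma$).

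Finally I would telescope the ascent inequality from $m=\tau_t$ to $t$, use that $L$ is bounded (again by boundedness of all quantities), divide by $\sum_{m=\tau_t}^t b(m) = \Theta(t^{1-\sigma})$, and isolate $\min_{0\le m\le t}\EE\|\nabla_\btheta L(\btheta_m,\bgamma(m))\|^2$ on the left — using that the minimum is at most the weighted average. The cross terms $\langle \nabla_\btheta L, A_m\rangle$ and $\langle \nabla_\btheta L, B_m\rangle$ are handled by Young's inequality, $\langle x, y\rangle \le \tfrac14\|x\|^2 + \|y\|^2$, so that a $\tfrac14$-fraction of the gradient-norm average gets reabsorbed into the left side and the residuals become $\cO\big(\frac{1}{1+t-\tau_t}\sum_{k=\tau_t}^t \EE\|A_k\|^2\big)$ and the analogous $B_k$ term, exactly as stated. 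The main obstacle I anticipate is the bookkeeping for the Markovian bias term: one must simultaneously control the drift of \emph{three} coupled parameter sequences (actor, Lagrange multiplier, and the implicit dependence of $v^*$ and $L$ on both) over a mixing-time window, and verify that all the resulting cross-contributions collapse into the claimed rates without a blow-up in constants — this is delicate precisely because it is a three-timescale scheme, though the separation $\omega < \sigma < \beta$ is exactly what makes each error term land in its stated order.
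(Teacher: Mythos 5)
Your overall architecture matches the paper's proof: the smoothness lemma for $L(\cdot,\bgamma)$ in $\btheta$ giving the ascent inequality, the decomposition of $\delta_m\Psi_{s_m a_m}$ into a ``true gradient plus Markov noise'' part and an estimation-error part carrying $A_m$ and $B_m$, the auxiliary-chain/mixing-time argument for the Markovian bias (the paper's Lemma~4), the $\cO(\epsilon_{\text{app}})$ term coming from the TD fixed-point relation, and the $\cO(t^{\sigma-\beta})$ term coming from the slow Lagrange-multiplier drift. Your Young-inequality treatment of the cross terms is a legitimate alternative to the paper's Cauchy--Schwarz-plus-squaring step and changes nothing essential.

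The one genuine gap is in your final normalization. You telescope the ascent inequality and divide by $\sum_{m=\tau_t}^{t} b(m)=\Theta(t^{1-\sigma})$; under that normalization every per-iteration error enters with weight $b(m)$, so the residuals you actually obtain are the $b$-weighted averages $\sum_{k=\tau_t}^{t} b(k)\,\EE\|A_k\|^2\big/\sum_{k=\tau_t}^{t} b(k)$ (and similarly for $B_k$), not the uniform averages $\frac{1}{1+t-\tau_t}\sum_{k=\tau_t}^{t}\EE\|A_k\|^2$ claimed in the statement. These are not interchangeable: since $b(k)$ is decreasing, the crude comparison costs a factor $b(\tau_t)/b(t)\sim (t/\log t)^{\sigma}$, which is polynomial in $t$ and cannot be absorbed into the $\cO(\cdot)$; moreover Theorem~2 only bounds the uniform averages, so the weighted residuals could not be closed in Corollary~1 either. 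The paper sidesteps this by rearranging \emph{before} summing: it divides the per-iteration inequality by $b(t)$, isolates $\|\nabla L(\btheta_t,\bgamma(t))\|^2$, and then sums uniformly over $k=\tau_t,\dots,t$. The price is that the telescoping term becomes $\sum_{k}\frac{1}{b(k)}\big(L(\btheta_{k+1},\bgamma(k))-L(\btheta_k,\bgamma(k))\big)$, which is handled by Abel summation (using monotonicity of $b(\cdot)$ and boundedness of $L$) together with the $\bgamma$-drift correction, yielding a bound of the form $B_1(t-\tau_t+1)^{1-\beta+\sigma}+B_2(1+t)^{\sigma}$ and hence the $\cO(t^{\sigma-\beta})$ term after dividing by $1+t-\tau_t$. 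With this per-step normalization in place of your telescoping-then-divide step, the rest of your outline goes through and matches the paper's argument.
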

\begin{figure*}
    \centering
    \includegraphics[scale =0.2]{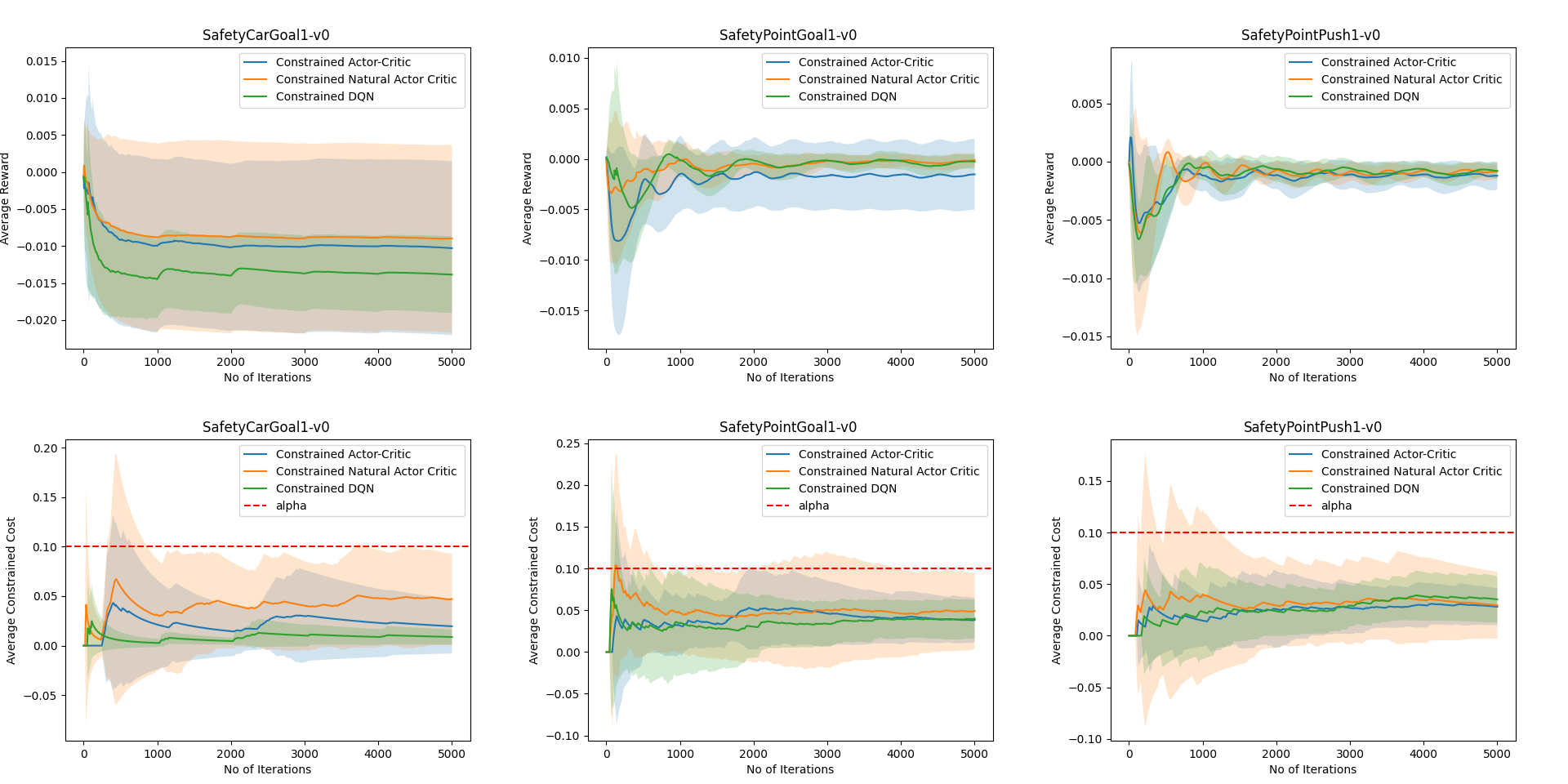}
    \caption{Comparison of C-AC , C-NAC and C-DQN: Plots in the top row are for the average reward performance while those in the bottom row are for the constraint costs for the three environments. These are plotted as functions of the number of iterations.}
    \label{fig:experiment}
\end{figure*}

\subsubsection{Convergence of the critic recursion for Algorithm \ref{subsec:alg1}}

For the critic recursion, we obtain the following result for the average estimation error. 

\begin{theorem} \label{thm:critic_1}
We have 
\begin{eqnarray} 
\nonumber
& \frac{1}{1+t-\tau_t} \sum_{k=\tau_t}^{t} \EE \norm{v_{k} - v^{*}(\theta_k,\gamma(k))}^2 \\
& = 
    \cO
    \bigg(\frac{1}{t^{1-\omega}}
    \bigg)
    +
    \cO
    \bigg(\frac{\log t}{t^{\omega}}
    \bigg)
    +
    \cO
    \bigg(\frac{1}{t^{2(\sigma - \omega)}}
    \bigg), \label{eq:critic_converge_a1}\\
    &
    \nonumber 
    \frac{1}{1+t-\tau_t} \sum_{k=\tau_t}^{t} \EE \big(L_k - L(\btheta_k,\bgamma(k))\big)^2 \\
    & = \cO
    \bigg(\frac{1}{t^{1-\omega}}
    \bigg)
    +
    \cO
    \bigg(\frac{\log t}{t^{\omega}}
    \bigg)
    +
    \cO
    \bigg(\frac{1}{t^{2(\sigma - \omega)}}
    \bigg). \label{eq:critic_converge_b1}
\end{eqnarray}
\end{theorem}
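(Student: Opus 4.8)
The plan is to treat the two recursions in order: first the scalar average-cost estimate $L_k$, whose update (step 6 of Algorithm~\ref{subsec:alg1}) does not involve $v_k$ at all, and then the critic $v_k$, which couples back to $L_k$. For each I would establish a contractive per-step drift inequality and then telescope it, following the template of \cite{fta_2_timescale} and \cite{bhandari} but carrying along the extra $\bgamma$-dependence. As a preliminary I would record uniform boundedness of the relevant iterates: $\bgamma(k)\in[0,M]^N$ by the projection $\hat\Gamma$, $\norm{v_k}\le U_v$ and $v^{*}(\btheta,\bgamma)\in C$ (so that $\Gamma$ is non-expansive toward $v^{*}$), $\norm{f_s}\le1$ and $\norm{\Psi_{sa}}\le D$ by Assumptions~\ref{assum:bounded_feature_norm} and \ref{assum:policy-lipschitz-bounded}(a), and $|q(n)|,|h_k(n)|\le U_c$; consequently the relaxed single-stage cost $C(s_k,a_k,\bgamma(k))$, the estimate $L_k$ (by a direct induction on step~6), and the TD error $\delta_k$ are all bounded by absolute constants.

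\textbf{The $L_k$ recursion.} Set $y_k=L_k-L(\btheta_k,\bgamma(k))$. From step~6, $y_{k+1}=(1-a(k))y_k+a(k)\big(C(s_k,a_k,\bgamma(k))-L(\btheta_k,\bgamma(k))\big)-\big(L(\btheta_{k+1},\bgamma(k+1))-L(\btheta_k,\bgamma(k))\big)$. The last bracket is the drift of the target: by Lipschitz continuity of $L(\cdot,\cdot)$ in $\btheta$ and in $\bgamma$ together with $\norm{\btheta_{k+1}-\btheta_k}=\cO(b(k))$ and $\norm{\bgamma(k+1)-\bgamma(k)}=\cO(c(k))=\cO(b(k))$ (using $\sigma<\beta$), it is $\cO(b(k))$. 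Squaring, using $(1-a(k))^2\le1-a(k)$ for $a(k)\le1$, and conditioning on the $\sigma$-field $\tau_t$ steps in the past, I would bound the cross term $\EE[y_k(C(s_k,a_k,\bgamma(k))-L(\btheta_k,\bgamma(k)))]$ by a mixing bias of order $a(k)$ (plus lower-order terms) via Assumption~\ref{assum:ergodicity} and the fact that each iterate moves by at most $\cO(\tau_t a(k))$ over a window of length $\tau_t=\cO(\log t)$. This yields $\EE y_{k+1}^2\le(1-a(k))\EE y_k^2+\cO(a(k)^2)+\cO(b(k)^2)+\cO(\tau_t a(k)^2)$; telescoping $\sum_{k=\tau_t}^{t}$, dividing by $\sum_{k=\tau_t}^{t}a(k)=\Theta(t^{1-\omega})$, and evaluating the standard sums ($\sum a(k)^2$, $\tau_t\sum a(k)^2$, and $\sum b(k)^2/\!\sum a(k)=\cO(t^{-2(\sigma-\omega)})$) gives exactly \eqref{eq:critic_converge_b1}.

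\textbf{The $v_k$ recursion.} Set $z_k=v_k-v^{*}(\btheta_k,\bgamma(k))$. By non-expansiveness of $\Gamma$, $\norm{z_{k+1}}^2\le\norm{z_k+a(k)\delta_k f_{s_k}-\Delta_k^{*}}^2$ with $\Delta_k^{*}:=v^{*}(\btheta_{k+1},\bgamma(k+1))-v^{*}(\btheta_k,\bgamma(k))$, which by Propositions~\ref{prop:optimal-lipschitz1} and \ref{prop:optimal-lipschitz2} satisfies $\norm{\Delta_k^{*}}\le L_1\norm{\btheta_{k+1}-\btheta_k}+L_2\norm{\bgamma(k+1)-\bgamma(k)}_\infty=\cO(b(k))$. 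Expanding the square, the central term is $2a(k)\langle z_k,\delta_k f_{s_k}\rangle$; replacing $\delta_k f_{s_k}$ by its stationary mean at $(v_k,L(\btheta_k,\bgamma(k)),\btheta_k,\bgamma(k))$, which equals $\Ab v_k+\bbb=\Ab z_k$ since $\Ab v^{*}+\bbb=\mathbf{0}$, and using $z_k^\top\Ab z_k\le-\lambda_e\norm{z_k}^2$ from Assumption~\ref{assum:negative-definite}, produces the contraction. The discrepancies are: (i) the $L$-error, a term proportional to $\langle z_k,(L(\btheta_k,\bgamma(k))-L_k)f_{s_k}\rangle$, split by Young's inequality into $\tfrac{\lambda_e}{4}\norm{z_k}^2+\cO(y_k^2)$; (ii) the Markovian noise $\delta_k f_{s_k}-\EE_{\mathrm{stat}}[\delta_k f_{s_k}\mid\cdot]$, handled exactly as in the $L_k$ analysis with mixing time $\tau_t$, the iterate drift over $\tau_t$ steps being $\cO(\tau_t a(k))$; (iii) the drift cross term $-2\langle z_k,\Delta_k^{*}\rangle\le\tfrac{\lambda_e}{4}a(k)\norm{z_k}^2+\cO(b(k)^2/a(k))$ plus $\norm{\Delta_k^{*}}^2=\cO(b(k)^2)$; (iv) the square $a(k)^2\norm{\delta_k f_{s_k}}^2=\cO(a(k)^2)$. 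Collecting, $\EE\norm{z_{k+1}}^2\le(1-\tfrac{\lambda_e}{2}a(k))\EE\norm{z_k}^2+\cO(a(k)^2)+\cO(\tau_t a(k)^2)+\cO(b(k)^2/a(k))+\cO(a(k)\,\EE y_k^2)$; telescoping from $\tau_t$ to $t$, dividing by $\sum_{k=\tau_t}^{t}\tfrac{\lambda_e}{2}a(k)=\Theta(t^{1-\omega})$, and invoking the already-proved bound \eqref{eq:critic_converge_b1} for $\tfrac{1}{1+t-\tau_t}\sum_{k=\tau_t}^{t}\EE y_k^2$, gives \eqref{eq:critic_converge_a1}: the telescoped initial term and the $a(k)^2$, $\tau_t a(k)^2$ sums contribute $\cO(1/t^{1-\omega})+\cO(\log t/t^{\omega})$, and $\sum b(k)^2/a(k)$ over $\sum a(k)$ contributes $\cO(t^{-2(\sigma-\omega)})$.

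\textbf{Main obstacle.} The delicate part is the Markovian-noise control: one must condition on the $\sigma$-field $\tau_t$ steps earlier, bound the resulting bias using the geometric mixing of Assumption~\ref{assum:ergodicity}, while simultaneously tracking how far $v_k,L_k,\btheta_k,\bgamma(k)$ (hence $v^{*}(\btheta_k,\bgamma(k))$ and the ``mean'' TD direction $\Ab z_k$) have moved over that window --- each changes by $\cO(\tau_t\max\{a(k),b(k),c(k)\})=\cO(\tau_t a(k))$ --- and then insert $\tau_t=\cO(\log t)$ from \eqref{eq:def_mixing_time}. Keeping every $\cO(\cdot)$ contribution at or below the three advertised rates, and choosing the Young's-inequality constants in (i) and (iii) so that a genuinely negative $-\Theta(a(k))\norm{z_k}^2$ drift remains, is the bookkeeping-heavy core; the rest is the standard telescoping of a contractive stochastic recursion.
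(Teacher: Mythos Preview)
Your plan is sound and tracks the paper's strategy: establish \eqref{eq:critic_converge_b1} first via a contractive recursion for $y_k=L_k-L(\btheta_k,\bgamma(k))$, then \eqref{eq:critic_converge_a1} for $z_k=v_k-v^{*}(\btheta_k,\bgamma(k))$ using the $y_k$ bound, with the Markovian noise controlled by looking $\tau_t$ steps into the past (the paper packages the same idea as an auxiliary-chain decomposition). The one substantive technical difference is how the cross terms are discharged. The paper does \emph{not} reduce to a clean per-step contraction: it divides through by $a(k)$, keeps the drift cross term $y_k(L_k^{*}-L_{k+1}^{*})/a(k)$ and (for the critic) the $|y_k|\,\|m_k\|$ coupling, applies Cauchy--Schwarz across $k$ to obtain products of the form $(\sum\EE y_k^2)^{1/2}(\sum b(k)^2/a(k)^2)^{1/2}$, and then resolves the resulting implicit inequality $F\le A+B\sqrt{F}$ by a ``squaring technique''. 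You instead absorb the cross terms pointwise via Young's inequality to get a genuine per-step drift that telescopes directly. Your route is a legitimate and arguably more elementary alternative that lands on the same three rates.

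There is, however, one slip in your $L_k$ step. After writing $y_{k+1}=(1-a(k))y_k+a(k)\big(C(s_k,a_k,\bgamma(k))-L(\btheta_k,\bgamma(k))\big)-\Delta L_k$ with $\Delta L_k=L(\btheta_{k+1},\bgamma(k+1))-L(\btheta_k,\bgamma(k))=\cO(b(k))$ and squaring, you handle the \emph{noise} cross term but silently drop the \emph{drift} cross term $-2(1-a(k))y_k\,\Delta L_k$. This term is of order $|y_k|\,b(k)$; bounding it by ``$\cO(b(k)^2)$'' is incompatible with retaining the $(1-a(k))$ contraction, and using mere boundedness of $y_k$ gives only $\cO(b(k))$, which after telescoping yields $\cO(t^{\omega-\sigma})$---strictly worse than the advertised $\cO(t^{-2(\sigma-\omega)})$. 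The repair is exactly what you already do for $v_k$ in your item~(iii): apply Young with weight $a(k)$, namely $2|y_k\,\Delta L_k|\le\tfrac{a(k)}{2}y_k^2+\cO(b(k)^2/a(k))$, so the per-step residual is $\cO(b(k)^2/a(k))$ rather than $\cO(b(k)^2)$; then $\big(\sum_k b(k)^2/a(k)\big)\big/\sum_k a(k)=\cO(t^{1+\omega-2\sigma})/\cO(t^{1-\omega})=\cO(t^{-2(\sigma-\omega)})$, and \eqref{eq:critic_converge_b1} follows. With that one correction your argument goes through.
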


\subsubsection{Convergence rate and sample complexity for Algorithm \ref{subsec:alg1}}\label{sec:theory_complexity1}
We finally provide the convergence rate of the algorithm and characterize the sample complexity of the same in Corollary \ref{col:sample-complexity_1}. 
 
\begin{corollary} \label{col:sample-complexity_1}
We have,
    \[\min_{0 \le k \le t} \EE \norm{\nabla_{\btheta} L(\btheta_{k},\bgamma(k))}^2 
    =
    \cO(\epsilon_{\text{app}})
    +
    \cO\bigg(\frac{1}{t^{\beta- \sigma}}\bigg)\]
\[    +
    \cO
    \bigg(\frac{\log^2 t}{t^{\omega}}
    \bigg)
    +
    \cO
    \bigg(\frac{1}{t^{2(\sigma - \omega)}}
    \bigg).
    \]
If we set $\omega = 0.4, \sigma = 0.6 ,\beta =1 $, 
Algorithm \ref{subsec:alg1} needs $T =\tilde{\cO}(\epsilon^{-2.5})$ steps to  obtain the following:
\begin{align*}
    \min_{0 \le k \le T} \EE \big\|\nabla_{\btheta} L(\btheta_{k},\gamma(k))\big\|^2
    & \le 
    \cO(\epsilon_{\text{app}})
    +
    \epsilon.
\end{align*}
\end{corollary}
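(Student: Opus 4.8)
The plan is to derive Corollary~\ref{col:sample-complexity_1} directly from Theorems~\ref{thm:actor_1} and~\ref{thm:critic_1}, so this is essentially a bookkeeping argument with no new analysis. The first step is to recognise that the last two error terms in the bound of Theorem~\ref{thm:actor_1}, namely $\cO\big(\tfrac{\sum_{k=\tau_t}^{t}\EE\Vert A_k\Vert^2}{1+t-\tau_t}\big)$ and $\cO\big(\tfrac{\sum_{k=\tau_t}^{t}\EE\Vert B_k\Vert^2}{1+t-\tau_t}\big)$, are exactly the time-averaged critic errors bounded in Theorem~\ref{thm:critic_1}, using the identifications $A_k=L_k-L(\btheta_k,\bgamma(k))$ and $B_k=v_k-v^{*}(\theta_k,\gamma(k))$ from~\eqref{eq:def_average_critic1}. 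Since the two bounds in Theorem~\ref{thm:critic_1} carry identical rates, substituting them into Theorem~\ref{thm:actor_1} yields
\begin{align*}
\min_{0\le k\le t}\EE\big\|\nabla_{\btheta}L(\btheta_k,\bgamma(k))\big\|^2
&=\cO(\epsilon_{\text{app}})+\cO\big(t^{\sigma-\beta}\big)\\
&\quad+\cO\Big(\tfrac{\log^2 t}{t^{\sigma}}\Big)+\cO\Big(\tfrac{1}{t^{1-\omega}}\Big)\\
&\quad+\cO\Big(\tfrac{\log t}{t^{\omega}}\Big)+\cO\Big(\tfrac{1}{t^{2(\sigma-\omega)}}\Big).
\end{align*}

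The second step is to drop dominated terms using $0<\omega<\sigma<\beta\le1$. Since $\omega<\sigma$, we have $\cO(\log^2 t/t^{\sigma})=\cO(\log^2 t/t^{\omega})$; since $\log t\le\log^2 t$ for $t\ge3$, we have $\cO(\log t/t^{\omega})=\cO(\log^2 t/t^{\omega})$; since $\beta\le1$ and $\omega<\sigma$, we have $\beta-\sigma\le1-\sigma<1-\omega$, hence $\cO(1/t^{1-\omega})=\cO(1/t^{\beta-\sigma})$; and $t^{\sigma-\beta}=1/t^{\beta-\sigma}$. These reductions collapse the bound to the four terms $\cO(\epsilon_{\text{app}})+\cO(1/t^{\beta-\sigma})+\cO(\log^2 t/t^{\omega})+\cO(1/t^{2(\sigma-\omega)})$ stated in the corollary.

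The third step is the sample-complexity claim: substituting $\omega=0.4$, $\sigma=0.6$, $\beta=1$ makes $\beta-\sigma=\omega=2(\sigma-\omega)=0.4$, so the three polynomial rates coincide and the bound becomes $\cO(\epsilon_{\text{app}})+\tilde{\cO}(t^{-0.4})$, with $\tilde{\cO}$ absorbing the $\log^2 t$ factor as per the Notation paragraph. Requiring $\tilde{\cO}(t^{-0.4})\le\epsilon$ gives $t\ge\tilde{\cO}(\epsilon^{-1/0.4})=\tilde{\cO}(\epsilon^{-2.5})$, so $T=\tilde{\cO}(\epsilon^{-2.5})$ iterations (hence samples, one per iteration) suffice to ensure $\min_{0\le k\le T}\EE\|\nabla_{\btheta}L(\btheta_k,\gamma(k))\|^2\le\cO(\epsilon_{\text{app}})+\epsilon$.

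I expect no genuine obstacle, since the result only recombines two already-proved theorems; the points requiring care are (i) confirming that the single mixing-time sequence $\tau_t$ of~\eqref{eq:def_mixing_time} is what appears in both theorems and that $\tau_t=\cO(\log t)$ for the chosen polynomial step sizes, so that $1+t-\tau_t=\Theta(t)$ and the normalised sums in Theorem~\ref{thm:critic_1} genuinely decay at the advertised polynomial rates, and (ii) verifying that each discarded term is $\cO(\cdot)$ of a retained term for \emph{every} admissible triple $(\omega,\sigma,\beta)$ and not merely for $(0.4,0.6,1)$ --- which is exactly what the inequality $\beta-\sigma<1-\omega$ above secures.
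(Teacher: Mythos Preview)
Your proposal is correct and follows essentially the same route as the paper's own proof: substitute the critic bounds of Theorem~\ref{thm:critic_1} into the actor bound of Theorem~\ref{thm:actor_1}, absorb the dominated terms using $0<\omega<\sigma<\beta\le 1$ (in particular $\beta-\sigma<1-\omega$ and $\omega<\sigma$), and then specialise to $(\omega,\sigma,\beta)=(0.4,0.6,1)$ to read off the $\tilde{\cO}(\epsilon^{-2.5})$ complexity. Your write-up is, if anything, slightly more explicit than the paper's in justifying each absorption step.
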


\begin{remark}
\label{remark1}
In Corollary \ref{col:sample-complexity_1}, the results of Theorems \ref{thm:actor_1} and \ref{thm:critic_1} are combined which results in the convergence rate of Algorithm \ref{subsec:alg1} as $\tilde{\cO}(t^{-0.4})$.  The sample complexity of the constrained actor-critic algorithm is $\tilde{\cO}(\epsilon^{-2.5})$ as we have exactly one sample per iteration.
\end{remark}

\subsection{Finite-Time Convergence Results for Algorithm \ref{subsec:alg2}}

As with Algorithm \ref{subsec:alg1}, we provide here the non-asymptotic convergence results for both the actor and the critic recursions in Algorithm \ref{subsec:alg2}. Further, we present the convergence rate and sample complexity of the algorithm. 

\subsubsection{Convergence of the Actor for Algorithm \ref{subsec:alg2}}
We have the  following result after carrying out non-asymptotic analysis of the actor.

\begin{theorem} \label{thm:actor_2}
 At the $t$th iteration, we have,
\begin{align*}
    &\min_{0\leq m \le t} \EE \big\|\nabla_{\btheta} L(\btheta_{m},\bgamma(m))\big\|^2 \\
    &\qquad=
    \cO(\epsilon_{\text{app}})
    +
    \cO\big(t^{\sigma -\beta}\big)\\
    &\qquad+
    \cO\bigg(\frac{\log^2 t}{t^{\omega}}\bigg)
    +
     \cO\bigg(\frac{\sum_{k=\tau_t}^{t}E\Vert A_k\Vert^2}{1 + t -\tau_t} \bigg) \\
     &\qquad+ \cO\bigg(\frac{\sum_{k=\tau_t}^{t}E\Vert B_k\Vert^2}{1 + t -\tau_t} \bigg),
\end{align*}
where 
\begin{align}\label{eq:def_average_critic2}
    A_k &= L_k - L(\btheta_k,\bgamma(k)),\\
    B_k &= v_{k} - v^{*}(\btheta_k,\bgamma(k)).
\end{align}
\end{theorem}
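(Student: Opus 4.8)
The plan is to follow the finite-time template used for Theorem \ref{thm:actor_1}, treating $L(\btheta,\bgamma)$ along the iterates as a Lyapunov function while handling the extra factor $G(n)^{-1}$ in the natural-gradient update $\btheta_{n+1} = \btheta_n + b(n)\delta_n G(n)^{-1}\Psi_{s_na_n}$. As preliminaries I would record that (i) $L(\cdot,\bgamma)$ has a Lipschitz gradient in $\btheta$, uniformly over $\bgamma\in[0,M]^N$, with some constant $L_L$, by Assumption \ref{assum:policy-lipschitz-bounded}, boundedness of the costs by $U_{c}$, and the perturbation bounds on $\mu_{\btheta}$ that also underlie Propositions \ref{prop:optimal-lipschitz1}--\ref{prop:optimal-lipschitz2}; (ii) $|L(\btheta,\bgamma_1) - L(\btheta,\bgamma_2)| = \cO(\Vert\bgamma_1 - \bgamma_2\Vert)$, since $\nabla_{\bgamma}L = (G_k(\pi_{\btheta}) - \alpha_k)_k$ is bounded; and (iii) $L$ and $\Vert\nabla_{\btheta}L\Vert$ are uniformly bounded, by boundedness of the costs and the features.

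By (i), the smoothness inequality gives
\[
L(\btheta_{n+1},\bgamma(n)) \le L(\btheta_n,\bgamma(n)) + b(n)\big\langle \nabla_{\btheta}L(\btheta_n,\bgamma(n)),\, \delta_n G(n)^{-1}\Psi_{s_na_n}\big\rangle + \tfrac{L_L}{2}b(n)^2\big\Vert \delta_n G(n)^{-1}\Psi_{s_na_n}\big\Vert^2,
\]
and by (ii) with $\Vert\bgamma(n+1) - \bgamma(n)\Vert = \cO(c(n))$ we get $|L(\btheta_{n+1},\bgamma(n+1)) - L(\btheta_{n+1},\bgamma(n))| = \cO(c(n))$. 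Summing over $n = \tau_t,\dots,t$ and rearranging bounds $\sum_{n=\tau_t}^t b(n)\big\langle \nabla_{\btheta}L(\btheta_n,\bgamma(n)), \delta_n G(n)^{-1}\Psi_{s_na_n}\big\rangle$ by the telescoped Lyapunov increment $\cO(1)$ (by (iii)), the $\bgamma$-drift $\cO(\sum_n c(n)) = \cO(t^{1-\beta})$, and $\cO(\sum_n b(n)^2)$; since $\sum_{n=\tau_t}^t b(n) \asymp t^{1-\sigma}$, dividing through already produces the $\cO(t^{\sigma-\beta})$ term.

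The crux is to lower bound $\EE\big\langle \nabla_{\btheta}L(\btheta_n,\bgamma(n)), \delta_n G(n)^{-1}\Psi_{s_na_n}\big\rangle$. I would condition on $\mathcal{F}_{n-\tau_n}$ and use uniform ergodicity (Assumption \ref{assum:ergodicity}) with the definition \eqref{eq:def_mixing_time} of $\tau_n$ to replace the law of $(s_n,a_n,s_{n+1})$ by the stationary one at total-variation error $\le\min\{a(n),b(n),c(n)\}$, absorbing via the Lipschitz bounds (Proposition \ref{prop:optimal-lipschitz1}, Assumption \ref{assum:policy-lipschitz-bounded}) the drifts $\Vert\btheta_n - \btheta_{n-\tau_n}\Vert = \cO(\tau_n b(n))$, $\Vert\bgamma(n) - \bgamma(n-\tau_n)\Vert = \cO(\tau_n c(n))$, and -- new here -- $\Vert G(n)^{-1} - G(n-\tau_n)^{-1}\Vert = \cO(\tau_n a(n))$ (the latter using Assumptions \ref{assum:policy-lipschitz-bounded}(a) and \ref{bound_G} to bound $\Vert\Psi_{s_na_n}\Vert$, $\Vert G(n)\Vert$, $\Vert G(n)^{-1}\Vert$). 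Replacing $\delta_n$ by its counterpart $\bar\delta_n$ computed with the true critic $v^*(\btheta_n,\bgamma(n))$ introduces (a) discrepancies $\cO(\Vert B_n\Vert)$ and $\cO(|A_n|)$, since $\delta_n - \bar\delta_n = (v_n - v^*(\btheta_n,\bgamma(n)))^{\top}(f_{s_{n+1}} - f_{s_n}) - (L_n - L(\btheta_n,\bgamma(n)))$, which after Young's inequality (using part of $\tfrac{\lambda}{2}b(n)\Vert\nabla_{\btheta}L\Vert^2$ to absorb the cross terms) appear as the averaged squared-error terms in the statement, and (b) the function-approximation error, since the TD limit represents the advantage only up to $\epsilon_{\text{app}}$, giving $\cO(\epsilon_{\text{app}})$ via boundedness of $\nabla_{\btheta}L$. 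What remains is the clean term $\big\langle \nabla_{\btheta}L(\btheta_n,\bgamma(n)), G(n-\tau_n)^{-1}\nabla_{\btheta}L(\btheta_n,\bgamma(n))\big\rangle \ge \lambda\Vert\nabla_{\btheta}L(\btheta_n,\bgamma(n))\Vert^2$, using the uniform lower bound $\lambda = \min_i\lambda_i > 0$ on the smallest eigenvalue of $G(\cdot)^{-1}$ recorded before Algorithm \ref{subsec:alg2}. Dividing by $\sum_{n=\tau_t}^t b(n)\asymp t^{1-\sigma}$ and using $\min_{0\le m\le t}\EE\Vert\nabla_{\btheta}L(\btheta_m,\bgamma(m))\Vert^2 \le (\sum_n b(n))^{-1}\sum_n b(n)\EE\Vert\nabla_{\btheta}L(\btheta_n,\bgamma(n))\Vert^2$ yields the stated bound; $\tau_t = \cO(\log t)$ (entering squared through the drift terms) gives the $\log^2 t$, and because the slowest step size $a(n)\sim t^{-\omega}$ controls the $G(n)^{-1}$ drift $\cO(\tau_n a(n))$, this contribution is $\cO(\log^2 t / t^{\omega})$, rather than the $\cO(\log^2 t / t^{\sigma})$ of Theorem \ref{thm:actor_1}.

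I expect the main obstacle to be exactly this coupling of $G(n)^{-1}$ with the Markovian sample $(s_n,a_n,s_{n+1})$ and the TD error $\delta_n$: the natural-gradient direction is a product of three mutually dependent, time-varying quantities, so one must control the drift of $G(n)^{-1}$ over a mixing window -- requiring a finite-time handle on the matrix recursion in step 12 of Algorithm \ref{subsec:alg2}, analogous to but more delicate than the critic analysis -- and verify that the extra errors are absorbed into the $\cO(\log^2 t/t^{\omega})$ and $\cO(\epsilon_{\text{app}})$ terms rather than degrading the leading order. The $A_k,B_k$ terms are left in the statement and are controlled separately by the critic analysis for Algorithm \ref{subsec:alg2} (analogous to Theorem \ref{thm:critic_1}), then combined as in Corollary \ref{col:sample-complexity_1}; the remainder of the argument runs in parallel with the proof of Theorem \ref{thm:actor_1}.
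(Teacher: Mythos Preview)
Your plan is essentially the paper's: apply the $L_L$-smoothness of $L(\cdot,\bgamma)$, split $\delta_n G(n)^{-1}\Psi_{s_na_n}$ into a critic-error piece, a Markov-noise piece, an approximation piece, and the clean piece $G(n)^{-1}\nabla_{\btheta} L$, lower-bound the latter by $\lambda\Vert\nabla_{\btheta} L\Vert^2$ via the uniform eigenvalue bound on $G(\cdot)^{-1}$, and control the Markov-noise piece by freezing $\btheta$, $\bgamma$ \emph{and} $G$ over a mixing window; your identification of the new $G(n)^{-1}$ drift $\cO(\tau_t\, a(\cdot))$ as the source of the $\cO(\log^2 t/t^{\omega})$ rate (replacing $\cO(\log^2 t/t^{\sigma})$ in Theorem~\ref{thm:actor_1}) is exactly what the paper does. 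Two small differences: the paper keeps $G(n)^{-1}$ (not $G(n-\tau_n)^{-1}$) in the clean term and only freezes $G$ inside the noise term, and it handles the $A_k,B_k$ cross terms by Cauchy--Schwarz followed by a quadratic ``completing the square'' step rather than Young's inequality; neither changes the rates.

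There is, however, one genuine slip that would prevent the argument from closing as written: the direction of the smoothness inequality. With the update $\btheta_{n+1}=\btheta_n+b(n)\delta_n G(n)^{-1}\Psi_{s_na_n}$, the inner product $\langle\nabla_{\btheta} L,\delta_n G(n)^{-1}\Psi_{s_na_n}\rangle$ is (in expectation, up to errors) $\langle\nabla_{\btheta} L,G(n)^{-1}\nabla_{\btheta} L\rangle\ge\lambda\Vert\nabla_{\btheta} L\Vert^2>0$, so to extract an \emph{upper} bound on $\sum_n b(n)\Vert\nabla_{\btheta} L\Vert^2$ you need an upper bound on $\sum_n b(n)\langle\cdot\rangle$. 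That comes from the \emph{lower} side of smoothness,
\[
L(\btheta_{n+1},\bgamma(n))\ \ge\ L(\btheta_n,\bgamma(n))+b(n)\big\langle\nabla_{\btheta} L(\btheta_n,\bgamma(n)),\,\delta_n G(n)^{-1}\Psi_{s_na_n}\big\rangle-\tfrac{L_L}{2}b(n)^2\big\Vert\delta_n G(n)^{-1}\Psi_{s_na_n}\big\Vert^2,
\]
which is the form used in the paper (cf.\ Lemma~\ref{Lemma:L-smoothh} and inequality~\eqref{inequality}). Your $\le$ direction only yields a \emph{lower} bound on $\sum_n b(n)\langle\cdot\rangle$, and two lower bounds on the same quantity do not combine. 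Flip the inequality and the rest of your outline goes through verbatim.
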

\begin{table*}
  \centering
  \caption{Comparision of C-AC , C-NAC and C-DQN in terms of average reward $\pm$ standard error upon convergence.}
  \begin{tabular}{|c|c|c|c|}
    \hline
  Algorithm &  SafetyPointGoal1-v0 & SafetyCarGoal1-v0 & SafetyPointPush1-v0\\ \hline 
  C-AC &  $-0.0015 \pm 0.0035$ & $-0.01 \pm 0.0117$ & $-0.0012 \pm 0.0011$\\ \hline
  C-NAC & {\bf $-0.00012 \pm 0.0006$} & {\bf $-0.009 \pm 0.0127$} & {\bf $-0.0008 \pm 0.0005$} \\ \hline
  C-DQN & $-0.0003 \pm 0.0007$ & $-0.014 \pm 0.005$ & $-0.0008 \pm 0.0007$\\ \hline
  \end{tabular}
  \label{tab:experiment_avgreward}
\end{table*}

\subsubsection{Convergence of the Critic for Algorithm \ref{subsec:alg2}}
We now provide a result analysing the average estimation error for the critic.

\begin{theorem} \label{thm:critic_2}
 We have the following:
\begin{align}
    \frac{1}{1+t-\tau_t} \sum_{k=\tau_t}^{t} \EE \norm{v_{k} - v^{*}(\btheta_k,\bgamma(k))}^2 \notag \\
    = 
    \cO
    \bigg(\frac{1}{t^{1-\omega}}
    \bigg)
    +
    \cO
    \bigg(\frac{\log t}{t^{\omega}}
    \bigg)
    +
    \cO
    \bigg(\frac{1}{t^{2(\sigma - \omega)}}
    \bigg), \label{eq:critic_converge_a2}\\
    \frac{1}{1+t-\tau_t} \sum_{k=\tau_t}^{t} \EE \big(L_k - L(\btheta_k,\bgamma(k))\big)^2 \notag\\
     = 
    \cO
    \bigg(\frac{1}{t^{1-\omega}}
    \bigg)
    +
    \cO
    \bigg(\frac{\log t}{t^{\omega}}
    \bigg)
    +
    \cO
    \bigg(\frac{1}{t^{2(\sigma - \omega)}}
    \bigg).
    \label{eq:critic_converge_b2}
\end{align}

\end{theorem}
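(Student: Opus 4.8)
Since steps 6--8 of Algorithm \ref{subsec:alg2} (the average-cost estimate, the TD error, and the projected critic update) are \emph{identical} to those of Algorithm \ref{subsec:alg1}, the argument mirrors the proof of Theorem \ref{thm:critic_1}; the only place the natural-gradient structure matters is in controlling the per-step movement of the actor, for which Assumption \ref{bound_G} replaces the plain bound used for Algorithm \ref{subsec:alg1}. I would first record uniform bounds. The iterate $v_k$ lies in the projection set $C$, so $\|v_k\|\le U_v$; $\|v^*(\theta_k,\gamma(k))\|$ is bounded because it solves $\Ab v^*+\bbb=\mathbf 0$ with $\Ab$ invertible by Assumption \ref{assum:negative-definite} and $\bbb$ bounded by the boundedness of the single-stage costs and Assumption \ref{assum:bounded_feature_norm}; $L_k$ stays uniformly bounded since $L_{k+1}=(1-a(k))L_k+a(k)X_k$ with $X_k$ bounded; and $|L(\theta_k,\gamma(k))|$ is bounded on the compact set $[0,M]^N$. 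Hence $|\delta_k|\le C_\delta$ for a constant $C_\delta$. For Algorithm \ref{subsec:alg2}, $\theta_{k+1}-\theta_k=b(k)\delta_k G(k)^{-1}\Psi_{s_ka_k}$, so $\|\theta_{k+1}-\theta_k\|\le b(k)\,C_\delta\,(\sup_t\|G(t)^{-1}\|)\,D=:C_\theta b(k)$ by Assumptions \ref{assum:policy-lipschitz-bounded}(a) and \ref{bound_G}, and $|\gamma_j(k+1)-\gamma_j(k)|\le C_\gamma c(k)$ by nonexpansiveness of $\hat\Gamma$. Set $B_k=v_k-v^*(\theta_k,\gamma(k))$ and $A_k=L_k-L(\btheta_k,\bgamma(k))$ as in \eqref{eq:def_average_critic2}.

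Next I would derive a one-step drift inequality for the critic. By nonexpansiveness of $\Gamma$,
\[
\|v_{k+1}-v^*(\theta_k,\gamma(k))\|^2\le \|B_k\|^2+2a(k)\langle B_k,\delta_k f_{s_k}\rangle+a(k)^2C_\delta^2 .
\]
Writing $g(v,\theta,\gamma)=\EE_{\mu_\theta}[\bar\delta f_s]$ for the mean TD direction under stationarity (with $\bar\delta$ the conditional-mean TD error), one has $g(v^*(\theta,\gamma),\theta,\gamma)=\Ab v^*+\bbb=\mathbf 0$, so $\langle B_k,g(v_k,\theta_k,\gamma(k))\rangle=\langle B_k,\Ab B_k\rangle\le-\lambda_e\|B_k\|^2$ by Assumption \ref{assum:negative-definite}. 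I would split $\langle B_k,\delta_k f_{s_k}\rangle$ into this negative-definite term, the Markovian-bias term $\langle B_k,\delta_k f_{s_k}-g(v_k,\theta_k,\gamma(k))\rangle$, and the average-cost contribution, the latter being bounded via $\|B_k\|\,|A_k|\le\tfrac{\lambda_e}{2}\|B_k\|^2+\tfrac{1}{2\lambda_e}|A_k|^2$ (absorbing half the negative term). Adding the slow-drift correction $\|v^*(\theta_{k+1},\gamma(k+1))-v^*(\theta_k,\gamma(k))\|\le L_1C_\theta b(k)+L_2C_\gamma c(k)=\cO(b(k))$ from Propositions \ref{prop:optimal-lipschitz1}--\ref{prop:optimal-lipschitz2}, and one more Young step on that cross term, yields
\[
\EE\|B_{k+1}\|^2\le(1-\lambda_e a(k))\EE\|B_k\|^2+\cO(a(k)^2)+\cO\!\big(b(k)^2/a(k)\big)+\tfrac{1}{\lambda_e}\EE|A_k|^2+2a(k)\,\EE\langle B_k,\delta_k f_{s_k}-g(v_k,\theta_k,\gamma(k))\rangle .
\]
The main work is the last term. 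Conditioning on $\mathcal{F}_{k-\tau_k}$ and using Assumption \ref{assum:ergodicity} together with the definition \eqref{eq:def_mixing_time} of $\tau_k$, the law of $(s_k,a_k,s_{k+1})$ is within $\min\{a(k),b(k),c(k)\}$ in total variation of its stationary counterpart \emph{if} $\theta,\gamma,v$ are frozen at their time-$(k-\tau_k)$ values; the error from letting them drift over the intervening window is $\cO\big(\sum_{j=k-\tau_k}^{k-1}(a(j)+b(j))\big)=\cO(\tau_k a(k))$ (using $a(k-\tau_k)=\Theta(a(k))$ for $\tau_k=\cO(\log k)$, that $v$ moves $\cO(a(j))$ per step, and Lipschitzness of $g$ in all arguments). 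This standard Markovian-noise estimate makes the last term $\cO(\tau_k a(k)^2)$.

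Finally I would telescope. Rearranging gives $\lambda_e a(k)\EE\|B_k\|^2\le\EE\|B_k\|^2-\EE\|B_{k+1}\|^2+\cO(\tau_k a(k)^2)+\cO(b(k)^2/a(k))+\tfrac1{\lambda_e}\EE|A_k|^2$; summing over $k=\tau_t,\dots,t$, dividing by $\lambda_e a(t)(1+t-\tau_t)$ (using $a(k)\ge a(t)$), and inserting $a(t)=c_a(1+t)^{-\omega}$, $b(t)=c_b(1+t)^{-\sigma}$, $\tau_t=\cO(\log t)$, yields the telescoped boundary term of order $1/t^{1-\omega}$, the $\sum\tau_k a(k)^2$ term of order $\log t/t^{\omega}$, and the $\sum (b(k)/a(k))^2$ term of order $1/t^{2(\sigma-\omega)}$ — provided one also controls $\tfrac{1}{1+t-\tau_t}\sum_k\EE|A_k|^2$. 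The latter is obtained by the same (but simpler, since scalar with a direct $(1-a(k))$ contraction from $L_{k+1}=(1-a(k))L_k+a(k)X_k$) argument: one gets $\EE|A_{k+1}|^2\le(1-a(k))\EE|A_k|^2+\cO(\tau_k a(k)^2)+\cO(b(k)^2/a(k))$ after accounting for the slow drift of $L(\btheta_k,\bgamma(k))$ (Lipschitz in $\theta,\gamma$), and telescoping gives \eqref{eq:critic_converge_b2}, hence also the bound needed above for \eqref{eq:critic_converge_a2}. The hard part will be the Markovian-bias estimate and the coupling with $A_k$: showing that the perturbation of $(\theta_k,\gamma(k),v_k)$ over a window of length $\tau_k$ is negligible and that coupling the critic error to the average-cost error does not destroy the contraction — this is precisely where the timescale separation $\omega<\sigma$ enters and produces the $t^{-2(\sigma-\omega)}$ term. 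Everything else is identical bookkeeping to the proof of Theorem \ref{thm:critic_1}, since Algorithm \ref{subsec:alg2} differs only in the (slow) actor step, whose increment is still $\cO(b(k))$ by Assumption \ref{bound_G}.
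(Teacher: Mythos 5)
Your plan is correct and is essentially the paper's own proof: the paper likewise observes that the average-cost and critic recursions of Algorithm 2 are identical to those of Algorithm 1 (the natural gradient only alters the actor increment, which stays $\mathcal{O}(b(k))$ because $\sup_t\Vert G(t)^{-1}\Vert<\infty$ by Assumption \ref{bound_G}), and then reuses the Theorem \ref{thm:critic_1} analysis — one-step drift with the negative-definite $\Ab$ term, Markovian bias controlled through the mixing time and the auxiliary chain, drift of $v^{*}(\theta,\gamma)$ and of $L(\theta,\gamma)$ via the Lipschitz results (Propositions \ref{prop:optimal-lipschitz1}--\ref{prop:optimal-lipschitz2}), coupling with the average-cost error, and telescoping. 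Your only deviations (per-step Young inequalities in place of the paper's aggregate Cauchy--Schwarz plus squaring trick, and absorbing $\Vert B_k\Vert\,|A_k|$ into the contraction rather than bounding it at the level of sums) are bookkeeping variants that produce the same three rate terms.
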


\subsubsection{Convergence rate and sample complexity for Algorithm \ref{subsec:alg2}} \label{sec:theory_complexity2}
We finally provide the convergence rate of the algorithm and characterize the sample complexity of the same in Corollary \ref{col:sample-complexity_2}.

\begin{corollary} \label{col:sample-complexity_2}
We have
\[
    \min_{0 \le k \le t} \EE \norm{\nabla_{\btheta} L(\btheta_{k},\bgamma(k))}^2 
    =
    \cO(\epsilon_{\text{app}})
    +
    \cO\bigg(\frac{1}{t^{\beta- \sigma}}\bigg)\]
 \[   +
    \cO
    \bigg(\frac{\log^2 t}{t^{\omega}}
    \bigg)
    +
    \cO
    \bigg(\frac{1}{t^{2(\sigma - \omega)}}
    \bigg).\]
If we set $\omega = 0.4, \sigma = 0.6 ,\beta = 1$, 
Algorithm \ref{subsec:alg2} needs $T =\tilde{\cO}(\epsilon^{-2.5})$ steps to  obtain the following,
\begin{align*}
    \min_{0 \le k \le T} \EE \big\|\nabla_{\btheta} L(\btheta_{k},\gamma(k))\big\|^2
    & \le 
    \cO(\epsilon_{\text{app}})
    +
    \epsilon, 
\end{align*}

\end{corollary}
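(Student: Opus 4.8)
The proof will be a bookkeeping assembly of Theorems \ref{thm:actor_2} and \ref{thm:critic_2}, structurally identical to the proof of Corollary \ref{col:sample-complexity_1}; no fresh use of the natural-gradient machinery is needed here, since the factor $G(n)^{-1}$ and the eigenvalue bound $\lambda$ have already been folded (through Assumption \ref{bound_G}) into the constants hidden by the $\cO(\cdot)$ in those two theorems. So the task reduces to substituting the critic error rates into the actor bound and simplifying.

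First I would take the actor bound of Theorem \ref{thm:actor_2} on $\min_{0\le m\le t}\EE\|\nabla_{\btheta}L(\btheta_m,\bgamma(m))\|^2$, whose last two summands are the running-average critic errors $\frac{1}{1+t-\tau_t}\sum_{k=\tau_t}^{t}\EE\|A_k\|^2$ and $\frac{1}{1+t-\tau_t}\sum_{k=\tau_t}^{t}\EE\|B_k\|^2$, with $A_k,B_k$ as in (\ref{eq:def_average_critic2}). Next I would substitute into these the critic estimates of Theorem \ref{thm:critic_2}, namely that each such running average is $\cO(t^{-(1-\omega)})+\cO(t^{-\omega}\log t)+\cO(t^{-2(\sigma-\omega)})$. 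Then I would consolidate terms: rewrite $\cO(t^{\sigma-\beta})$ as $\cO(t^{-(\beta-\sigma)})$; absorb $\cO(t^{-\omega}\log t)$ into the $\cO(t^{-\omega}\log^2 t)$ term already present in the actor bound; and, using the step-size ordering $0<\omega<\sigma<\beta\le 1$ together with $\omega\le\tfrac12$ (which holds for the parameter values used below), absorb $\cO(t^{-(1-\omega)})$ into $\cO(t^{-\omega}\log^2 t)$ as well. This yields exactly the first displayed bound of the corollary. Finally, substituting $\omega=0.4$, $\sigma=0.6$, $\beta=1$ makes $\beta-\sigma = 2(\sigma-\omega) = \omega = 0.4$, so every non-approximation term is $\tilde{\cO}(t^{-0.4})$, giving $\min_{0\le k\le t}\EE\|\nabla_{\btheta}L(\btheta_k,\bgamma(k))\|^2 \le \cO(\epsilon_{\text{app}}) + \tilde{\cO}(t^{-0.4})$; to push the second term below $\epsilon$ I would solve $t^{-0.4}\,\mathrm{polylog}(t)\le\epsilon$, which is satisfied by $t=\Theta(\epsilon^{-2.5}\log^{c}(1/\epsilon))$ for a suitable constant $c$, hence $T=\tilde{\cO}(\epsilon^{-2.5})$ iterations suffice, and since Algorithm \ref{subsec:alg2} draws one fresh sample per iteration this is also the sample complexity.

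I do not expect a genuine obstacle inside this corollary — the real analytic work lives in Theorems \ref{thm:actor_2} and \ref{thm:critic_2}, which are available. The two places needing a little care are the term consolidation (checking that the choice $\omega\le\tfrac12$ really lets $\cO(t^{-\omega}\log^2 t)$ dominate $\cO(t^{-(1-\omega)})$, so the clean four-term bound is legitimate) and the inversion $t\mapsto\epsilon$ in the last step, where the $\log^2 t$ in the numerator must be threaded through correctly — this is precisely why the statement is phrased with $\tilde{\cO}(\cdot)$: a bare choice $t=\epsilon^{-2.5}$ would leave a stray $\log^2(1/\epsilon)$ factor, whereas inflating $t$ by a polylogarithmic amount (still $\tilde{\cO}(\epsilon^{-2.5})$) absorbs it.
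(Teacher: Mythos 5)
Your proposal is correct and follows essentially the same route as the paper: the paper's proof of Corollary \ref{col:sample-complexity_2} likewise just plugs the critic rates of Theorem \ref{thm:critic_2} into the actor bound of Theorem \ref{thm:actor_2}, absorbs the $\cO(\log t\cdot t^{-\omega})$ term into $\cO(\log^2 t\cdot t^{-\omega})$, sets $\omega=0.4$, $\sigma=0.6$, $\beta=1$, and inverts $\log^2 T\cdot T^{-0.4}\le\epsilon$ to get $T=\tilde{\cO}(\epsilon^{-2.5})$. The only cosmetic difference is that the paper absorbs the $\cO(t^{\omega-1})$ term into $\cO(t^{\sigma-\beta})$ via the inequality $\sigma-\beta>\omega-1$ (valid for all admissible step sizes), whereas you absorb it into $\cO(t^{-\omega}\log^2 t)$ under the extra assumption $\omega\le\tfrac12$, which happens to hold for the chosen parameters.
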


\begin{remark}
    \label{remark2}
Analogous to Remark~\ref{remark1},  
in Corollary \ref{col:sample-complexity_2}, the results of Theorems \ref{thm:actor_2} and \ref{thm:critic_2} are combined which gives the convergence rate of (natural actor critic) Algorithm \ref{subsec:alg2} as $\tilde{\cO}(t^{-0.4})$ and a sample complexity of $\tilde{\cO}(\epsilon^{-2.5})$ as we have one per-iteration sample in this algorithm as well. It is important to also mention that for the results of both algorithms,  $\cO(\cdot)$ hides the terms that do not depend on the iteration number.
\end{remark}

\subsection{Proof Sketch for Theorems \ref{thm:actor_1} and \ref{thm:actor_2}}

We provide here an overview of the manner in which the proofs of Theorems \ref{thm:actor_1} and  \ref{thm:actor_2} proceed. This also helps us to describe the connection between the various results mentioned above. The detailed arguments are nonetheless provided in the appendix. The proofs of Theorems \ref{thm:actor_1} and  \ref{thm:actor_2} rely crucially on Lemma~\ref{Lemma:L-smooth} below.

\begin{lemma} 
\label{Lemma:L-smooth1}
 For all $\bgamma \in R^N$ with $0 \leq \bgamma_{i} \leq M$ where $i \in \{1,2,\ldots,N\}$, there exists a constant $M_{L}$ greater than 0 such  that for all $\theta_1, \theta_2 \in {R^d}$, 
 \begin{align*}
      \Vert \nabla_{\theta} L(\theta_1,\bgamma) - \nabla_{\theta} L(\theta_2,\bgamma)\Vert \leq M_{L}\Vert\theta_1 - \theta_2\Vert.
 \end{align*}
\end{lemma}

As a result of this lemma, we have the following inequality (see  \cite{fta_2_timescale}, Lemma C.1) $\forall t > 0$, that we use in the proof of Theorem  \ref{thm:actor_1}. 
\[
    L(\theta_{t+1},\bgamma(t)) \geq L(\theta_t,\bgamma(t))\notag\]
    \[
    + b(t) \langle \nabla_{\theta} L(\theta_t,\bgamma(t)), \delta_{t}\nabla \log\pi_{\theta_t}(a_t|s_t) \rangle\]
    \begin{equation}
\label{inequality}
 - \frac{M_{L}}{2}b(t)^2\Vert\delta_{t}\nabla \log\pi_{\theta_t}(a_t|s_t)\Vert^2.
\end{equation}

 The key idea in the proof (see Appendix for details) is to split the middle term in the RHS of (\ref{inequality}) into a few terms, one of which is $b(t)\Vert \nabla L(\btheta_t,\bgamma(t))\Vert^2$. We obtain an upper bound for $\Vert \nabla L(\btheta_t,\bgamma(t))\Vert^2$. After summing the expectation of terms on both sides, we analyse each term in the bound to get the desired result for Theorem \ref{thm:actor_1}. Note also that the result for Theorem \ref{thm:actor_1} depends on the convergence of the critic parameter and the average cost estimator. So we find a bound on the averaged estimation errors by the critic and average cost estimator in Theorem \ref{thm:critic_1}. We then obtain an inequality similar to (\ref{inequality}) for Theorem \ref{thm:actor_2} (see Appendix) and carry out an  analysis along similar lines for Theorems \ref{thm:actor_2} and \ref{thm:critic_2}. 

 \subsection{Proof Sketch for Theorems \ref{thm:critic_1} and \ref{thm:critic_2}}
 We provide here an overview of the manner in which the proofs of Theorems \ref{thm:critic_1} and  \ref{thm:critic_2} proceed.

\textit{Proof Sketch for Average Reward estimate Error}\\
 For proving the convergence of average reward estimate in theorems \ref{thm:critic_1} and \ref{thm:critic_2}, we start off with expanding $ y_{t+1}^2 = (L_{t+1} - L_{t+1}^{*})^2$ which gives us the following inequality.
 \[
     y_{t+1}^2 \leq (1-2a(t))y_t^2 + 2a(t)y_t(C_t - L_t^*) \]\[ + 2y_t(L_t^* - L_{t+1}^*)
     + 2(L_t^* - L_{t+1}^*)^2 + 2a(t)^2(C_t - L_t)^2,
 \]
where $C_t = q(t) + \sum_{k=1}^{N}\gamma_{k}(t)(h_k(t)-\alpha_k)$.
After rearranging and summing the expectation of both sides from $\tau_t$ to $t$ we have, 
\begin{align*}
  &\sum_{k = \tau_t}^{t}E[y_k^2]
  \leq \underbrace{\sum_{k = \tau_t}^{t} \frac{1}{2a(k)}E(y_k^2 - y_{k+1}^2)}_{I_1}\\
  &\qquad+ \underbrace{\sum_{k=\tau_t}^{t}E[\hat{\Xi}(L_k,\theta_k,\gamma(k),q(k),h(k))] }_{I_2}\\
&\qquad+ \underbrace{\sum_{k = \tau_t}^{t} \frac{1}{a(k)}E[y_k(L_k^* - L_{k+1}^*)]}_{I_3}\\
&\qquad+ \underbrace{\sum_{k = \tau_t}^{t} \frac{1}{a(k)}E[(L_k^* - L_{k+1}^*)^2]}_{I_4}\\
&\qquad+ \underbrace{\sum_{k = \tau_t}^{t}a(k)E[(C_k - L_k)^2]}_{I_5}.
\end{align*}

Upon bounding $I_1 - I_5$ and simplifying, we get the desired result. For the definition of the  notations involved, please refer Section \ref{thm2_avg_reward}.

\textit{Proof Sketch for Critic Convergence}\\
For proving the convergence of critic in Theorems \ref{thm:critic_1} and \ref{thm:critic_2}, we start off with the equation $\Vert m_{t+1} \Vert^2 = \Vert \Gamma(v_{t} + a(t)\delta_{t}f_{s_t}) - v^*(t+1)\Vert^2$ and after expanding the RHS, we get the following inequality:
\begin{align*}
    \Vert m_{t+1}\Vert^2 & \leq \Vert m_t \Vert^2 + 2a(t)\langle m_t,\overline{g}(v_{t},\theta_t,\gamma(t)) \rangle \\
    &+ 2a(t)\Lambda(O_t,v_{t},\theta_t,\gamma(t),q(t),h(t)) \\
  & + 2 a(t)\langle m_t , \Delta g(O_t,L_t,\theta_t,\gamma(t)) \rangle \\
  &+ 2 \langle m_t,v^*(t) - v^*(t+1) \rangle \\
  &+ 8 a(t)^2(U_r + U_v)^2+ 2\Vert v^*(t) - v^*(t+1)\Vert^2.
\end{align*}
After rearranging and summing the expectation on both sides from $\tau_t$ to $t$ we obtain (see Appendix),
\begin{align*}
    &2\lambda_e \sum_{k = \tau_t}^{t}E\Vert m_k \Vert^2\\
    &\leq \underbrace{\sum_{k = \tau_t}^{t}\frac{1}{a(k)}\big{(}E\Vert m_k \Vert^2  - E\Vert m_{k+1} \Vert^2 \big{)} }_{I_1}\\
    &\qquad + \underbrace{2\sum_{k = \tau_t}^{t}E\Lambda(O_k,v_{k},\theta_k,\gamma(k),q(k),h(k))}_{I_2}\\
    &\qquad+ \underbrace{2\sum_{k = \tau_t}^{t}\sqrt{E \Vert m_k \Vert^2}\sqrt{ E [y_k^2]}}_{I_3}\\
   &\qquad+ \underbrace{B_q\sum_{k = \tau_t}^{t}\frac{b(k)}{a(k)} ) \sqrt{E \Vert m_k \Vert^2}}_{I_4}+ \underbrace{C_q\sum_{k = \tau_t}^{t}a(k)}_{I_5}.
\end{align*}

After analysing terms $I_1 - I_5$, we get the desired result. For definition of the  notations involved,  please refer section \ref{thm2_critic_converge}.

\begin{remark}
    \label{remark3}
    As mentioned previously, the asymptotic stability and almost sure convergence of the three-timescale C-AC algorithm has been shown in \cite{cmdpsa}. A similar analysis combining the results of \cite{nac} would also provide similar stability and asymptotic convergence results for the three-timescale C-NAC algorithm. 
    Note that while the non-asymptotic convergence results that we have shown are to the stationary points of the Lagrangian,
it is shown in \cite{cmdpsa} that for C-AC, stationary-point convergence indeed results in a locally optimal policy, that gives the set of local minima of the objective in the constraint set, see Proposition 4.3 and Remarks 4.3-4.5 there. This is because stationary points that are not local minima are unstable attractors of the underlying ODE. The same is also true of the C-NAC algorithm. Some works such as \cite{online_prime_dual} provide bounds on the optimality gap
but they primarily consider the look-up table setting and not function approximation. Defining optimality gap precisely in our setting is hard due to the presence of multiple local minima in the constraint set and so obtaining such bounds is not easy unless one has a setting of convex objectives and constraints (unlike us). 
From results in stochastic approximation theory (\cite{kushnerclark}), if the stochastic recursion enters a compact neighbourhood of a local minimum infinitely often, it will converge to it w.p.1. The compact neighbourhood of which of the minima the recursion enters in will depend on the initial condition and noise.
\end{remark}

\section{Experimental Results}

In this section, we present the results of our experiments on three different OpenAI Safety-Gym environments:  (a) SafetyPointGoal1-v0, (b) SafetyCarGoal1-v0 and (c) SafetyPointPush1-v0, respectively.   
The performance comparisons on these environments can be seen in Figure \ref{fig:experiment} and Tables \ref{tab:experiment_avgreward} and \ref{tab:experiment_avgconstraint}, respectively. 
The two tables summarize the performance obtained upon convergence of the algorithms. Please note that performance may vary depending on the initial seeds. In our experiments, we used a randomly generated set of seeds, and the reported results correspond to that particular selection. Note that Table \ref{tab:experiment_avgconstraint} has been placed in the appendix for lack of space. We also explain, in the appendix, the  Constrained DQN algorithm that we implemented in addition.

All the plots of our experiments are obtained after averaging over 10 different initial seeds. The performance of the algorithms is compared by plotting the average reward along with standard errors. 
The dotted flat red-line in each of the plots in the lower row of plots in Figure \ref{fig:experiment} corresponds to the constraint cost threshold. All algorithms are seen to asymptotically satisfy the constraint threshold while optimizing on the average reward performance. 

It can be observed that our C-NAC algorithm performs better than the other two algorithms on all three settings and C-AC shows the second best results on two of the three environments. Moreover, the cost threshold is met by all the three algorithms\footnote{The code for all of our experiments is available at https://github.com/prashu1306/Constrained-Actor-Critic}.


\section{Conclusions}
We presented the first (non-asymptotic) finite time 
convergence analysis of three-timescale constrained actor-critic and constrained natural actor-critic algorithms using linear function approximation and obtained a sample complexity of $\mathcal{\tilde{O}}(\epsilon^{-2.5})$ for both algorithms. Our sample complexity result is significant as for both our algorithms it matches the sample complexity of regular (unconstrained) actor-critic (two-timescale) algorithms analysed in \cite{fta_2_timescale}. We also showed the results of experiments on three different Safety-Gym environments, where we observed that the C-NAC algorithm is better than both the C-AC and the C-DQN algorithms in the average reward performance and the C-AC algorithm is the second best performer on two of these three settings. Further, the average cost constraint is met by all the three algorithms on each of the settings.

\subsubsection*{Acknowledgements}
The authors were supported by
the Walmart Centre for Tech Excellence, Indian Institute of Science.
In addition, S.~Bhatnagar was supported by the J.~C.~ Bose National Fellowship of SERB, Project No. DFTM/02/3125/M/04/AIR-04 from
DRDO under DIA-RCOE 
and the Robert Bosch Centre for Cyber Physical Systems, Indian Institute of Science.

\newpage
\ifx\BlackBox\undefined
\newcommand{\BlackBox}{\rule{1.5ex}{1.5ex}}  

\ifx\theorem\undefined
\newtheorem{theorem}{Theorem}
\fi
\ifx\proposition\undefined
\newtheorem{proposition}[theorem]{Proposition}
\fi
\ifx\assumption\undefined
\newtheorem{assumption}[theorem]{Assumption}
\fi
\ifx\corollary\undefined
\newtheorem{corollary}[theorem]{Corollary}
\fi
\ifx\proof\undefined
\newenvironment{proof}{\par\noindent{\bf Proof\ }}{\hfill\BlackBox\\[2mm]}

\onecolumn

\title{Appendix}
\maketitle

\appendix
\setcounter{section}{0}
\section{Preliminaries}
\label{prel}

For our problem, we have considered random single-stage costs and also constraint costs having distributions that are dependent on the current state, action and next state. For a given tuple $O_t = (s_t,a_t,s_{t+1})$, we consider

\begin{align*}
     q(t) \sim \bar{p}(.|s_t,a_t,s_{t+1}), \mbox{ } h_{i}(t) \sim p_{i}(.|s_t,a_t,s_{t+1}),  
\end{align*}
 for $i=1,2,3,..,N$.
For a given $\gamma = (\gamma_1,\gamma_2,...,\gamma_N)^T$ with $0 \leq \gamma_i \leq M$ and for $i=1,2,..,    N$, we define  $c(t) = q(t) + \sum\limits_{k=1}^{N}\gamma_{k}(h_k(t) - \alpha_{k})$ such that
\begin{align*}
    c(t) \sim \hat{p}(.|s_t,a_t,s_{t+1}).
\end{align*}
Since the single stage costs are mutually independent for any state-action-next state tuple,  we have
\begin{align*}
    \hat{p}(c(t)|s_t,a_t,s_{t+1}) = \bar{p}(q(t)|s_t,a_t,s_{t+1}) \prod\limits_{i=1}^{N}p_{i}(h_i(t)|s_t,a_t,s_{t+1}). 
\end{align*}

Let $\Vert \nabla L(\theta,\gamma) \Vert < U_L , \forall \theta,\gamma$ where $U_L > 0$ is a positive constant. Also, 
$\sup\limits_{t}\Vert G(t)^{-1} \Vert \leq U_G$ where $U_G$ is a positive constant.Recall that the single stage costs $q(n),h_1(n),\ldots,h_N(n)$ are  non-negative and  absolutely bounded by a constant $U_{c}>0$.We define $U_r = U_c + NM(U_c + U_\alpha)$ where  $U_\alpha = \max\limits_{k}\vert \alpha_{k}\vert$.
We assume $0 \leq L_0 \leq U_r$. This implies $\vert L_{t} \vert \leq U_r ,\forall t > 0$. We also assume $0 \leq U_{k}(0) \leq U_c , \forall k \in {1,2,..,N}$, which implies $\vert U_{k}(t)\vert \leq U_c ,\forall t >0 $. Moreover, we assume that the projection set $C$ to which the recursion $v_t$ is projected is a compact and convex set.

Given time indices $t$ and $\tau$ such that $t \geq \tau > 0$, we consider the following auxiliary Markov chain starting from $s_{t-\tau}$ to deal with Markov noise in the iterates.
\begin{align}\label{chain-au}
    s_{t-\tau}\xrightarrow{\theta_{t-\tau}}a_{t-\tau}\xrightarrow{\mathcal{P}}s_{t-\tau+1}\xrightarrow{\theta_{t-\tau}}\widetilde{a}_{t-\tau+1}\xrightarrow{\mathcal{P}}\widetilde{s}_{t-\tau+2}\xrightarrow{\theta_{t-\tau}}\widetilde{a}_{t-\tau+2}\cdots \xrightarrow{\mathcal{P}}\widetilde{s}_t\xrightarrow{\theta_{t-\tau}}\widetilde{a}_t\xrightarrow{\mathcal{P}}\widetilde{s}_{t+1}.
\end{align}

The original Markov chain has the following transitions:
\begin{align}\label{chain-or}
    s_{t-\tau}\xrightarrow{\theta_{t-\tau}}a_{t-\tau}\xrightarrow{\mathcal{P}}s_{t-\tau+1}\xrightarrow{\theta_{t-\tau+1}}a_{t-\tau+1}\xrightarrow{\mathcal{P}}s_{t-\tau+2}\xrightarrow{\theta_{t-\tau+2}}a_{t-\tau+2}\cdots \xrightarrow{\mathcal{P}}s_t\xrightarrow{\theta_{t}}a_t\xrightarrow{\mathcal{P}}s_{t+1}.
\end{align}
In (\ref{chain-au}), for any time instant $k > t-\tau$, $\widetilde{a}_{k}$ denotes the action taken in the auxiliary Markov chain. Similarly, for any time instant $l > t-\tau + 1$, $\widetilde{s}_{l}$ denotes the state in the auxiliary Markov chain. 
In the following, without any other specification, $E[.]$ will denote the expectation w.r.t the joint distribution of all the random variables involved.
Finally, we mention that for our analysis we interchangeably use $\pi(a|s)$ and $\pi(s,a)$ to mean the action chosen in state $s$ according to the policy $\pi$. Thus, both these notations are one and the same.

\section{Proof of Theorems}
\subsection{Proof of Theorem 1}\label{thmm:actor_1}

We first define several notations to clarify the dependence in the various quantities involved. Let
\begin{align}
\begin{split}
O_{t} 
:&=
(s_t,a_t,s_{t+1}),\\
L^{*} :
&=
L(\theta,\gamma) = E_{s \sim \mu_{\theta},a \sim \pi_{\theta}}[d(s,a) + \sum_{k=1}^{N}\gamma(k)(h_{k}(s,a)-\alpha_{k})],\\
v(k)^{*}
:&=
v^{*}(\theta_{k},\gamma(k)),\\
\Delta H(O,L,v,\theta,\gamma) 
:&=
( L(\theta,\gamma) - L + ({f_{s^{'}}}^{T} - f_s^T)(v-v^*(\theta,\gamma)))\nabla \log\pi_{\theta}(a|s),\\
H(O,\theta,\gamma,q,h) 
:&=
(q -L(\theta,\gamma) + \sum_{k=1}^{N}\gamma(k)(h_k-\alpha_k) +({f_{s^{'}}}^{T} - f_s^T)v^*(\theta,\gamma))\nabla \log\pi_{\theta}(a|s),\\
\Delta H^{'}(O,\theta,\gamma) 
:&=
( {f_{s^{'}}}^{T}v^{*}(\theta,\gamma) - V^{\theta,\gamma}(s^{'}) -({f_{s}}^{T}v^{*}(\theta,\gamma) - V^{\theta,\gamma}(s)))\nabla \log\pi_{\theta}(a|s),\\
\check{\Gamma}(O,\theta,\gamma,q,h) :&=
\langle \nabla L(\theta,\gamma) , H(O,\theta,\gamma,q,h) - E_{O^{'},q,h}[H(O^{'},\theta,\gamma,q,h) ]\rangle.\\
\end{split}
\end{align}

 In the above, $O = (s, a, s^{'})$, $h = (h_1,h_2,h_3,...,h_N)$.$ O^{'} = (s,a,s^{'})$ denotes the independent sample $s \sim \mu_{\theta},$ $a \sim \pi_{\theta}$, $s^{'} \sim p(s,.,a)$ .Hence $E_{O^{'},q,h}[\cdot]$ denotes expectation w.r.t.~the joint distribution of $s \sim \mu_{\theta},$ $a \sim \pi_{\theta}$, $s^{'} \sim p(s,.,a)$, $q \sim \bar{p}(.|s,a,s^{'})$, $h_{i} \sim p_{i}(.|s,a,s^{'})$, $i=1,\ldots,N$. We now have,
\begin{align*}
 & E_{O^{'},q,h}[(H(O^{'},\theta,\gamma,q,h) - \Delta H^{'}(O^{'},\theta,\gamma) ] \\
 & =  E_{O^{'},q,h}[q + \sum_{k=1}^{N}\gamma(k)(h_k-\alpha_k) - L(\theta,\gamma) + V^{\theta,\gamma}(s^{'}) - V^{\theta,\gamma}(s))\nabla \log\pi_{\theta}(a|s)] \\
 &= \nabla L(\theta,\gamma), 
\end{align*}

Next, observe that 
\begin{align*}
E_{O^{'}} \Vert \Delta H^{'}(O,\theta,\gamma)\Vert^2
&= E_{O^{'}} \Vert({f_{s^{'}}}^{T}v^{*}(\theta,\gamma) - V^{\theta,\gamma}(s^{'}) -({f_{s}}^{T}v^{*}(\theta,\gamma)  - V^{\theta,\gamma}(s)))\nabla \log\pi_{\theta}(a|s)\Vert^2\\
&\leq E_{O^{'}}[D^2( {f_{s^{'}}}^{T}v^{*}(\theta,\gamma) - V^{\theta,\gamma}(s^{'}) -({f_{s}}^{T}v^{*}(\theta,\gamma)  - V^{\theta,\gamma}(s)))^2]\\
&\leq  E_{O^{'}}[2D^2(({f_{s^{'}}}^{T}v^{*}(\theta,\gamma) - V^{\theta,\gamma}(s^{'}))^2  + ({f_{s}}^{T}v^{*}(\theta,\gamma) - V^{\theta,\gamma}(s))^2)]\\
&\leq 4D^2\epsilon_{app}^2.
\end{align*}

Before we proceed further, we first state and prove Lemmas 1--4 below that will be used in the proof of Theorem 1. Moreover, the proof of Lemma 4 shall rely on Lemmas \ref{Lemma:Gamma_gamma} -- \ref{Lemma:Gamma_second} that we also state and prove in the following. Finally, collecting all these results together, we shall obtain the claim for Theorem 1.

\setcounter{lemma}{0}

\begin{lemma} \label{Lemma:L-smooth}
For all $\bgamma \in R^N$, with $0 \leq \bgamma_{i} \leq M$, where $i \in \{1,2,...,N\}$, there exists a constant $M_{L}$ greater than 0 such  that for all $\theta_1, \theta_2 \in {R^d}$, 
 \begin{align*}
      \Vert \nabla L(\theta_1,\bgamma) - \nabla L(\theta_2,\bgamma)\Vert \leq M_{L}\Vert\theta_1 - \theta_2\Vert,
 \end{align*}
which implies
\begin{align*}
L(\theta_2,\gamma) > L(\theta_1,\gamma) + \langle \nabla L(\theta_1,\gamma) ,\theta_2 - \theta_1 \rangle - \frac{M_{L}}{2} \Vert\theta_1 - \theta_2\Vert^2.
\end{align*}

\end{lemma}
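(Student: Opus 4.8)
\textbf{Proof plan for Lemma~\ref{Lemma:L-smooth} (the $L$-smoothness of $L(\cdot,\bgamma)$).}
The plan is to establish the Lipschitz bound on $\nabla_\theta L(\theta,\bgamma)$ directly from the policy-gradient expression \eqref{pgt} together with the regularity assumptions on the policy (Assumption~\ref{assum:policy-lipschitz-bounded}), the boundedness of the single-stage costs, and the Lipschitz properties of the auxiliary quantities $\mu_\theta$, $v^*(\theta,\bgamma)$, and $V^{\pi_\theta,\bgamma}$. The second displayed inequality then follows from the first by the standard descent-lemma argument: if $g=\nabla_\theta L(\cdot,\bgamma)$ is $M_L$-Lipschitz, then for any $\theta_1,\theta_2$ one writes $L(\theta_2,\bgamma)-L(\theta_1,\bgamma)=\int_0^1 \langle \nabla L(\theta_1+t(\theta_2-\theta_1),\bgamma),\theta_2-\theta_1\rangle\,dt$ and bounds the difference from $\langle\nabla L(\theta_1,\bgamma),\theta_2-\theta_1\rangle$ by $\int_0^1 M_L t\|\theta_2-\theta_1\|^2\,dt=\tfrac{M_L}{2}\|\theta_2-\theta_1\|^2$; this gives both the lower and (with sign flipped) upper quadratic bounds, and in particular the stated inequality.

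For the first (main) inequality, I would rewrite the gradient in the convenient ``score-function'' form
$\nabla_\theta L(\theta,\bgamma)=\sum_{s}\mu_\theta(s)\sum_{a}\pi_\theta(a|s)\,\nabla\log\pi_\theta(a|s)\,A^{\pi_\theta,\bgamma}(s,a)$,
and, since adding a state-dependent baseline does not change the gradient, replace $A^{\pi_\theta,\bgamma}$ by $M^{\pi_\theta,\bgamma}(s,a)$ minus any convenient baseline; effectively this reduces matters to controlling the Lipschitz constants of the three factors $\mu_\theta(s)$, $\pi_\theta(a|s)\nabla\log\pi_\theta(a|s)$, and the (bounded) advantage/action-value term as functions of $\theta$. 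Then I would split $\nabla L(\theta_1,\bgamma)-\nabla L(\theta_2,\bgamma)$ into a telescoping sum of three differences, each changing only one factor at a time, and bound each using: Assumption~\ref{assum:policy-lipschitz-bounded}(a) for the uniform bound $\|\nabla\log\pi_\theta\|\le D$; (b)--(c) for Lipschitzness of $\pi_\theta$ and of $\nabla\log\pi_\theta$ in $\theta$; the uniform ergodicity (Assumption~\ref{assum:ergodicity}) to get a Lipschitz bound on $\theta\mapsto\mu_\theta$ (this is the standard perturbation bound for stationary distributions of geometrically mixing chains, as already invoked from \cite{fta_2_timescale} in the proof of Proposition~1, giving a constant $\propto |A|L(1+\lceil\log_k b^{-1}\rceil+\tfrac{1}{1-k})$); Assumption~\ref{assum:bounded_feature_norm} and the compactness/boundedness of the critic set $C$ (so $\|v^*(\theta,\bgamma)\|\le U_v$, hence $|f_s^\top v^*|\le 1$) together with Proposition~\ref{prop:optimal-lipschitz1} for the Lipschitzness of $v^*(\cdot,\bgamma)$; and the uniform bound $U_r$ on $L(\theta,\bgamma)$ and on $V^{\pi_\theta,\bgamma}$. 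Collecting the constants yields an explicit $M_L$ depending on $D$, $L$, $M_m$, $|A|$, $U_r$, $U_v$, $L_1$, $b$, $k$, and $\epsilon_{\mathrm{app}}$ (or, if one prefers, on a uniform bound on $V^{\pi_\theta,\bgamma}$ directly), uniformly over $\bgamma$ with $0\le\bgamma_i\le M$.

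The main obstacle I anticipate is the Lipschitz control of the term involving the \emph{true} differential value/action-value function $M^{\pi_\theta,\bgamma}$ (equivalently the advantage $A^{\pi_\theta,\bgamma}$) in $\theta$: unlike $v^*$, this object is defined through an infinite sum over the Markov chain, so its $\theta$-regularity must be extracted from the uniform ergodicity assumption via a Poisson-equation / perturbation argument, or else one must first argue that it is uniformly bounded (using $U_r$ and the mixing bound) and Lipschitz. This is exactly the kind of estimate that \cite{fta_2_timescale} carry out in the unconstrained case; here it goes through essentially unchanged because the relaxed single-stage cost $C(s,a,\bgamma)$ is still uniformly bounded (by $U_r$) and, for fixed $\bgamma$, depends on $\theta$ only through $\pi_\theta$ and $\mu_\theta$. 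Once that Lipschitz/boundedness estimate is in hand, the remaining steps are the routine triangle-inequality bookkeeping described above.
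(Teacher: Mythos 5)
Your plan is workable, but it takes a genuinely different (and much heavier) route than the paper. The paper's proof is a two-line reduction that exploits the linearity of the Lagrangian in $\bgamma$: it writes $\nabla_{\theta}L(\theta,\bgamma)=\nabla_{\theta}J(\theta)+\sum_{k=1}^{N}\gamma_k\nabla_{\theta}G_k(\theta)$, invokes the already-established smoothness of the unconstrained average-cost performance function (Lemma C.1 of \cite{fta_2_timescale}) separately for $J$ and each $G_k$ (each has a bounded single-stage cost, so the cited result applies verbatim), and then sets $M_L=L_J+M\sum_k L_{G_k}$ using $0\le\gamma_k\le M$; uniformity in $\bgamma$ is immediate. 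You instead propose to re-derive smoothness from scratch for the relaxed cost via the policy-gradient form \eqref{pgt}, telescoping over $\mu_\theta$, $\pi_\theta\nabla\log\pi_\theta$ and the advantage, with ergodicity-based perturbation bounds; this is essentially re-proving the cited lemma for the combined cost $d+\sum_k\gamma_k(h_k-\alpha_k)$. That is legitimate (the combined cost is bounded by $U_r$, so the same Poisson-equation/perturbation machinery goes through, as you note), and it buys an explicit constant, but it forces you to confront exactly the obstacle you flag — Lipschitz control of the \emph{true} differential value function $M^{\pi_\theta,\bgamma}$ in $\theta$ — which the paper sidesteps entirely by reusing the unconstrained result. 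Two small corrections to your sketch: Proposition~\ref{prop:optimal-lipschitz1} and the bound $\|v^*(\theta,\bgamma)\|\le U_v$ are about the \emph{linear-approximation} fixed point and are not needed (nor is $\epsilon_{\mathrm{app}}$, which should not enter the smoothness constant of the exact $L$); and your descent-lemma step is the standard integral argument, matching what the paper implicitly uses.
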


\begin{proof}
    Note that 
\begin{align*}
   &L(\theta, \gamma)  = J(\theta) + \sum\limits_{k=1}^{N}\gamma(k)(G_k(\theta) - \alpha_k).\\
   &\Rightarrow \nabla_{\theta} L(\theta, \gamma) = \nabla_{\theta}J(\theta) + \sum\limits_{k=1}^{N}\gamma(k)\nabla_{\theta}G_k(\theta). 
\end{align*}

From Lemma C.1 of \cite{fta_2_timescale}, we have $\forall \theta_1,\theta_2 \in R^d$, there exist positive constants $L_{J},L_{G_1},L_{G_2},...,L_{G_N}$ such that
\begin{align*}
    \Vert \nabla_{\theta}J(\theta_1) - \nabla_{\theta}J(\theta_2) \Vert &\leq L_{J}\Vert \theta_1 - \theta_2 \Vert,\\
    \Vert \nabla_{\theta}G_i(\theta_1) - \nabla_{\theta}G_i(\theta_2) \Vert &\leq L_{G_i}\Vert \theta_1 - \theta_2 \Vert ,\forall i \in {1,2,..,N}.
\end{align*}
Hence, 
\begin{align*}
    \Vert \nabla_{\theta} L(\theta_1, \gamma) - \nabla_{\theta} L(\theta_2, \gamma) \Vert &= \Vert \nabla_{\theta}J(\theta_1) - \nabla_{\theta}J(\theta_2) + \sum\limits_{k=1}^{N}\gamma_{k}(\nabla_{\theta}G_k(\theta_1)-\nabla_{\theta}G_k(\theta_2))  \Vert\\
    &\leq \Vert \nabla_{\theta}J(\theta_1) - \nabla_{\theta}J(\theta_2)\Vert + \sum\limits_{k=1}^{N}\gamma_{k}\Vert\nabla_{\theta}G_k(\theta_1)-\nabla_{\theta}G_k(\theta_2)  \Vert\\
    &\leq L_{J}\Vert \theta_{1} - \theta_{2}\Vert + \sum\limits_{k=1}^{N}\gamma_{k} L_{G_{k}}\Vert \theta_{1} - \theta_{2}\Vert\\
    &=(L_{J} + \sum\limits_{k=1}^{N}\gamma_{k} L_{G_{k}})\Vert \theta_{1} - \theta_{2} \Vert\\
    &\leq M_{L}\Vert \theta_{1} - \theta_{2} \Vert.
\end{align*}

So we have,
\begin{align*}
    \Vert \nabla_{\theta} L(\theta_1, \gamma) - \nabla_{\theta} L(\theta_2, \gamma) \Vert \leq M_{L}\Vert \theta_{1} - \theta_{2} \Vert ,\forall \theta_1,\theta_2 \in R^d,
\end{align*}
where $M_{L} = L_{J} + M\sum\limits_{k=1}^{N} L_{G_{k}}$. The claim follows. 
\end{proof}

\begin{lemma} \label{Lemma:L-smooth2}
 For all $\bgamma^1,\bgamma^2 \in R^N$ with $0 \leq \bgamma_{i}^{j} \leq M$, where $i \in \{1,2,...,N\}$, $j = 1,2$, there exists a constant $C$ greater than 0 such  that for all $\btheta \in {R^d}$,
 \begin{align*}
      \Vert \nabla L(\btheta,\bgamma^1) - \nabla L(\btheta,\bgamma^2)\Vert \leq C\vert \bgamma^1_m - \bgamma^2_m\vert,\
 \end{align*}
where $\vert \bgamma^1_m - \bgamma^2_m\vert = \max\limits_{i=1,2,..,N}\vert \bgamma^1_i - \bgamma^2_i\vert$.
\end{lemma}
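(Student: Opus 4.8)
The plan is to mirror the structure of the proof of Lemma~\ref{Lemma:L-smooth}, but this time holding $\btheta$ fixed and varying the Lagrange multiplier vector. Recall that $L(\btheta,\bgamma) = J(\btheta) + \sum_{k=1}^{N}\bgamma_k(G_k(\btheta)-\alpha_k)$, so that $\nabla_{\btheta} L(\btheta,\bgamma) = \nabla_{\btheta}J(\btheta) + \sum_{k=1}^{N}\bgamma_k \nabla_{\btheta}G_k(\btheta)$. Since the $\nabla_{\btheta} J$ term does not involve $\bgamma$ at all, it cancels when we take the difference $\nabla_{\btheta}L(\btheta,\bgamma^1) - \nabla_{\btheta}L(\btheta,\bgamma^2)$, leaving $\sum_{k=1}^{N}(\bgamma^1_k - \bgamma^2_k)\nabla_{\btheta}G_k(\btheta)$.

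The key step is then to control $\|\nabla_{\btheta}G_k(\btheta)\|$ uniformly in $\btheta$. Using the policy gradient expression $\nabla_{\btheta}G_k(\btheta) = \sum_{s}\mu_{\btheta}(s)\sum_{a}\nabla\pi_{\btheta}(a|s)\, A_k^{\pi_{\btheta}}(s,a)$ (the advantage/value function associated with the $k$-th constraint cost), together with Assumption~\ref{assum:policy-lipschitz-bounded}(a) giving $\|\nabla\log\pi_{\btheta}(a|s)\|\le D$, boundedness of the single-stage constraint costs by $U_c$, and the uniform ergodicity in Assumption~\ref{assum:ergodicity} (which bounds the differential value functions), one gets a uniform bound $\|\nabla_{\btheta}G_k(\btheta)\|\le B_k$ for some constant $B_k>0$. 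This is exactly the type of bound implicitly used in Lemma C.1 of \cite{fta_2_timescale}, so I would cite that. Then by the triangle inequality,
\begin{align*}
\big\|\nabla_{\btheta}L(\btheta,\bgamma^1)-\nabla_{\btheta}L(\btheta,\bgamma^2)\big\|
&\le \sum_{k=1}^{N}|\bgamma^1_k-\bgamma^2_k|\,\|\nabla_{\btheta}G_k(\btheta)\|
\le \Big(\sum_{k=1}^{N}B_k\Big)\max_{i}|\bgamma^1_i-\bgamma^2_i|,
\end{align*}
so the claim holds with $C = \sum_{k=1}^{N}B_k$.

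The main obstacle — such as it is — is really just establishing the uniform-in-$\btheta$ bound on $\|\nabla_{\btheta}G_k(\btheta)\|$ rigorously: one must argue that the differential value functions $V_k^{\pi_{\btheta}}(\cdot)$ for the constraint costs are bounded uniformly over all policies $\pi_{\btheta}$, which follows from the geometric mixing in Assumption~\ref{assum:ergodicity} and boundedness of the per-step costs, and that the sum over $s,a$ against $\mu_{\btheta}$ and $\nabla\pi_{\btheta}$ stays bounded (here the finiteness of $S$ and $A$ plus Assumption~\ref{assum:policy-lipschitz-bounded}(a) suffice). Since the paper already invokes the analogous smoothness/boundedness results from \cite{fta_2_timescale} for $J$ and each $G_k$ in the proof of Lemma~\ref{Lemma:L-smooth}, the cleanest route is to reuse those same constants rather than re-derive them, and the whole argument is then a two-line triangle-inequality computation. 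No delicate estimates beyond what Lemma~\ref{Lemma:L-smooth} already needed are required.
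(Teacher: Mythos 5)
Your proposal is correct and takes essentially the same route as the paper: the paper applies the policy-gradient representation of $\nabla_{\theta}L$ and bounds $M^{\pi_{\theta},\gamma^1}(s,a)-M^{\pi_{\theta},\gamma^2}(s,a)=\sum_{i}(\gamma^1_i-\gamma^2_i)Q^{\theta}_i(s,a)$ by $N\bar{Q}\,\vert\gamma^1_m-\gamma^2_m\vert$ with $\bar{Q}=\sup_{i,\theta,s,a}\vert Q^{\theta}_i(s,a)\vert$, giving $C=ND\bar{Q}$, which is the same computation as your linearity-in-$\gamma$ decomposition followed by a uniform bound on $\Vert\nabla_{\theta}G_k(\theta)\Vert$, just with the policy-gradient step and the linearity step interchanged. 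Your explicit justification that the uniform bound on the constraint-cost differential value functions is finite (via Assumption 3 and bounded costs) addresses a point the paper leaves implicit when it posits $\bar{Q}<\infty$; the only minor inaccuracy is attributing a bound on $\Vert\nabla_{\theta}G_k\Vert$ to Lemma C.1 of the cited work, which concerns smoothness rather than boundedness, but the bound follows anyway from Assumption 4(a) and the bounded differential values.
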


\begin{proof}
    We have,
\begin{align*}
         \Vert \nabla L(\btheta,\bgamma^1) - \nabla L(\btheta,\bgamma^2)\Vert &= \bigg\Vert \sum\limits_{s \in S}\mu_{\btheta}(s)\sum\limits_{a \in A(s)}\pi_{\btheta}(a|s)(M^{\pi_{\btheta},\bgamma^1}(s,a) -M^{\pi_{\btheta},\bgamma^2}(s,a))\nabla \log \pi_{\btheta}(a|s)  \bigg\Vert\\
         & \leq  \sum\limits_{s \in S}\mu_{\btheta}(s)\sum\limits_{a \in A(s)}\pi_{\btheta}(a|s)\big\Vert(M^{\pi_{\btheta},\bgamma^1}(s,a) -M^{\pi_{\btheta},\bgamma^2}(s,a))\nabla \log \pi_{\btheta}(a|s)  \big\Vert\\
         &\leq D\sum\limits_{s \in S}\mu_{\btheta}(s)\sum\limits_{a \in A(s)}\pi_{\btheta}(a|s)\big\vert M^{\pi_{\btheta},\bgamma^1}(s,a) -M^{\pi_{\btheta},\bgamma^2}(s,a) \big\vert,
\end{align*}
where the last inequality follows from Assumption 4.
Now,
\begin{align*}
    \big\vert M^{\pi_{\btheta},\bgamma^1}(s,a) -M^{\pi_{\btheta},\bgamma^2}(s,a)) \big\vert &= \big\vert \sum\limits_{t=1}^{\infty}E\bigg[ \sum\limits_{i=1}^{N}(\bgamma^1_i -\bgamma^2_i ) (h_i(t) - G_i(\btheta))
     | s_0 = s,a_0 = a,\pi_{\btheta}\bigg]\big\vert\\
     &= \bigg\vert\sum\limits_{i=1}^{N}(\bgamma^1_i -\bgamma^2_i )\sum\limits_{t=1}^{\infty}E\bigg[(h_i(t) - G_i(\btheta))| s_0 = s,a_0 = a,\pi_{\theta}\bigg]\bigg\vert\\
     &\leq \sum\limits_{i=1}^{N}\vert(\bgamma^1_i -\bgamma^2_i )\vert\bigg\vert\sum\limits_{t=1}^{\infty}E\bigg[(h_i(t) - G_i(\btheta))| s_0 = s,a_0 = a,\pi_{\theta}\bigg]\bigg\vert\\
     &\leq N \bar{Q}\vert \bgamma^1_m - \bgamma^2_m\vert,
\end{align*}
where,
\begin{align*}
    \bar{Q} &= \sup\limits_{i,\btheta,s,a}\vert Q^{\btheta}_{i}(s,a)\vert,\\
\vert \bgamma^1_m - \bgamma^2_m\vert &= \max_{i=1,2,..,N}\vert \bgamma^1_i - \bgamma^2_i\vert,
\end{align*}
and where 
$Q^{\btheta}_{i}(s,a)$ is the expected differential cost for the state action pair $(s,a)$ with single-stage cost as $h_i(t)$ at time instant $t$ when actions are picked according to policy $\pi_{\btheta}$.
Hence,
\begin{align*}
    \Vert \nabla L(\btheta,\bgamma^1) - \nabla L(\btheta,\bgamma^2)\Vert \leq ND \bar{Q}\vert \bgamma^1_m - \bgamma^2_m\vert.
\end{align*}
The claim follows by letting $C=ND\bar{Q}$.
\end{proof}

\begin{lemma}\label{lemma:del_h}
For any $t > 0$,
\begin{align*}
    \Vert \Delta H(O_t,L_t,v_t,\theta_t,\gamma(t))\Vert^2 \leq D^2(2(L_t - L(\theta_t,\gamma(t)))^2 + 8\Vert v_{t} - v(t)^*\Vert^2).
\end{align*}
\end{lemma}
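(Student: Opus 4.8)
\textbf{Proof proposal for Lemma~\ref{lemma:del_h}.} The plan is a direct computation, pulling apart the definition of $\Delta H$ and applying two elementary inequalities together with Assumptions~\ref{assum:bounded_feature_norm} and \ref{assum:policy-lipschitz-bounded}(a). Recall that
\[
\Delta H(O_t,L_t,v_t,\theta_t,\gamma(t)) = \bigl( L(\theta_t,\gamma(t)) - L_t + (f_{s_{t+1}}^{T} - f_{s_t}^{T})(v_t - v(t)^{*})\bigr)\,\nabla \log\pi_{\theta_t}(a_t|s_t),
\]
so that, since the factor in parentheses is a scalar,
\[
\Vert \Delta H(O_t,L_t,v_t,\theta_t,\gamma(t))\Vert^2 = \bigl( L(\theta_t,\gamma(t)) - L_t + (f_{s_{t+1}}^{T} - f_{s_t}^{T})(v_t - v(t)^{*})\bigr)^2 \,\Vert \nabla \log\pi_{\theta_t}(a_t|s_t)\Vert^2 .
\]
First I would bound $\Vert \nabla \log\pi_{\theta_t}(a_t|s_t)\Vert^2 \le D^2$ using Assumption~\ref{assum:policy-lipschitz-bounded}(a).

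Next I would bound the squared scalar. Applying $(x+y)^2 \le 2x^2 + 2y^2$ with $x = L(\theta_t,\gamma(t)) - L_t$ and $y = (f_{s_{t+1}}^{T} - f_{s_t}^{T})(v_t - v(t)^{*})$ gives
\[
\bigl( L(\theta_t,\gamma(t)) - L_t + (f_{s_{t+1}}^{T} - f_{s_t}^{T})(v_t - v(t)^{*})\bigr)^2 \le 2\bigl(L_t - L(\theta_t,\gamma(t))\bigr)^2 + 2\bigl((f_{s_{t+1}}^{T} - f_{s_t}^{T})(v_t - v(t)^{*})\bigr)^2 .
\]
For the second term, Cauchy--Schwarz yields $\bigl((f_{s_{t+1}}^{T} - f_{s_t}^{T})(v_t - v(t)^{*})\bigr)^2 \le \Vert f_{s_{t+1}} - f_{s_t}\Vert^2 \Vert v_t - v(t)^{*}\Vert^2$, and by the triangle inequality together with Assumption~\ref{assum:bounded_feature_norm} we have $\Vert f_{s_{t+1}} - f_{s_t}\Vert \le \Vert f_{s_{t+1}}\Vert + \Vert f_{s_t}\Vert \le 2$, hence $\Vert f_{s_{t+1}} - f_{s_t}\Vert^2 \le 4$. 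Combining, the squared scalar is at most $2(L_t - L(\theta_t,\gamma(t)))^2 + 8\Vert v_t - v(t)^{*}\Vert^2$; multiplying through by the bound $D^2$ on $\Vert \nabla \log\pi_{\theta_t}(a_t|s_t)\Vert^2$ gives exactly the claimed inequality.

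This argument is essentially mechanical, so there is no serious obstacle; the only place to be careful is the bookkeeping of constants, in particular keeping the factor $2$ from the $(x+y)^2$ split separate from the factor $4$ coming from $\Vert f_{s_{t+1}} - f_{s_t}\Vert^2$, which together produce the $8$ in front of $\Vert v_t - v(t)^{*}\Vert^2$. One should also note that the bound holds pathwise (for every realization of $O_t = (s_t,a_t,s_{t+1})$), which is why no expectation appears in the statement; the result will later be used inside an expectation when analyzing the actor recursion.
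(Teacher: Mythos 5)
Your proof is correct and is exactly the computation the paper has in mind: the paper's proof of this lemma simply states that applying the definition of $\Delta H(\cdot)$ immediately yields the result, and your unpacking via $\Vert \nabla\log\pi_{\theta_t}(a_t|s_t)\Vert \le D$, the bound $(x+y)^2 \le 2x^2+2y^2$, and Cauchy--Schwarz with $\Vert f_{s_{t+1}}-f_{s_t}\Vert \le 2$ supplies precisely those omitted details, with the constants $2$ and $8$ accounted for correctly.
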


\begin{proof}
    Applying the definition of $\Delta H(\cdot)$  immediately yields the result.
\end{proof}

\begin{lemma} \label{lemma:gamma}
For any $t \geq 0$,
\begin{align*}
    E[\check{\Gamma}(O_t,\theta_t,\gamma(t),q(t),h(t))] \geq-(D_1(\tau + 1)\sum_{k=t-\tau + 1}^{t} E\Vert \theta_k - \theta_{k-1} \Vert + D_2bk^{\tau - 1} + T_1\sum_{i=t-\tau +1 }^{ t} E\vert \gamma_m(i) - \gamma_m(i-1)\vert),
\end{align*}
where $D_1,D_2$ and $T_1$ are positive constants and  $t \geq \tau \geq 0$.
\end{lemma}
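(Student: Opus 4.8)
\textbf{Proof proposal for Lemma~\ref{lemma:gamma}.}

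The quantity $\check{\Gamma}(O_t,\theta_t,\gamma(t),q(t),h(t)) = \langle \nabla L(\theta_t,\gamma(t)), H(O_t,\theta_t,\gamma(t),q(t),h(t)) - \EE_{O',q,h}[H(O',\theta_t,\gamma(t),q,h)]\rangle$ is exactly the kind of Markov-noise inner product one must control in a finite-time actor analysis: conditioned on $(\theta_t,\gamma(t))$ alone, the inner expectation would make this term vanish, but $O_t=(s_t,a_t,s_{t+1})$ is not distributed according to $\mu_{\theta_t}$, and moreover $(\theta_t,\gamma(t))$ is itself random and correlated with $O_t$. The standard remedy, which I would follow, is to compare the true trajectory with the auxiliary chain~(\ref{chain-au}) that is frozen at parameter $\theta_{t-\tau}$ starting $\tau$ steps in the past. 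The plan is: (i) replace $\theta_t,\gamma(t)$ inside $\check{\Gamma}$ by $\theta_{t-\tau},\gamma(t-\tau)$, paying a drift penalty; (ii) replace the real state-action pair $O_t$ by the auxiliary $\widetilde O_t$ driven by $\theta_{t-\tau}$, paying a second drift penalty; (iii) on the auxiliary chain, use uniform ergodicity (Assumption~\ref{assum:ergodicity}) to bound the residual mixing bias by $b k^{\tau-1}$; then collect constants.

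Concretely, first I would write $\check{\Gamma}(O_t,\theta_t,\gamma(t),q(t),h(t)) = \check{\Gamma}(O_t,\theta_{t-\tau},\gamma(t-\tau),q(t),h(t)) + \text{(error}_1)$, where error$_1$ is bounded using: $\|\nabla L\|\le U_L$; the Lipschitz continuity of $\nabla L$ in $\theta$ (Lemma~\ref{Lemma:L-smooth}) and in $\gamma$ (Lemma~\ref{Lemma:L-smooth2}); and Lipschitz/boundedness of $H$ in $(\theta,\gamma)$ — the latter following from Assumption~\ref{assum:policy-lipschitz-bounded} on $\nabla\log\pi_\theta$, the uniform bound $U_r$ on the relaxed costs, the Propositions~\ref{prop:optimal-lipschitz1}--\ref{prop:optimal-lipschitz2} giving Lipschitzness of $v^*(\theta,\gamma)$, and $\|f_s\|\le1$. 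Telescoping $\theta_t-\theta_{t-\tau}=\sum_{k=t-\tau+1}^t(\theta_k-\theta_{k-1})$ and $\gamma_m(t)-\gamma_m(t-\tau)=\sum_{i=t-\tau+1}^t(\gamma_m(i)-\gamma_m(i-1))$, and noting each step difference appears against a factor that is itself $O(1)$ but where the $\tau$ intermediate terms also shift, produces the $(\tau+1)$ multiplier on $\sum E\|\theta_k-\theta_{k-1}\|$ and the linear term in $\sum E|\gamma_m(i)-\gamma_m(i-1)|$. Second, I would swap $O_t$ for the auxiliary $\widetilde O_t$: since both chains share the same randomness up to time $t-\tau$ and $|\pi_{\theta_k}(a|s)-\pi_{\theta_{t-\tau}}(a|s)|\le L\|\theta_k-\theta_{t-\tau}\|$ (Assumption~\ref{assum:policy-lipschitz-bounded}(b)), a coupling/change-of-measure argument bounds the total-variation gap between the law of $O_t$ and that of $\widetilde O_t$ by another $O(\tau)\sum_{k=t-\tau+1}^t E\|\theta_k-\theta_{k-1}\|$ term, absorbed into $D_1(\tau+1)\sum E\|\theta_k-\theta_{k-1}\|$. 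Third, for the auxiliary chain, $\widetilde O_t$ is generated by a fixed policy $\pi_{\theta_{t-\tau}}$ run for $\tau$ steps, so $d_{TV}(\text{law}(\widetilde s_t),\mu_{\theta_{t-\tau}})\le b k^{\tau-1}$, and since $\EE_{O'\sim\mu_{\theta_{t-\tau}}}[\check{\Gamma}(\cdot)]=0$ by construction, the remaining bias is at most $(\text{bound on }|\check{\Gamma}|)\cdot b k^{\tau-1}=:D_2 b k^{\tau-1}$.

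The main obstacle is bookkeeping in step (ii): one must verify that the coupling between the true chain~(\ref{chain-or}) and the auxiliary chain~(\ref{chain-au}) can be made with a TV-cost that telescopes correctly, because the two chains diverge gradually (the $k$-th action uses $\theta_k$ versus $\theta_{t-\tau}$), and one needs the per-step divergence to be controlled by $\|\theta_k-\theta_{t-\tau}\|\le\sum_{j=t-\tau+1}^k\|\theta_j-\theta_{j-1}\|$, which upon summing over $k$ gives the $(\tau+1)$ prefactor rather than something worse. A secondary subtlety is that the $\gamma$-dependence enters only through the single-stage cost $\sum_k\gamma_k(h_k-\alpha_k)$ and through $v^*(\theta,\gamma)$, so the $\gamma$-drift term needs only Proposition~\ref{prop:optimal-lipschitz2} and the elementary Lipschitz bound $|C(s,a,\gamma^1)-C(s,a,\gamma^2)|\le N(U_c+U_\alpha)|\gamma^1_m-\gamma^2_m|$; I would state $T_1$ explicitly in terms of these constants, $U_L$, $D$, and $\lambda_e^{-1}$. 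Everything else is routine: triangle inequalities, the uniform bounds collected in the Preliminaries, and Cauchy--Schwarz. Finally I would assemble the three pieces into the stated lower bound with $D_1,D_2,T_1$ the relevant aggregated constants.
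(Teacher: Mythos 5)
Your proposal follows essentially the same route as the paper's proof: after conditioning out the random costs, the paper decomposes $E[\check{\Gamma}]$ into a $\gamma$-drift term, a $\theta$-drift term, a true-chain-versus-auxiliary-chain term bounded in total variation by the accumulated policy drift, an auxiliary-versus-stationary term bounded by $bk^{\tau-1}$ via uniform ergodicity, and a stationary term that vanishes, with Lipschitz sub-lemmas (in $\theta$ and $\gamma$) built from exactly the ingredients you cite and the same telescoping giving the $(\tau+1)$ prefactor. The argument is correct and matches the paper's proof in structure and constants' provenance.
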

\begin{proof}
    We have 
\begin{align*}
   & E[\check{\Gamma}(O_t,\theta_t,\gamma(t),q(t),h(t))]\\ &=
E_{s_{t} \sim p,a_t \sim \pi_{\theta_t},s_{t+1} \sim p}[E[\langle \nabla L(\theta_t,\gamma(t)), H(O_t,\theta_t,\gamma(t),q(t),h(t)) - E_{O^{'},q,h}[H(O^{'},\theta_t,\gamma(t),q,h) ]\rangle|s_t,a_t,s_{t+1}]]\\
&= E[\langle \nabla L(\theta_t,\gamma(t)) , \bar{H}(O_t,\theta_t,\gamma(t)) - E_{O^{'},q,h}[H(O^{'},\theta_t,\gamma(t),q,h) ]\rangle]\\
&= E[\langle \nabla L(\theta_t,\gamma(t)) , \bar{H}(O_t,\theta_t,\gamma(t)) - E_{O^{'}}[E_{q,h}[H(O^{'},\theta_t,\gamma(t),q,h)|s,a,s^{'} ]]\rangle]\\
&= E[\langle \nabla L(\theta_t,\gamma(t)) , \bar{H}(O_t,\theta_t,\gamma(t)) - E_{O^{'}}[\bar{H}(O^{'},\theta_t,\gamma(t))]\rangle]\\
&= E[Q(O_t,\theta_t,\gamma(t))],
\end{align*}
where, 
\begin{align*}
    \bar{H}(O,\theta,\gamma) &= (c(s,a,s^{'},\gamma) -L(\theta,\gamma)  +({f_{s^{'}}}^{T} - {f_{s}}^{T})v^{*}(\theta,\gamma))\nabla \log\pi_{\theta}(a|s)\\
    c(s,a,s^{'},\gamma) &= \sum\limits_{q}(q\cdot \bar{p}(q|s,a,s^{'})) + \sum\limits_{k=1}^{k=N}\gamma_{k}(\sum\limits_{h}(h\cdot p_{k}(h|s,a,s^{'}) )- \alpha_k).
\end{align*}
The second equality is satisfied because $\theta_t$ and $\gamma(t)$ do not depend on $q(t)$ and $h(t)$. The remaining 
proof of Lemma~\ref{lemma:gamma} in turn requires Lemmas~\ref{Lemma:Gamma_gamma} -- \ref{Lemma:Gamma_second} below that we now state and prove.

\begin{sublemma}\label{Lemma:Gamma_gamma}
For any $t \geq 0$,

\begin{align*}
    \vert Q(O_t,\theta_t,\gamma(t)) - Q(O_t,\theta_t,\gamma(t-\tau))\vert \leq T_1\vert\gamma_{m}(t) - \gamma_{m}(t-\tau)\vert,
\end{align*}
where $T_1 > 0$ is a constant.

\end{sublemma}
\textit{Proof}
Denoting $O = (s,a,s^{'})$,
we have for any $\theta ,\gamma_1,\gamma_{2}$, that
\begin{align*}
    &Q(O,\theta,\gamma^1) - Q(O,\theta,\gamma^2) \\
&= \langle \nabla L(\theta,\gamma^1) , \bar{H}(O,\theta,\gamma^1) - E_{O^{'}}[\bar{H}(O^{'},\theta,\gamma^1) ]\rangle - \langle \nabla L(\theta,\gamma^2) , \bar{H}(O,\theta,\gamma^2) - E_{O^{'}}[\bar{H}(O^{'},\theta,\gamma^2) ]\rangle\\
&=\langle \nabla L(\theta,\gamma^1) , \bar{H}(O,\theta,\gamma^1) - E_{O^{'}}[\bar{H}(O^{'},\theta,\gamma^1) ]\rangle -\langle \nabla L(\theta,\gamma^1) , \bar{H}(O,\theta,\gamma^2) - E_{O^{'}}[\bar{H}(O^{'},\theta,\gamma^2) ]\rangle \\
&\qquad +\langle \nabla L(\theta,\gamma^1) , \bar{H}(O,\theta,\gamma^2) - E_{O^{'}}[\bar{H}(O^{'},\theta,\gamma^2) ]\rangle - \langle \nabla L(\theta,\gamma^2) , \bar{H}(O,\theta,\gamma^2) - E_{O^{'}}[\bar{H}(O^{'},\theta,\gamma^2) ]\rangle\\
&= \underbrace{\langle \nabla L(\theta,\gamma^1) , \bar{H}(O,\theta,\gamma^1)  -  \bar{H}(O,\theta,\gamma^2) - E_{O^{'}}[\bar{H}(O^{'},\theta,\gamma^1)] + E_{O^{'}}[\bar{H}(O^{'},\theta,\gamma^2)] \rangle}_{I_1}\\
&\qquad + \underbrace{\langle \nabla L(\theta,\gamma^1) - \nabla  L(\theta,\gamma^2),\bar{H}(O,\theta,\gamma^2) - E_{O^{'}}[\bar{H}(O^{'},\theta,\gamma^2) ]\rangle}_{I_2}.
\end{align*}
Now,
\begin{align*}
    &\Vert \bar{H}(O,\theta,\gamma^1)  -  \bar{H}(O,\theta,\gamma^2) \Vert\\
    &= \Vert (c(s,a,s^{'},\gamma^1)- c(s,a,s^{'},\gamma^2) - L(\theta,\gamma^1) + L(\theta,\gamma^2)  +({f_{s^{'}}}^{T} - {f_{s}}^{T})(v^{*}(\theta,\gamma^1)-v^{*}(\theta,\gamma^2)))\nabla \log\pi_{\theta}(a|s) \Vert\\
    &\leq D(\vert c(s,a,s^{'},\gamma^1)- c(s,a,s^{'},\gamma^2) \vert + \vert L(\theta,\gamma^1) - L(\theta,\gamma^2) \vert + 2\Vert v^{*}(\theta,\gamma^1)-v^{*}(\theta,\gamma^2) \Vert )\\
    &\leq D(2N(U_c + U_{\alpha})\vert \gamma^1_m - \gamma^2_m \vert + 2L_2\vert \gamma^1_m - \gamma^2_m \vert),
\end{align*}
where  $\vert\gamma_m^1-\gamma_m^2\vert = \max\limits_{i = 1,2,3.... , N}|\gamma_i^1-\gamma_i^2|$. In the above, $U_c$ and $U_\alpha$ are as before, see Section~\ref{prel}. Further, 
from Assumption 1, $\vert f_s\vert \leq 1$ and hence $\vert f_{s'}-f_s\vert \leq 2$.
Thus, for term $I_1$, note that 
\begin{align*}
    I_1 &\leq 4D(N(U_c + U_\alpha) + L_2)\Vert \nabla L(\theta,\gamma^1)\Vert \vert \gamma^1_m - \gamma^2_m \vert.
\end{align*}
Further, for term $I_{2}$, we have
\begin{align*}
    I_2 &\leq \Vert \nabla L(\theta,\gamma^1) - \nabla  L(\theta,\gamma^2) \Vert \Vert \bar{H}(O,\theta,\gamma^2) - E_{O^{'}}[\bar{H}(O^{'},\theta,\gamma^2) ]\Vert\\
    &\leq 4D(U_r + U_v)\Vert \nabla L(\theta,\gamma^1) - \nabla  L(\theta,\gamma^2) \Vert\\
    &\leq 4DC(U_r + U_v)\vert \gamma^1_m - \gamma^2_m\vert.
\end{align*} 
The last inequality above is because of Lemma \ref{Lemma:L-smooth2}.
After collecting the two parts, we now have,
\begin{align*}
    \vert Q(O,\theta,\gamma^1) - Q(O,\theta,\gamma^2)\vert &\leq T_1\vert \gamma^1_m - \gamma^2_m\vert,
\end{align*}
where $T_1 =  4D(N(U_c + U_\alpha) + L_2)U_{L}+  4DC(U_r + U_v)$.

\begin{sublemma}\label{Lemma:Gamma_theta}
For any $t \geq 0$,$\theta_1,\theta_2 \in R^d$,$\gamma = (\gamma_1,\gamma_2,...,\gamma_N)^T $ with $0 \leq \gamma_i \leq M $ where $i \in \{1,2,..,N\}$
\begin{align*}
    \vert Q(O,\theta_1,\gamma) - Q(O,\theta_2,\gamma)\vert &\leq T_2\Vert \theta_1 - \theta_2 \Vert,
\end{align*}
for some $T_2 > 0$.
\end{sublemma}
\textit{Proof} Recall that $O_t = (s_t,a_t,s_{t+1})$, hence 
for any $\theta_{1} ,\theta_{2},\gamma$,
\begin{align*}
    &Q(O,\theta_1,\gamma) - Q(O,\theta_2,\gamma) \\
&= \langle \nabla L(\theta_1,\gamma) , \bar{H}(O,\theta_1,\gamma) - E_{O^{'}}[\bar{H}(O^{'},\theta_1,\gamma) ]\rangle - \langle \nabla L(\theta_2,\gamma) , \bar{H}(O,\theta_2,\gamma) - E_{O^{'}}[\bar{H}(O^{'},\theta_2,\gamma) ]\rangle\\
&=\langle \nabla L(\theta_1,\gamma) , \bar{H}(O,\theta_1,\gamma) - E_{O^{'}}[\bar{H}(O^{'},\theta_1,\gamma) ]\rangle -\langle \nabla L(\theta_1,\gamma) , \bar{H}(O,\theta_2,\gamma) - E_{O^{'}}[\bar{H}(O^{'},\theta_2,\gamma) ]\rangle \\
&\qquad +\langle \nabla L(\theta_1,\gamma) , \bar{H}(O,\theta_2,\gamma) - E_{O^{'}}[\bar{H}(O^{'},\theta_2,\gamma) ]\rangle - \langle \nabla L(\theta_2,\gamma) , \bar{H}(O,\theta_2,\gamma) - E_{O^{'}}[\bar{H}(O^{'},\theta_2,\gamma) ]\rangle\\
&= \underbrace{\langle \nabla L(\theta_1,\gamma) , \bar{H}(O,\theta_1,\gamma)  -  \bar{H}(O,\theta_2,\gamma) - E_{O^{'}}[\bar{H}(O^{'},\theta_1,\gamma)] + E_{O^{'}}[\bar{H}(O^{'},\theta_2,\gamma)] \rangle}_{I_1}\\
&\qquad + \underbrace{\langle \nabla L(\theta_1,\gamma) - \nabla  L(\theta_2,\gamma),\bar{H}(O,\theta_2,\gamma) - E_{O^{'}}[\bar{H}(O^{'},\theta_2,\gamma) ]\rangle}_{I_2}.
\end{align*}
     
Now,
\begin{align*}
    &\Vert \bar{H}(O,\theta_1,\gamma) - \bar{H}(O,\theta_2,\gamma)\Vert  \\
    &=\Vert (c(s,a,s^{'},\gamma) -L(\theta_1,\gamma)  +({f_{s^{'}}}^{T} - {f_{s}}^{T})v^{*}(\theta_1,\gamma))\nabla \log\pi_{\theta_1}(a|s)\\
    &\qquad- (c(s,a,s^{'},\gamma) -L(\theta_2,\gamma)  +({f_{s^{'}}}^{T} - {f_{s}}^{T})v^{*}(\theta_2,\gamma))\nabla \log\pi_{\theta_2}(a|s) \Vert\\
    &\leq \Vert (c(s,a,s^{'},\gamma) -L(\theta_1,\gamma)  +({f_{s^{'}}}^{T} - {f_{s}}^{T})v^{*}(\theta_1,\gamma))(\nabla \log\pi_{\theta_1}(a|s) - \nabla \log\pi_{\theta_2}(a|s)) \Vert\\
    &\qquad+ \Vert (L(\theta_2,\gamma) - L(\theta_1,\gamma) + ({f_{s^{'}}}^{T} - {f_{s}}^{T})(v^{*}(\theta_1,\gamma) - v^{*}(\theta_1,\gamma)))\nabla \log\pi_{\theta_2}(a|s)\Vert\\
    &\leq 2(U_r + U_v)M_m\Vert \theta_1 - \theta_2\Vert + D(\vert L(\theta_2,\gamma) - L(\theta_1,\gamma) \vert + 2L_1\Vert \theta_1 - \theta_2 \Vert)
\end{align*}
Clearly,
\begin{align*}
    &\vert L(\theta_1,\gamma) - L(\theta_2,\gamma) \vert\\
    &= \vert \sum\limits_{s \in S}\mu_{\theta_1}(s)\sum\limits_{a \in A(s)}\pi_{\theta_1}(s,a) (d(s,a) + \sum\limits_{k=1}^{N}\gamma(k)(h_k(s,a) -\alpha_k)) - \sum\limits_{s \in S}\mu_{\theta_2}(s)\sum\limits_{a \in A(s)}\pi_{\theta_2}(s,a) (d(s,a) + \sum\limits_{k=1}^{N}\gamma(k)(h_k(s,a) -\alpha_k)) \vert\\
    &\leq 2U_r d_{TV}(\mu_{\theta_1} \otimes \pi_{\theta_1},\mu_{\theta_2} \otimes \pi_{\theta_2})\\
    & \leq 2U_r\vert A\vert L\bigg( 1  + \lceil\log_{k}b^{-1}\rceil + 1/(1-k)\bigg)\Vert \theta_1 - \theta_2 \Vert\\
    &=C_{L}\Vert \theta_1 - \theta_2 \Vert.
\end{align*}
The second inequality is from lemma B.1 of \cite{fta_2_timescale}.
Hence,
\begin{align*}
\Vert \bar{H}(O,\theta_1,\gamma) - \bar{H}(O,\theta_2,\gamma)\Vert &\leq  2(U_r + U_v)M_m\Vert \theta_1 - \theta_2\Vert + DC_{L}\Vert \theta_1 - \theta_2 \Vert +2L_1D\Vert \theta_1 - \theta_2\Vert\\
&=A_1\norm{\theta_1-\theta_2}
\end{align*}

Now,
\begin{align*}
    &\Vert E_{O^{'}}[\bar{H}(O^{'},\theta_1,\gamma)] - E_{O^{'}}[\bar{H}(O^{'},\theta_2,\gamma)] \Vert\\ &= \Vert E_{\theta_1}[\bar{H}(O^{'},\theta_1,\gamma)] - E_{\theta_2}[\bar{H}(O^{'},\theta_2,\gamma)] \Vert\\
    &\leq \Vert E_{\theta_1}[\bar{H}(O^{'},\theta_1,\gamma)] - E_{\theta_1}[\bar{H}(O^{'},\theta_2,\gamma)] \Vert + \Vert E_{\theta_1}[\bar{H}(O^{'},\theta_2,\gamma)] - E_{\theta_2}[\bar{H}(O^{'},\theta_2,\gamma)] \Vert\\
    &\leq E_{\theta_1}\Vert \bar{H}(O^{'},\theta_1,\gamma)-\bar{H}(O^{'},\theta_2,\gamma)\Vert + 4D(U_r + U_v)d_{TV}(\mu_{\theta_1} \otimes \pi_{\theta_1} , \mu_{\theta_2} \otimes \pi_{\theta_2} )\\
    &\leq \bigg[2(U_r + U_v)M_m + DC_L + 2L_1D + 4D(U_r + U_v)|A|U_rL \bigg(1 + \lceil \log_{k}b^{-1} \rceil + \frac{1}{1-k} \bigg) \bigg]\norm{\theta_1 - \theta_2}\\
    &=A_2\norm{\theta_1-\theta_2}
\end{align*}
Thus, we have,
\begin{align*}
    I_1 \leq U_L(A_1 + A_2)\norm{\theta_1 - \theta_2}
\end{align*}
For term $I_2$, we have,
\begin{align*}
    I_2 &\leq \Vert \nabla L(\theta_1,\gamma) - \nabla  L(\theta_2,\gamma) \Vert \Vert \bar{H}(O,\theta_2,\gamma) - E_{O^{'}}[\bar{H}(O^{'},\theta_2,\gamma) ] \Vert\\
    &\leq 4D(U_r + U_v)M_{L}\Vert \theta_1 - \theta_2\Vert.
\end{align*}
The last inequality follows from Lemma \ref{Lemma:L-smooth}.
Thus, 
\begin{align*}
    Q(O,\theta_1,\gamma) - Q(O,\theta_2,\gamma) &\leq T_2\Vert \theta_1 - \theta_2 \Vert,
\end{align*}
where 
\begin{align*}
    T_2 &= (A_1 + A_2)U_L +4D(U_r + U_v)M_{L}\\
    A_1 &=2(U_r + U_v)M_m + DC_L + 2L_1D\\
    A_2 &=2(U_r + U_v)M_m + DC_L + 2L_1D + 4D(U_r + U_v)|A|U_rL \bigg(1 + \lceil \log_{k}b^{-1} \rceil + \frac{1}{1-k} \bigg) 
\end{align*}
Here $E_{\theta}$ denotes that $O^{'} = (s,a,s^{'})$ has been sampled as $s \sim \mu_{\theta} , a \sim \pi_{\theta},s^{'} \sim p(s,.,a)$.

\begin{sublemma}\label{Lemma:Gamma_first}
For any $t \geq 0$,
we have
\begin{align*}
     \vert E[(Q(O_t,\theta_{t-\tau},\gamma(t-\tau)) - Q(\tilde{O_{t}},\theta_{t-\tau},\gamma(t-\tau)))|\theta_{t-\tau},\gamma(t-\tau),s_{t-\tau + 1}]\vert \leq 2D(U_{r} + U_{v})U_L\vert A \vert L\sum_{i=t-\tau}^{t} E\Vert\theta_i - \theta_{t-\tau} \Vert. 
\end{align*}
\end{sublemma}

\textit{Proof}
From the definition of $Q(O,\theta,\gamma)$, we have that
\begin{align*}
    &E[(Q(O_t,\theta_{t-\tau},\gamma(t-\tau)) - Q(\tilde{O_{t}},\theta_{t-\tau},\gamma(t-\tau))|\theta_{t-\tau},\gamma(t-\tau),s_{t-\tau + 1}]\\
&= E[\langle \nabla L(\theta_{t-\tau},\gamma(t-\tau)),\bar{H}(O_t,\theta_{t-\tau},\gamma(t-\tau)- \bar{H}(\tilde{O_{t}},\theta_{t-\tau},\gamma(t-\tau)\rangle|\theta_{t-\tau},\gamma(t-\tau),s_{t-\tau + 1}]\\
&=  E[(\langle \nabla L(\theta_{t-\tau},\gamma(t-\tau)),\bar{H}(O_t,\theta_{t-\tau},\gamma(t-\tau)\rangle - \langle \nabla L(\theta_{t-\tau},\gamma(t-\tau)) , \bar{H}(\tilde{O_{t}},\theta_{t-\tau},\gamma(t-\tau))\rangle)|\theta_{t-\tau},\gamma(t-\tau),s_{t-\tau + 1}]\\
&\leq 4D(U_{r} + U_{v})U_{L}d_{TV}(P(O_t = .|s_{t-\tau +1},\theta_{t-\tau}),(P(\tilde{O_{t}} = .|s_{t-\tau +1},\theta_{t-\tau})).
\end{align*}
Now,
\begin{align*}
    d_{TV}(P(O_t = .|s_{t-\tau +1},\theta_{t-\tau}), P(\tilde{O_{t}} = .|s_{t-\tau +1},\theta_{t-\tau})) \leq \frac{1}{2}\vert A \vert L\sum_{i=t-\tau}^{t} E\Vert\theta_i - \theta_{t-\tau} \Vert.
\end{align*}
The inequality above is similar to the one shown in the proof of Lemma D.2 of \cite{fta_2_timescale}. Hence,
\begin{align*}
    E[Q(O_t,\theta_{t-\tau},\gamma(t-\tau)) - Q(\tilde{O_{t}},\theta_{t-\tau},\gamma(t-\tau))] \leq 2D(U_{r} + U_{v})U_{L}\vert A \vert L\sum_{i=t-\tau}^{t} E\Vert\theta_i - \theta_{t-\tau} \Vert. 
\end{align*}

\begin{sublemma}\label{Lemma:Gamma_second}
For any $t \geq 0$, 
we have
\begin{align*}
    \vert E[Q(\tilde{O_{t}},\theta_{t-\tau},\gamma(t-\tau)) - Q(O_{t}^{'},\theta_{t-\tau},\gamma(t-\tau))|\theta_{t-\tau},\gamma(t-\tau),s_{t-\tau + 1}]\vert \leq 4D(U_{r} + U_{v})U_L bk^{\tau - 1}.
\end{align*} 
\end{sublemma}
\textit{Proof} Note that
\begin{align*}
    &E[Q(\tilde{O_{t}},\theta_{t-\tau},\gamma(t-\tau)) - Q(O_{t}^{'},\theta_{t-\tau},\gamma(t-\tau))|\theta_{t-\tau},\gamma(t-\tau),s_{t-\tau + 1}]\\
&= E[\langle \nabla L(\theta_{t-\tau},\gamma(t-\tau)),\bar{H}(\tilde{O_{t}},\theta_{t-\tau},\gamma(t-\tau))- \bar{H}(O_{t}^{'},\theta_{t-\tau},\gamma(t-\tau))\rangle|\theta_{t-\tau},\gamma(t-\tau),s_{t-\tau + 1}]\\
&= E[(\langle \nabla L(\theta_{t-\tau},\gamma(t-\tau)),\bar{H}(\tilde{O_{t}},\theta_{t-\tau},\gamma(t-\tau))\rangle - \langle \nabla L(\theta_{t-\tau},\gamma(t-\tau)) , \bar{H}(O_{t}^{'},\theta_{t-\tau},\gamma(t-\tau))\rangle)|\theta_{t-\tau},\gamma(t-\tau),s_{t-\tau + 1}]\\
&\leq 4D(U_{r} + U_{v})U_{L}d_{TV}(P(\tilde{O_{t}} = .|s_{t-\tau +1},\theta_{t-\tau}),\mu_{\theta_{t-\tau}} \otimes \pi_{\theta_{t-\tau}} \otimes P)\\
&\leq 4D(U_{r} + U_{v})U_{L} bk^{\tau - 1} 
\end{align*}
The last inequality comes from the proof of Lemma D.3  of \cite{fta_2_timescale}.

\vspace*{6pt}
We now continue with the remaining proof of Lemma~\ref{lemma:gamma}.

\vspace*{10pt}
\textit{Proof of Lemma \ref{lemma:gamma} (Contd.)}

We decompose $E[Q(O_t,\theta_t,\gamma(t))]$ as follows:
\begin{align*}
    E[Q(O_t,\theta_t,\gamma(t))] 
    &= E[Q(O_t,\theta_t,\gamma(t)) - Q(O_t,\theta_t,\gamma(t-\tau))] + E[Q(O_t,\theta_t,\gamma(t-\tau)) - Q(O_t,\theta_{t-\tau},\gamma(t-\tau))]\\ 
    &\qquad + E[Q(O_t,\theta_{t-\tau},\gamma(t-\tau)) - Q(\tilde{O_t},\theta_{t-\tau},\gamma(t-\tau))] + E[Q(\tilde{O_t},\theta_{t-\tau},\gamma(t-\tau)) - Q(O_t^{'},\theta_{t-\tau},\gamma(t-\tau))] \\
    &\qquad+ E[Q(O_t^{'},\theta_{t-\tau},\gamma(t-\tau))]
\end{align*}
where $\tilde{O_t} = (\tilde{s}_{t} , \tilde{a}_{t},\tilde{s}_{t+1})$  is from the auxiliary  Markov chain  and $O_t^{'} = (s_t,a_t,s_{t+1})$  is from the stationary distribution with $ s_t \sim \mu_{\theta_{t-\tau}},a_t \sim \pi_{\theta_{t-\tau}},s_{t+1} \sim p(s_t,.,a_t)$ which
actually satisfies $E[Q(O_t^{'},\theta_{t-\tau},\gamma(t-\tau))] = 0$.
By collecting the corresponding bounds from Lemmas \ref{Lemma:Gamma_gamma}--\ref{Lemma:Gamma_second}, we have
\begin{align*}
    E[Q(O_t,\theta_t,\gamma(t))] &\geq -T_1E|\gamma_m(t) - \gamma_m(t-\tau)| - T_2E\Vert \theta_t - \theta_{t-\tau} \Vert - 2D(U_{r} + U_{v})U_L\vert A \vert L\sum_{i=t-\tau}^{t} E\Vert\theta_i - \theta_{t-\tau} \Vert \\
    &\qquad- 4D(U_{r} + U_{v})U_{L} bk^{\tau - 1}\\ 
   &\geq -T_1\sum_{i=t-\tau +1 }^{ t} E\vert \gamma_m(i) - \gamma_m(i-1)\vert - T_2\sum_{i=t-\tau +1 }^{ t}E\Vert \theta_i - \theta_{i-1} \Vert\\ 
   &\qquad- 2D(U_{r} + U_{v})U_{L}\vert A \vert L\sum_{i=t-\tau +1 }^{t}\sum_{j=t-\tau +1 }^{ i}E\Vert\theta_j - \theta_{j-1} \Vert - 4D(U_{r} + U_{v})U_{L} bk^{\tau - 1} \\
   &\geq -T_1\sum_{i=t-\tau +1 }^{ t} E\vert \gamma_m(i) - \gamma_m(i-1)\vert - T_2\sum_{i=t-\tau +1 }^{ t}E\Vert \theta_i - \theta_{i-1} \Vert\\ 
   &\qquad- 2D(U_{r} + U_{v})U_{L}\vert A \vert L\tau\sum_{j=t-\tau +1 }^{ t}E\Vert\theta_j - \theta_{j-1} \Vert - 4D(U_{r} + U_{v})U_{L} bk^{\tau - 1} \\
   &\geq -(D_1(\tau + 1)\sum_{k=t-\tau + 1}^{t} E\Vert \theta_k - \theta_{k-1} \Vert + D_2bk^{\tau - 1} + T_1\sum_{i=t-\tau +1 }^{ t} E\vert \gamma_m(i) - \gamma_m(i-1)\vert),
\end{align*} 
where $D_1 := \max \{T_2 , 2D(U_{r} + U_{v})U_{L}\vert A \vert L\}$ and $D_2 := 4D(U_{r} + U_{v})U_{L}$. This 
completes the proof.
\end{proof}

\textit{Proof of Theorem 1}

Under the update rule of Algorithm 1 for the actor recursion, we have:
\begin{align*}
    \theta_{t+1} = \theta_t + b(t)\delta_{t}\Psi_{s_{t}a_{t}}.
\end{align*}
So, using lemma \ref{Lemma:L-smooth}, we have 
\begin{align*}
    L(\theta_{t+1},\gamma(t)) \geq L(\theta_t,\gamma(t)) + b(t) \langle \nabla L(\theta_t,\gamma(t)) ,\delta_{t}\nabla \log\pi_{\theta_t}(a_t|s_t) \rangle 
- M_{L}b(t)^2\Vert\delta_{t}\nabla \log\pi_{\theta_t}(a_t|s_t)\Vert^2.
\end{align*}
Now
\begin{align*}
 &\delta_{t}\nabla \log\pi_{\theta_t}(a_t|s_t) \\
 &= 
 ( q(t) + \sum_{k=1}^{N}\gamma_{k}(t)(h_k(t)-\alpha_k) - L_t + v_{t}^T(f(s_{t+1}) - f(s_t))\nabla \log\pi_{\theta_t}(a_t|s_t) \\
 &= (q(t) + \sum_{k=1}^{k=N}\gamma_{k}(t)(h_k(t)-\alpha_k) - L(\theta_t,\gamma(t)) + L(\theta_t,\gamma(t)) - L_t + (f(s_{t+1})^T - f(s_t)^T)(v_{t} - v_{t}^*) \\
 &\qquad + (f(s_{t+1})^T - f(s_t)^T)v_{t}^*)\nabla \log\pi_{\theta_t}(a_t|s_t) \\
 &= \Delta H(O_t,L_t,v_{t},\theta_t,\gamma(t)) + H(O_t,\theta_t,\gamma(t),q(t),h(t)),
\end{align*}
where $h(t) = (h_1(t),h_2(t),h_3(t),.....,h_N(t))$.
Hence,
\begin{align*}
    L(\theta_{t+1},\gamma(t)) 
    &\geq L(\theta_t,\gamma(t)) + b(t) \langle \nabla L(\theta_t,\gamma(t)) ,  \Delta H(O_t,L_t,v_{t},\theta_t,\gamma(t)) + H(O_t,\theta_t,\gamma(t),q(t),h(t))\rangle\\ 
    &\qquad  - M_{L}b(t)^2\Vert\delta_{t}\nabla \log\pi_{\theta_t}(a_t|s_t)\Vert^2\\
    &\geq L(\theta_t,\gamma(t)) + b(t) \langle \nabla L(\theta_t,\gamma(t)) ,  \Delta H(O_t,L_t,v_{t},\theta_t,\gamma(t)) \rangle \\
    &\qquad +b(t)\langle \nabla L(\theta_t,\gamma(t)) , H(O_t,\theta_t,\gamma(t),q(t),h(t)) - E_{O^{'},q(t),h(t)}[H(O^{'},\theta_t,\gamma(t),q(t),h(t))]\rangle\\ &\qquad +b(t)\langle \nabla L(\theta_t,\gamma(t)) ,  E_{O^{'},q(t),h(t)}[H(O^{'},\theta_t,\gamma(t),q(t),h(t))]\rangle - M_{L}b(t)^2\Vert\delta_{t}\nabla \log\pi_{\theta_t}(a_t|s_t)\Vert^2\\
   & \geq L(\theta_t,\gamma(t)) + b(t) \langle \nabla L(\theta_t,\gamma(t)) ,  \Delta H(O_t,L_t,v_{t},\theta_t,\gamma(t)) \rangle \\
   &\qquad+ b(t)\check{\Gamma}(O_t,\theta_t,\gamma(t),q(t),h(t)) + b(t)\Vert\nabla L(\theta_t,\gamma(t))\Vert^2 +  b(t)\langle \nabla L(\theta_t,\gamma(t)) ,  E_{O^{'}}[\Delta H^{'}(O^{'} , \theta_t,\gamma(t))]\rangle\\
    &\qquad- M_{L}b(t)^2\Vert\delta_{t}\nabla \log\pi_{\theta_t}(a_t|s_t)\Vert^2.
\end{align*}
In the first inequality, we discard the $1/2$ in front of the square norm term. After rearranging the terms, we obtain
\begin{align*}
     \Vert \nabla L(\theta_t,\gamma(t))\Vert^2 &\leq \frac{1}{b(t)} (L(\theta_{t+1},\gamma(t)) - L(\theta_t,\gamma(t))) -
\langle \nabla L(\theta_t,\gamma(t)) ,  \Delta H(O_t,L_t,v_t,\theta_t,\gamma(t)) \rangle \\
&\qquad - \check{\Gamma}(O_t,\theta_t,\gamma(t),q(t),h(t)) - \langle \nabla L(\theta_t,\gamma(t)) ,  E_{O^{'}}[\Delta H^{'}(O^{'} , \theta_t,\gamma(t)]\rangle \\ 
&\qquad+ M_{L}b(t)\Vert\delta_{t}\nabla \log\pi_{\theta_t}(a_t|s_t)\Vert^2.
\end{align*}
After taking expectations we have,
\begin{align}
\label{expectation:performance}
    E[\Vert\nabla L(\theta_t,\gamma(t))\Vert^2 ] &\leq \frac{1}{b(t)} E[(L(\theta_{t+1},\gamma(t)) - L(\theta_t,\gamma(t)))] -
E[\langle \nabla L(\theta_t,\gamma(t)) ,  \Delta H(O_t,L_t,v_t,\theta_t,\gamma(t)) \rangle]\notag\\\notag
&\qquad - E[\check{\Gamma}(O_t,\theta_t,\gamma(t),q(t),h(t))] - E[\langle \nabla L(\theta_t,\gamma(t)) ,  E_{O^{'}}[\Delta H^{'}(O^{'} , \theta_t,\gamma(t))]\rangle] \\\notag
&\qquad+ M_{L}b(t)E[||\delta_{t}\nabla \log\pi_{\theta_t}(a_t|s_t)||^2]. \hspace{4cm} \\
\end{align}
Now, observe that
\begin{align*}
    E[\langle \nabla L(\theta_t,\gamma(t)) ,  \Delta H(O_t,L_t,v_{t},\theta_t,\gamma(t)) \rangle] \geq -D\sqrt{E\Vert \nabla L(\theta_t,\gamma(t))\Vert^2}\sqrt{2E\Vert A_t\Vert^2 + 8E\Vert B_t\Vert^2},
\end{align*}
where
\begin{align*}
    A_t &= L_t - L(\theta_t,\gamma(t)),\\
B_t &=v_t - v^{*}(\theta_t,\gamma(t)), 
\end{align*}
 and the inequality follows from the Cauchy inequality and
Lemma \ref{lemma:del_h}.

Next, we have that
\begin{align*}
E[\check{\Gamma}(O_t,\theta_t,\gamma(t),q(t),h(t))] \geq  &-(D_1(\tau + 1)\sum_{k=t-\tau + 1}^{t} E\Vert \theta_k - \theta_{k-1} \Vert + D_2bk^{\tau - 1} + T_1\sum_{i=t-\tau +1 }^{ t} E\vert \gamma_m(i) - \gamma_m(i-1)\vert)\\
& \geq -(2D_1D(\tau + 1)(U_r + U_v)\sum_{k=t-\tau + 1}^{t-1} b(k) + D_2bk^{\tau - 1}\\
&\qquad+ T_1(U_\alpha + U_c)\sum_{k=t-\tau +1 }^{ t-1} c(k))
\end{align*}
Also, note that
\begin{align*}
    \langle \nabla L(\theta_t,\gamma(t)) ,  E_{O^{'}}[\Delta H^{'}(O^{'} , \theta_t,\gamma(t))]\rangle
     &\geq -U_{L}\sqrt{\Vert E_{O^{'}}[\Delta H^{'}(O^{'} , \theta_t,\gamma(t))]\Vert^2} \\
    &\geq  -U_{L}\sqrt{ E_{O^{'}}\Vert\Delta H^{'}(O^{'} , \theta_t,\gamma(t))\Vert^2} \\
&\geq -2DU_{L}\epsilon_{app}
\end{align*}
Now we return to the inequality in (\ref{expectation:performance}) and plug the above terms back in it to obtain
\begin{align*}
   E[\Vert\nabla L(\theta_t,\gamma(t))\Vert^2 ]& \leq \frac{1}{b(t)} E[(L(\theta_{t+1},\gamma(t)) - L(\theta_t,\gamma(t)))] +
D\sqrt{E\Vert \nabla L(\theta_t,\gamma(t))\Vert^2}\sqrt{2E\Vert A_t\Vert^2 + 8E\Vert B_t\Vert^2}\\
&\qquad +  2D_1D(\tau + 1)(U_r + U_v)\sum_{k=t-\tau }^{t-1} b(k) + D_2bk^{\tau - 1}\\
&\qquad+ T_1(U_\alpha + U_c)\sum_{k=t-\tau }^{ t-1} c(k) +  2DU_{L}\epsilon_{app}\\
&\qquad+  M_Lb(t)E[\Vert\delta_{t}\nabla \log\pi_{\theta_t}(a_t|s_t)\Vert^2].
\end{align*}
By setting $\tau = \tau_t$, we have,
\begin{align*}
    E[\Vert\nabla L(\theta_t,\gamma(t))\Vert^2 ] &\leq \frac{1}{b(t)} E[(L(\theta_{t+1},\gamma(t)) - L(\theta_t,\gamma(t)))] +
D\sqrt{E\Vert \nabla L(\theta_t,\gamma(t))\Vert^2}\sqrt{2E\Vert A_t\Vert^2 + 8E\Vert B_t\Vert^2}\\
&\qquad +  2D_1D(\tau_t + 1)^2(U_r + U_v)b(t-\tau_t) + D_2b(t) + 4M_LD^2(U_r + U_v)^2b(t)\\
&\qquad+ T_1(U_\alpha + U_c)(\tau_t + 1)c(t-\tau_t)+  2DU_{L}\epsilon_{app}\\
&\leq \frac{1}{b(t)} E[(L(\theta_{t+1},\gamma(t)) - L(\theta_t,\gamma(t)))] +
D\sqrt{E\Vert \nabla L(\theta_t,\gamma(t))\Vert^2}\sqrt{2E\Vert A_t\Vert^2 + 8E\Vert B_t\Vert^2}\\
&\qquad +  M_1(\tau_t + 1)^2b(t-\tau_t) + M_2b(t) + M_3(\tau_t + 1)c(t-\tau_t)+  2DU_{L}\epsilon_{\text{app}}.
\end{align*}
Summing the expectation from $\tau_t$ to $t$ we have,
\begin{align*}
    \sum_{k=\tau_t}^{t}E[\Vert\nabla L(\theta_k,\gamma(k))\Vert^2 ]  &\leq \underbrace{\sum_{k=\tau_t}^{t}\frac{1}{b(k)} E[(L(\theta_{k+1},\gamma(k)) - L(\theta_k,\gamma(k)))] }_{I_1}\\
&\qquad + \sum_{k=\tau_t}^{t}D\sqrt{E\Vert \nabla L(\theta_k,\gamma(k))\Vert^2}\sqrt{2E\Vert A_k\Vert^2 + 8E\Vert B_k\Vert^2}\\
&\qquad +  \underbrace{\sum_{k=\tau_t}^{t}( M_1(\tau_t + 1)^2b(k-\tau_t) + M_2b(k) + M_3(\tau_t + 1)c(k-\tau_t))}_{ I_2}\\
&\qquad +  2DU_{L}\epsilon_{\text{app}}(t-\tau_t + 1)
\end{align*} 
For the term $I_1$ above,
\begin{align*}
    &\sum_{k=\tau_t}^{t}\frac{1}{b(k)} E[(L(\theta_{k+1},\gamma(k)) - L(\theta_k,\gamma(k)))]\\ 
&= \sum_{k=\tau_t}^{t}\frac{1}{b(k)} E[(L(\theta_{k+1},\gamma(k)) - L(\theta_{k+1},\gamma(k+1))) + (L(\theta_{k+1},\gamma(k+1)) - L(\theta_k,\gamma(k)))] \\
&\leq \sum_{k=\tau_t}^{t}\frac{1}{b(k)}(U_c + U_\alpha) E[\sum_{m=1}^{N}\vert\gamma_m(k) - \gamma_m(k+1)\vert] \\
&\qquad+ \sum_{k=\tau_t}^{t}\big{(}\frac{1}{b(k-1)} - \frac{1}{b(k)}\big{)}E[L(\theta_k,\gamma(k))] - \frac{1}{b(\tau_t - 1)}E[L(\theta_{\tau_t},\gamma(\tau_t))] + \frac{1}{b(t)}E[L(\theta_{t+1}.\gamma(t+1))] \\
&\leq N(U_c + U_\alpha)^2\sum_{k=\tau_t}^{t}\frac{c(k)}{b(k)} +  \sum_{k=\tau_t}^{t}\big{(}\frac{1}{b(k-1)} - \frac{1}{b(k)}\big{)}U_r + \frac{1}{b(\tau_t - 1)}U_r + \frac{1}{b(t)}U_r\\
&=  N(U_c + U_\alpha)^2\sum_{k=\tau_t}^{t}\frac{c(k)}{b(k)} +U_r\big{[}  \sum_{k=\tau_t}^{t}\big{(}\frac{1}{b(k-1)} - \frac{1}{b(k)}\big{)} + \frac{1}{b(\tau_t - 1)} + \frac{1}{b(t)}\big{]}\\
&= N(U_c + U_\alpha)^2\frac{c_c}{c_b}\sum_{k=\tau_t}^{t}(1+k)^{\sigma - \beta} + 2U_rb(t)^{-1}\\
&\leq \frac{N(U_c + U_\alpha)^2c_c}{c_b(1+\beta - \sigma)}(t-\tau_t + 1)^{1 - \beta + \sigma} + 2\frac{U_r}{c_b}(1+t)^{\sigma}\\
&= B_1(t-\tau_t + 1)^{1 - \beta + \sigma} + B_2(1+t)^{\sigma}.
\end{align*}
The first inequality above holds because
\begin{align*}
    L(\theta_{k+1},\gamma(k)) - L(\theta_{k+1},\gamma(k+1)) &= \sum\limits_{s \in S}\mu_{\theta_{k+1}}(s)\sum\limits_{a \in A(s)}\pi_{\theta_{k+1}}(a|s) (\sum\limits_{m=1}^{N}(\gamma_m(k) -\gamma_m(k+1))(h_k(s,a) -\alpha_k)) \\
    &\leq (U_c + U_\alpha)\sum\limits_{m=1}^{N}\vert \gamma_m(k) -\gamma_m(k+1)\vert.
\end{align*}

Now, for the term $I_2$,
\begin{align*}
  &\sum_{k=\tau_t}^{t}( M_1(\tau_t + 1)^2b(k-\tau_t) + M_2b(k) + M_3(\tau_t + 1)c(k-\tau_t))\\
&\leq (M_1(\tau_t + 1)^2 + M_2)\sum_{k= 0}^{t - \tau_t}b(k) +M_3(\tau_t + 1) \sum_{k= 0}^{t - \tau_t}c(k)\\
&= (M_1(\tau_t + 1)^2 + M_2)c_b\sum_{k= 0}^{t - \tau_t}(1+k)^{-\sigma} + M_3(\tau_t + 1)c_c\sum_{k= 0}^{t - \tau_t}(1+k)^{-\beta}\\
&\leq \frac{(M_1(\tau_t + 1)^2 + M_2)c_b}{1-\sigma}(t -\tau_t + 1)^{1-\sigma} + \frac{M_3(\tau_t + 1)c_c}{1-\beta}(t -\tau_t + 1)^{1-\beta}\\
&\leq \big{(}\frac{(M_1(\tau_t + 1)^2 + M_2)c_b}{1-\sigma} + \frac{M_3(\tau_t + 1)^2c_c}{1-\beta}\big{)}(t -\tau_t + 1)^{1-\sigma}\\
&= B_3(\tau_t + 1)^2(t -\tau_t + 1)^{1-\sigma}.  
\end{align*}
The second inequality holds because
\begin{align*}
    \sum_{k= 0}^{t - \tau_t}(1+k)^{-\sigma} \leq \int_{0}^{t-\tau_t + 1}x^{-\sigma}dx = \frac{(t-\tau_t + 1)^{1-\sigma}}{(1-\sigma)}.
\end{align*}

After combining all the terms, we have
\begin{align*}
    \sum_{k=\tau_t}^{t}E[\Vert\nabla L(\theta_k,\gamma(k))\Vert^2 ]  &\leq B_1(t-\tau_t + 1)^{1 - \beta + \sigma} + B_2(1+t)^{\sigma} + B_3(\tau_t + 1)^2(t -\tau_t + 1)^{1-\sigma}\\
&\qquad + D\sqrt{\sum_{k=\tau_t}^{t}E\Vert \nabla L(\theta_k,\gamma(k))\Vert^2}\sqrt{2\sum_{k=\tau_t}^{t}E\Vert A_k\Vert^2 + 8\sum_{k=\tau_t}^{t}E\Vert B_k\Vert^2}\\ 
&\qquad+ 2DU_{L}\epsilon_{\text{app}}(t-\tau_t + 1).
\end{align*}
Dividing $(1 + t - \tau_t)$ on both sides and assuming $t \geq 2\tau_t - 1$, we can express the result as
\begin{align*}
    1/(1+t-\tau_{t})\sum_{k=\tau_{t}}^{t} E [\Vert\nabla L(\theta_k,\gamma(k))\Vert^2 ]
&\leq B_1(t-\tau_t + 1)^{ \sigma - \beta} + 2B_2(1+t)^{\sigma - 1 } + B_3(\tau_t + 1)^2(t -\tau_t + 1)^{-\sigma}\\
&+ D\sqrt{\frac{1}{ 1 + t - \tau_t}\sum_{k=\tau_t}^{t}E\Vert\nabla L(\theta_k,\gamma(k))\Vert^2}\sqrt{Z(t)} + 2DU_{L}\epsilon_{app},
\end{align*}
where,
\begin{align*}
    Z(t) = (2\sum_{k=\tau_t}^{t}E\Vert A_k\Vert^2 + 8\sum_{k=\tau_t}^{t}E\Vert B_k\Vert^2)/(1 + t -\tau_t).
\end{align*}
Let 
\begin{align*}
    F(t) = 1/(1+t-\tau_{t})\sum_{k=\tau_{t}}^{t} E [\Vert\nabla L(\theta_k,\gamma(k))\Vert^2 ].
\end{align*}
So, we have
\begin{align*}
    F(t) \leq \mathcal{O}(t^{\sigma - \beta}) +  \mathcal{O}((\log t)^2 t^{-\sigma }) + \mathcal{O}(\epsilon_{app}) + 2D\sqrt{F(t)}\sqrt{Z(t)},
\end{align*}
because $\tau_t = \mathcal{O}(\log t)$,
which gives
\begin{align*}
    (\sqrt{F(t)} - D\sqrt{Z(t)})^2 \leq \mathcal{O}(t^{\sigma - \beta}) + \mathcal{O}((\log t)^2 t^{-\sigma }) + \mathcal{O}(\epsilon_{app}) + D^2Z(t).
\end{align*}
Let 
\begin{align*}
    A(t) = \mathcal{O}(t^{\sigma - \beta}) + \mathcal{O}((\log t)^2 t^{-\sigma }) + \mathcal{O}(\epsilon_{app}).
\end{align*} 
\hfill\break
Thus, we have
\begin{align*}
   &(\sqrt{F(t)} - D\sqrt{Z(t)})^2 \leq A(t) + D^2Z(t)\\
&\Rightarrow \sqrt{F(t)} - D\sqrt{Z(t)} \leq  \sqrt{A(t)} + D\sqrt{Z(t)}\\
&\Rightarrow \sqrt{F(t)} \leq  \sqrt{A(t)} + 2D\sqrt{Z(t)}\\
&\Rightarrow F(t) \leq 2A(t) + 8D^2Z(t).
\end{align*}

The first and third implications hold because for a function $M(t) \leq Q(t) + R(t)$(with each positive), we have,
\begin{align*}
    \sqrt{M(t)} &\leq \sqrt{Q(t)} + \sqrt{R(t)}\\
    M(t)^2 &\leq 2Q(t)^2 + 2R(t)^2
\end{align*}
So finally we have the following:
\begin{align*}
    \min\limits_{0\leq k\leq t}E[\Vert\nabla L(\theta_k,\gamma(k))\Vert^2 ]  
    &\leq 1/(1+t-\tau_{t})\sum_{k=\tau_{t}}^{t} E [\Vert\nabla L(\theta_k,\gamma(k))\Vert^2 ]\\
    &= \mathcal{O}(t^{\sigma - \beta}) ) + \mathcal{O}((\log t)^2 t^{-\sigma }) + \mathcal{O}(\epsilon_\text{app}) + \mathcal{O}(Z(t)).
\end{align*}

\subsection{Proof of Theorem 2: Estimating the Average Reward for Constrained Actor critic}\label{thm2_avg_reward}

We define several notations to clarify the probabilistic dependency below.
\begin{align*}
\begin{split}
   O_t 
   &:= 
   (s_t,a_t,s_{t+1}),\\
   O 
   &:= 
   (s,a,s^{'}),\\
   L_t^{*}
   &:= 
   L(\theta_t,\gamma(t)),\\
   y_t
   &:= 
   L_t - L_t^{*},\\
  \hat{\Xi}(L_t,\theta_t,\gamma(t),q(t),h(t))
  &:= 
  y_t\left(q(t) + \sum_{k=1}^{N}\gamma_{k}(t)(h_k(t)-\alpha_k) - L_t^{*}\right).
\end{split}
\end{align*}
Before we proceed further, we first state and prove Lemmas 5 and 6 below that will be used in the proof of Theorem 2 . Moreover, the proof of Lemma 6 shall rely on Lemmas \ref{Lemma:Xi_1}--\ref{Lemma:Xi_5} that we also state and prove in the following. Finally, collecting all these results together, we shall obtain the claim for Theorem 2.

\begin{lemma}
\label{lemma: L_bound_theta_gamma}
For any $\theta_1 ,\theta_2 ,\gamma^1 = (\gamma^1_1,\gamma^1_2,....,\gamma^1_N)^T,\gamma^2 =(\gamma^2_1,\gamma^2_2,....,\gamma^2_N)^T$ with $0 \leq \gamma^i_j \leq M$, we have

\begin{align*}
    \vert L(\theta_1,\gamma_1) - L(\theta_2,\gamma_2)\vert \leq C_1 \Vert \theta_1 - \theta_2 \Vert + C_2\vert \gamma_1^p - \gamma_2^p \vert,
\end{align*}
where $C_1 = N(U_c + U_\alpha) , C_2 = 2U_r\vert A\vert L(1 + \lceil \log_k b^{-1} \rceil + \frac{1}{1 - k} )$
and $\vert \gamma_p^1 - \gamma_p^2 \vert  = \max\limits_{i = 1,2,..., N}|\gamma_i^1-\gamma_i^2|$.
\end{lemma}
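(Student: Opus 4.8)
The plan is to prove this by a single triangle-inequality split that decouples the dependence on the Lagrange multipliers from the dependence on the policy parameter, and then to bound the two resulting pieces using the closed form
\[
L(\theta,\gamma) = \sum_{s\in S}\mu_\theta(s)\sum_{a\in A(s)}\pi_\theta(a|s)\Big(d(s,a) + \sum_{k=1}^N\gamma_k\big(h_k(s,a)-\alpha_k\big)\Big)
\]
from Section 3.2. Concretely, I would write $|L(\theta_1,\gamma^1) - L(\theta_2,\gamma^2)| \le |L(\theta_1,\gamma^1) - L(\theta_1,\gamma^2)| + |L(\theta_1,\gamma^2) - L(\theta_2,\gamma^2)|$ and treat each term separately.

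For the first term the occupation measure $\mu_{\theta_1}\otimes\pi_{\theta_1}$ is shared, so the difference equals $\sum_s\mu_{\theta_1}(s)\sum_a\pi_{\theta_1}(a|s)\sum_k(\gamma^1_k-\gamma^2_k)(h_k(s,a)-\alpha_k)$. Since $0\le h_k(s,a)\le U_c$ and $|\alpha_k|\le U_\alpha$, we have $|h_k(s,a)-\alpha_k|\le U_c+U_\alpha$, and bounding each $|\gamma^1_k-\gamma^2_k|$ by $|\gamma_p^1-\gamma_p^2|$ gives a bound of $N(U_c+U_\alpha)\,|\gamma_p^1-\gamma_p^2|$ after summing over the $N$ components and using that probabilities sum to $1$. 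For the second term I set $r_{\gamma^2}(s,a):=d(s,a)+\sum_k\gamma^2_k(h_k(s,a)-\alpha_k)$, observe $|r_{\gamma^2}(s,a)|\le U_c+NM(U_c+U_\alpha)=U_r$ (using $0\le\gamma^2_k\le M$ and the definition of $U_r$ from Section~\ref{prel}), and write the difference as $\sum_{s,a}\big(\mu_{\theta_1}(s)\pi_{\theta_1}(a|s)-\mu_{\theta_2}(s)\pi_{\theta_2}(a|s)\big)r_{\gamma^2}(s,a)$, which is at most $2U_r\, d_{TV}(\mu_{\theta_1}\otimes\pi_{\theta_1},\mu_{\theta_2}\otimes\pi_{\theta_2})$. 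Invoking the total-variation Lipschitz estimate for the joint occupation measure from Lemma B.1 of \cite{fta_2_timescale} (the same bound already used inside the proof of Lemma~\ref{Lemma:Gamma_theta}), namely $d_{TV}(\mu_{\theta_1}\otimes\pi_{\theta_1},\mu_{\theta_2}\otimes\pi_{\theta_2})\le |A|L\big(1+\lceil\log_k b^{-1}\rceil+\tfrac{1}{1-k}\big)\|\theta_1-\theta_2\|$, yields a bound of $2U_r|A|L\big(1+\lceil\log_k b^{-1}\rceil+\tfrac{1}{1-k}\big)\|\theta_1-\theta_2\|$. Adding the two pieces gives exactly the claimed inequality with the two stated constants (the multiplier-Lipschitz constant $N(U_c+U_\alpha)$ and the policy-Lipschitz constant $2U_r|A|L(1+\lceil\log_k b^{-1}\rceil+\tfrac{1}{1-k})$).

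I do not expect a genuine obstacle here: the only nontrivial ingredient is the Lipschitz continuity of $\theta\mapsto\mu_\theta\otimes\pi_\theta$ in total variation, and this is already available from \cite{fta_2_timescale} under Assumptions~\ref{assum:policy-lipschitz-bounded} and~\ref{assum:ergodicity}; everything else is elementary bookkeeping. The main point of care is to verify the uniform bound $|r_\gamma(s,a)|\le U_r$ correctly (so that the Pinsker-type step $|\sum_{s,a}(\nu_1-\nu_2)(s,a)g(s,a)|\le 2\|g\|_\infty d_{TV}(\nu_1,\nu_2)$ applies with the right constant), and to keep the roles of the two difference terms straight when collecting the final constants.
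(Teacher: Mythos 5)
Your proposal is correct and follows essentially the same route as the paper: the identical triangle-inequality split through $L(\theta_1,\gamma^2)$, the bound $N(U_c+U_\alpha)\vert\gamma^1_p-\gamma^2_p\vert$ on the multiplier term, and the bound $2U_r\,d_{TV}(\mu_{\theta_1}\otimes\pi_{\theta_1},\mu_{\theta_2}\otimes\pi_{\theta_2})$ combined with Lemma B.1 of \cite{fta_2_timescale} on the policy term. You also correctly pair each constant with its corresponding argument (the multiplier constant with $\vert\gamma^1_p-\gamma^2_p\vert$ and the occupation-measure constant with $\Vert\theta_1-\theta_2\Vert$), which is consistent with the paper's proof.
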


\begin{proof}
    Note that
\begin{align*}
    \vert L(\theta_1,\gamma^1) - L(\theta_2,\gamma^2)\vert &\leq \vert L(\theta_1,\gamma^1) - L(\theta_1,\gamma^2)\vert + \vert L(\theta_1,\gamma^2) - L(\theta_2,\gamma^2)\vert\\
    &\leq\sum\limits_{s \in S}\mu_{\theta_1}(s)\sum\limits_{a \in A(s)}\pi_{\theta_1}(s,a)\sum_{k=1}^{N}\vert\gamma(k)^1 - \gamma(k)^2\vert\vert(g_k(s,a) -\alpha_k) \vert \\
    &\qquad + \vert \sum\limits_{s \in S}\mu_{\theta_1}(s)\sum\limits_{a \in A(s)}\pi_{\theta_1}(s,a)(d(s,a) + \sum_{k=1}^{N}\gamma(k)^2(h_k(s,a) - \alpha_k)) \\
    &\qquad-\sum\limits_{s \in S}\mu_{\theta_2}(s)\sum\limits_{a \in A(s)}\pi_{\theta_2}(s,a)(d(s,a) + \sum_{k=1}^{N}\gamma(k)^2(h_k(s,a) - \alpha_k))\vert\\
   & \leq N(U_c + U_\alpha)\vert \gamma_p^1 - \gamma_p^2 \vert + 2U_r\vert A\vert L(1 + \lceil \log_k b^{-1} \rceil + \frac{1}{1 - k} )\Vert \theta_1 - \theta_2 \Vert \\
    & \leq C_1\vert \gamma_p^1 - \gamma_p^2 \vert  + C_2\Vert \theta_1 - \theta_2 \Vert,
\end{align*}
where $C_1 = N(U_c + U_\alpha) , C_2 = 2U_r\vert A\vert L(1 + \lceil \log_k b^{-1} \rceil + \frac{1}{1 - k} ) $
and $\vert \gamma_p^1 - \gamma_p^2 \vert  = \max\limits_{i = 1,2,..., N}|\gamma_i^1-\gamma_i^2|$.

The third inequality is because of Lemma B.1 of \cite{fta_2_timescale}.
\end{proof}

\begin{lemma}\label{bound_Xi}
Given the definition of $\hat{\Xi}(L_t,\theta_t,\gamma(t),q(t),h(t))$, for any $t > 0$, we have
 \begin{align*}
     E[\hat{\Xi}(L_t,\theta_t,\gamma(t),q(t),h(t))] &\leq 6U_rN(U_c + U_\alpha)E\vert \gamma_p(t) - \gamma_p(t-\tau) \vert +8U_r\overline{C}E\Vert \theta_t - \theta_{t-\tau} \Vert +2U_rE\vert L_t - L_{t-\tau} \vert\\
     &\qquad+2U_r^2\vert A \vert L\sum_{i=t-\tau}^{t} E\Vert\theta_i - \theta_{t-\tau} \Vert+4U_r^2bk^{\tau - 1},
\end{align*}
where 
\begin{align*}
    \vert \gamma_p(t) - \gamma_p(t-\tau) \vert &= \max\limits_{i=1,2,...,N}\vert \gamma_i(t) - \gamma_i(t-\tau) \vert,\\
    \overline{C} &= U_r\vert A\vert L(1 + \lceil \log_k b^{-1} \rceil + \frac{1}{1 - k} ),\\
    t &\geq \tau \geq 0.
\end{align*}
\end{lemma}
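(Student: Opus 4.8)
The plan is to mirror the argument used for Lemma~\ref{lemma:gamma}, replacing the inner-product quantity $\check{\Gamma}$ by the scalar quantity $\hat{\Xi}$ and dragging the extra factor $y_t = L_t - L(\btheta_t,\bgamma(t))$ through every estimate. First I would note that $q(t)$ and $h(t)$ are conditionally independent of $(L_t,\btheta_t,\bgamma(t))$ given $O_t = (s_t,a_t,s_{t+1})$, so taking the conditional expectation over the single-stage costs reduces the claim to bounding $E[\Xi(O_t,L_t,\btheta_t,\bgamma(t))]$, where
\begin{align*}
\Xi(O,L,\theta,\gamma) := \big(L - L(\theta,\gamma)\big)\big(\bar{c}(s,a,s^{'},\gamma) - L(\theta,\gamma)\big),
\end{align*}
and $\bar{c}(s,a,s^{'},\gamma) = \sum_q q\,\bar{p}(q|s,a,s^{'}) + \sum_{k=1}^{N}\gamma_k\big(\sum_h h\, p_k(h|s,a,s^{'}) - \alpha_k\big)$ is the conditional mean of the relaxed single-stage cost. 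Since the single-stage costs are non-negative and bounded by $U_c$ and $0 \le \gamma_k \le M$, both factors of $\Xi$ are bounded in absolute value by $2U_r$, using $|L_t| \le U_r$ and $|L(\theta,\gamma)| \le U_r$ from Section~\ref{prel}.

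Next I would expand $E[\Xi(O_t,L_t,\btheta_t,\bgamma(t))]$ as a telescoping sum along the auxiliary chain~(\ref{chain-au}): replace in turn $L_t \to L_{t-\tau}$, then $\bgamma(t) \to \bgamma(t-\tau)$, then $\btheta_t \to \btheta_{t-\tau}$, then $O_t \to \widetilde{O}_t$ (original chain versus auxiliary chain, both pinned at $s_{t-\tau+1}$), then $\widetilde{O}_t \to O_t^{'}$ (auxiliary chain versus a fresh draw from $\mu_{\btheta_{t-\tau}} \otimes \pi_{\btheta_{t-\tau}} \otimes \mathcal{P}$), leaving the term $E[\Xi(O_t^{'},L_{t-\tau},\btheta_{t-\tau},\bgamma(t-\tau))]$. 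This last term is zero: conditioning on the history up to $t-\tau$ makes $L_{t-\tau} - L(\btheta_{t-\tau},\bgamma(t-\tau))$ deterministic, and $E_{O_t^{'}}[\bar{c}(s,a,s^{'},\bgamma(t-\tau)) - L(\btheta_{t-\tau},\bgamma(t-\tau))] = 0$ by the definition of $L(\cdot,\cdot)$ together with $E_{s^{'}}[\bar{c}] = d(s,a) + \sum_k \gamma_k(h_k(s,a) - \alpha_k)$. The five remaining differences are handled by Lemmas~\ref{Lemma:Xi_1}--\ref{Lemma:Xi_5}, which are the $\Xi$-analogues of Lemmas~\ref{Lemma:Gamma_gamma}--\ref{Lemma:Gamma_second}. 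For the $L$, $\gamma$, $\theta$ steps I would use $|a_1 b_1 - a_2 b_2| \le |a_1||b_1 - b_2| + |b_2||a_1 - a_2|$ with the uniform bound $2U_r$, exploiting that $\bar{c}$ does not depend on $\theta$ or $L$, that the $L(\theta,\gamma)$ pieces are Lipschitz in $(\theta,\gamma)$ by Lemma~\ref{lemma: L_bound_theta_gamma}, and that $\bar{c}$ is Lipschitz in $\gamma$ with constant $N(U_c+U_\alpha)$; this yields contributions $2U_r E|L_t - L_{t-\tau}|$, $6U_r N(U_c+U_\alpha) E|\gamma_p(t) - \gamma_p(t-\tau)|$, and $8U_r \overline{C}\, E\|\btheta_t - \btheta_{t-\tau}\|$ respectively. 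For the two distributional steps I would use $|\Xi| \le 4U_r^2$ and multiply by the total-variation distances, which are at most $\tfrac{1}{2}|A|L\sum_{i=t-\tau}^{t} E\|\btheta_i - \btheta_{t-\tau}\|$ (as in Lemma~D.2 of \cite{fta_2_timescale}) and $bk^{\tau-1}$ (Assumption~\ref{assum:ergodicity}), giving $2U_r^2|A|L\sum_{i=t-\tau}^{t} E\|\btheta_i - \btheta_{t-\tau}\|$ and $4U_r^2 bk^{\tau-1}$.

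Collecting the five bounds gives exactly the claimed inequality. The main obstacle I anticipate is the book-keeping in the $\gamma$ and $\theta$ steps: $L(\theta,\gamma)$ appears twice inside $\Xi$ (once in each factor) and $\gamma$ additionally enters through $\bar{c}$, so each parameter perturbation must be propagated through a product of two terms and through two distinct occurrences of $L(\theta,\gamma)$, and keeping every intermediate quantity uniformly bounded hinges on the a priori bounds $|L_t| \le U_r$, $|L(\theta,\gamma)| \le U_r$ and $|\bar{c}| \le U_r$ recalled in Section~\ref{prel} together with the projection $\hat{\Gamma}$ keeping $0 \le \gamma_k \le M$.
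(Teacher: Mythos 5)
Your proposal follows essentially the same route as the paper's proof: condition on the tuple $(s_t,a_t,s_{t+1})$ to replace the random costs by their conditional means, then telescope $E[\Xi(O_t,L_t,\theta_t,\gamma(t))]$ through the parameter replacements ($\gamma$, $\theta$, $L$), the original-versus-auxiliary chain comparison, and the auxiliary-versus-stationary comparison, bounding the first three steps by the Lipschitz estimates (with exactly the constants $6U_rN(U_c+U_\alpha)$, $8U_r\overline{C}$, $2U_r$) and the last two by total-variation arguments, the stationary term vanishing. The only refinement you should fold in: in the two distributional steps the blanket bound $\vert\Xi\vert\le 4U_r^2$ combined with the paper's half-$L^1$ convention for $d_{TV}$ yields constants twice the stated ones, and the paper recovers $2U_r^2\vert A\vert L\sum_{i}E\Vert\theta_i-\theta_{t-\tau}\Vert$ and $4U_r^2bk^{\tau-1}$ by observing that under the conditioning only the conditional-mean cost (bounded by $U_r$) varies with the tuple, while the factor $L_{t-\tau}-L(\theta_{t-\tau},\gamma(t-\tau))$ (bounded by $2U_r$) is fixed.
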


\begin{proof}
We have
\begin{align*}
    E[\hat{\Xi}(L_t,\theta_t,\gamma(t),q(t),h(t))] &=E_{s_{t} \sim p,a_t \sim \pi_{\theta_t},s_{t+1} \sim p}[ E[ y_t(q(t) + \sum_{k=1}^{k=N}\gamma_{k}(t)(h_k(t)-\alpha_k) - L_t^{*})|s_t,a_t,s_{t+1}]]\\
    &=E[y_t(c(s_t,a_t,s_{t+1},\gamma(t)) - L_t^{*})]\\
    &=E[\Xi(O_t,L_t,\theta_t,\gamma(t))]
\end{align*}
where 
\begin{align*}
    c(s,a,s^{'},\gamma) &= \sum\limits_{q}(q\cdot \bar{p}(q|s,a,s^{'})) + \sum\limits_{k=1}^{k=N}\gamma_{k}(\sum\limits_{h}(h\cdot p_{k}(h|s,a,s^{'}) )- \alpha_k).
\end{align*}
The proof will be built on supporting lemmas \ref{Lemma:Xi_1}--\ref{Lemma:Xi_5}
that we first state and prove below.

\begin{sublemma} \label{Lemma:Xi_1}
For any $\theta,L,\gamma^1 = (\gamma^1_1,\gamma^1_2,....,\gamma^1_N)^T,\gamma^2 =(\gamma^2_1,\gamma^2_2,....,\gamma^2_N)^T,O = (s,a,s^{'})$ with $0 \leq \gamma^i_j \leq M$ for $i \in \{1,2\}$ and $j \in \{1,2,...,N\} $, we have
\begin{align*}
    \vert \Xi(O,L,\theta,\gamma^1) - \Xi(O,L,\theta,\gamma^2)\vert  \leq  6U_rN(U_c + U_\alpha)\vert \gamma_p^1 - \gamma_p^2 \vert,
\end{align*}
where 
\begin{align*}
    \vert \gamma_p^1 - \gamma_p^2 \vert = \max_{i = 1,2,..,N}\vert \gamma_i^1 - \gamma_i^2\vert.
\end{align*}
\end{sublemma}

\textit{Proof }
We have,
\begin{align*}
    \vert\Xi(O,L,\theta,\gamma^1) - \Xi(O,L,\theta,\gamma^2)\vert &= \vert(L-L(\theta,\gamma^1))(c(s,a,s^{'},\gamma^1) -L(\theta,\gamma^1)) - (L-L(\theta,\gamma^2))(c(s,a,s^{'},\gamma^2) -L(\theta,\gamma^2))\vert\\
    &\leq \vert(L-L(\theta,\gamma^1))(c(s,a,s^{'},\gamma^1) -c(s,a,s^{'},\gamma^2) + L(\theta,\gamma^2) -L(\theta,\gamma^1))\vert\\
    &\qquad + \vert(L(\theta,\gamma^2) -L(\theta,\gamma^1) )(c(s,a,s^{'},\gamma^2) -L(\theta,\gamma^2))\vert\\
    &\leq 2U_r(\vert c(s,a,s^{'},\gamma^1) -c(s,a,s^{'},\gamma^2) \vert + 2\vert  L(\theta,\gamma^2) -L(\theta,\gamma^1)\vert )\\
    &\leq 6U_rN(U_c + U_\alpha)\vert \gamma_p^1 - \gamma_p^2 \vert,
\end{align*}
where
\begin{align*}
    \vert \gamma_p^1 - \gamma_p^2 \vert = \max\limits_{i = 1,2,..., N}\vert \gamma_i^1 - \gamma_i^2 \vert.
\end{align*}

\begin{sublemma} \label{Lemma:Xi_2}
For any $L$,$\theta_1 ,\theta_2,O = (s,a,s^{'})$,$\gamma = (\gamma_1,\gamma_2,...,\gamma_N)^T$  with $0 \leq \gamma_i \leq M$ for $i \in {1,2,..,N}$, we have
\begin{align*}
    \vert \Xi(O,L,\theta_1,\gamma) - \Xi(O,L,\theta_2,\gamma)\vert  \leq  8U_r\overline{C}\Vert \theta_1 - \theta_2 \Vert,
\end{align*} 
where 
\begin{align*}
    \overline{C} = U_r\vert A\vert L(1 + \lceil \log_k b^{-1} \rceil + \frac{1}{1 - k} ). 
\end{align*}
\end{sublemma}

\textit{Proof}
By definition of $\Xi(O,L,\theta,\gamma)$, we have
\begin{align*}
    \vert \Xi(O,L,\theta_1,\gamma) - \Xi(O,L,\theta_2,\gamma) \vert &= \vert (L - L(\theta_1,\gamma))(C(s,a,s^{'},\gamma) 
- L(\theta_1,\gamma)) - (L - L(\theta_2,\gamma))(C(s,a,s^{'},\gamma) - L(\theta_2,\gamma))\vert\\
&\leq \vert (L - L(\theta_1,\gamma))(C(s,a,s^{'},\gamma) 
- L(\theta_1,\gamma))  - (L - L(\theta_1,\gamma))(C(s,a,s^{'},\gamma)
- L(\theta_2,\gamma)) \vert \\ 
&\qquad + \vert (L - L(\theta_1,\gamma))(C(s,a,s^{'},\gamma)
- L(\theta_2,\gamma)) - (L - L(\theta_2,\gamma))(C(s,a,s^{'},\gamma) - L(\theta_2,\gamma)) \vert\\
& \leq 4U_r\vert L(\theta_1,\gamma) - L(\theta_2,\gamma) \vert \\
& \leq 8U_r\overline{C}\Vert \theta_1 - \theta_2 \Vert,
\end{align*}
where 
$\overline{C} = U_r\vert A\vert L(1 + \lceil \log_k b^{-1} \rceil + \frac{1}{1 - k} )$.

\begin{sublemma} \label{Lemma:Xi_3}
For any $L_1,L_2,\theta,O = (s,a,s^{'}),\gamma = (\gamma_1,\gamma_2,...,\gamma_N)^T$  with $0 \leq \gamma_i \leq M$ for $i \in {1,2,..,N}$, we have
\begin{align*}
    \vert \Xi(O,L_1,\theta,\gamma) - \Xi(O,L_2,\theta,\gamma)\vert  \leq 2U_r\vert L_1 - L_2 \vert.
\end{align*} 
\end{sublemma}
\textit{Proof}
By definition,
\begin{align*}
    \vert \Xi(O,L_1,\theta,\gamma) - \Xi(O,L_2,\theta_,\gamma) \vert &= 
\vert (L_1 - L(\theta,\gamma))(C(s,a,s^{'},\gamma)  - L(\theta,\gamma)) - (L_2 - L(\theta,\gamma))(C(s,a,s^{'},\gamma)  - L(\theta,\gamma))\vert\\
&\leq 2U_r\vert L_1 - L_2 \vert.
\end{align*}
The claim follows.

\begin{sublemma} \label{Lemma:Xi_4}
Consider the tuples $O_t = (s_t,a_t,s_{t+1})$ and $\tilde{O_t} = (\tilde{s_t},\tilde{a_t},\tilde{s}_{t+1})$ of the original and auxiliary Markov chains respectively. Then the following holds:
\begin{align*}
    &\vert E[(\Xi(O_t,L_{t-\tau},\theta_{t-\tau},\gamma(t-\tau)) - \Xi(\tilde{O_t},L_{t-\tau},\theta_{t-\tau},\gamma(t-\tau)))|L_{t-\tau},\theta_{t-\tau},\gamma(t-\tau),s_{t-\tau + 1}] \vert\\
    &\qquad\leq 2U_r^2\vert A \vert L\sum_{i=t-\tau}^{t} E\Vert\theta_i - \theta_{t-\tau} \Vert
\end{align*}
\end{sublemma}

\textit{Proof}
By the Cauchy-Schwartz inequality and the definition of the total variation norm, we have
\begin{align*}
    &E[(\Xi(O_t,L_{t-\tau},\theta_{t-\tau},\gamma(t-\tau)) - \Xi(\tilde{O_t},L_{t-\tau},\theta_{t-\tau},\gamma(t-\tau)))|L_{t-\tau},\theta_{t-\tau},\gamma(t-\tau),s_{t-\tau + 1}]\\ 
    &= (L_{t-\tau} - L_{t-\tau}^*)E[(C(s_t,a_t,s_{t+1}\gamma(t-\tau)) - C(\tilde{s_t} , \tilde{a_t} ,\tilde{s}_{t+1}, \gamma(t-\tau)))|L_{t-\tau},\theta_{t-\tau},\gamma(t-\tau),s_{t-\tau + 1}].
\end{align*}
Now,
\begin{align*}
   &E[(C(s_t,a_t,s_{t+1}\gamma(t-\tau)) - C(\tilde{s_t} , \tilde{a_t} ,\tilde{s}_{t+1}, \gamma(t-\tau)))|L_{t-\tau},\theta_{t-\tau},\gamma(t-\tau),s_{t-\tau + 1}]\\
   &\qquad\leq 2U_rd_{TV}(P(O_t = \cdot |s_{t-\tau +1},\theta_{t-\tau}),(P(\tilde{O_{t}} = \cdot|s_{t-\tau +1},\theta_{t-\tau})).
\end{align*}

The following bound on the total variation norm has been shown in the proof of lemma D.2 of \cite{fta_2_timescale}: 
\begin{align*}
    d_{TV}(P(O_t = \cdot|s_{t-\tau +1},\theta_{t-\tau}),(P(\tilde{O_{t}} = \cdot|s_{t-\tau +1},\theta_{t-\tau})) \leq \frac{1}{2}\vert A \vert L\sum_{i=t-\tau}^{t} E\Vert\theta_i - \theta_{t-\tau} \Vert. 
\end{align*}
Plugging this bound above we have,
\begin{align*}
    \vert E[(\Xi(O_t,L_{t-\tau},\theta_{t-\tau},\gamma(t-\tau)) - \Xi(\tilde{O_t},L_{t-\tau},\theta_{t-\tau},\gamma(t-\tau)))|L_{t-\tau},\theta_{t-\tau},\gamma(t-\tau),s_{t-\tau + 1}] \vert \leq 2U_r^2\vert A \vert L\sum_{i=t-\tau}^{t} E\Vert\theta_i - \theta_{t-\tau} \Vert.
\end{align*}
The claim follows.

\begin{sublemma} \label{Lemma:Xi_5}
Conditioned on $s_{t-\tau +1 },\theta_{t-\tau},L_{t-\tau}, \gamma(t-\tau)$, we have 
\begin{align*}
    E[\Xi(\tilde{O_t},L_{t-\tau},\theta_{t-\tau},\gamma(t-\tau))|L_{t-\tau},\theta_{t-\tau},\gamma(t-\tau),s_{t-\tau + 1}] \leq 4U_r^2bk^{\tau - 1}
\end{align*}
\end{sublemma}
\textit{Proof}
The proof follows in a similar manner as Lemma D.7 of \cite{fta_2_timescale}.

After collecting the corresponding results from lemmas \ref{Lemma:Xi_1}--\ref{Lemma:Xi_5}, we have
\begin{align*}
      E[\hat{\Xi}(L_t,\theta_t,\gamma(t),q(t),h(t))] &\leq 6U_rN(U_c + U_\alpha)E\vert \gamma_p(t) - \gamma_p(t-\tau) \vert +8U_r\overline{C}E\Vert \theta_t - \theta_{t-\tau} \Vert +2U_rE\vert L_t - L_{t-\tau} \vert\\
     &\qquad+2U_r^2\vert A \vert L\sum_{i=t-\tau}^{t} E\Vert\theta_i - \theta_{t-\tau} \Vert+4U_r^2bk^{\tau - 1}.
\end{align*}
\end{proof}

\textit{Proof of Theorem 2: Estimating the Average Reward for Constrained Actor critic}

We have the following update rule in the algorithm that we now analyze:
\begin{align*}
    L_{t+1} = L_t + a(t)(q(t) + \sum_{k=1}^{N}\gamma_{k}(t)(h_k(t)-\alpha_k)) - L_t).
\end{align*}

\hfill\break
Unrolling the above, we obtain
\begin{align*}
    y_{t+1}^2 &= (L_{t+1} - L_{t+1}^{*})^2\\
& =\left(L_t + a(t)\left(q(t) + \sum_{k=1}^{k=N}\gamma_{k}(t)(h_k(t)-\alpha_k) - L_t\right) - L_{t+1}^{*}\right)^2 \\
&= \left(y_t + L_t^* - L_{t+1}^* + a(t)\left(q(t) + \sum_{k=1}^{k=N}\gamma_{k}(t)(h_k(t)-\alpha_k) - L_t\right)\right)^2\\
&= y_t^2 + 2a(t)y_t(C_t - L_t) + 2y_t(L_t^* - L_{t+1}^*) + (L_t^* - L_{t+1}^* + a(t)(C_t - L_t))^2\\
&\leq y_t^2 + 2a(t)y_t(C_t - L_t) + 2y_t(L_t^* - L_{t+1}^*) + 2(L_t^* - L_{t+1}^*)^2 + 2a(t)^2(C_t - L_t)^2\\
&= y_t^2 - 2a(t)y_t^2 + 2a(t)y_t^2+ 2a(t)y_t(C_t - L_t) + 2y_t(L_t^* - L_{t+1}^*) + 2(L_t^* - L_{t+1}^*)^2 + 2a(t)^2(C_t - L_t)^2\\
&= y_t^2 - 2a(t)y_t^2 +  2a(t)y_t(C_t - L_t + y_t) + 2y_t(L_t^* - L_{t+1}^*) + 2(L_t^* - L_{t+1}^*)^2 + 2a(t)^2(C_t - L_t)^2\\
&= (1-2a(t))y_t^2 + 2a(t)y_t(C_t - L_t^*) +  2y_t(L_t^* - L_{t+1}^*) + 2(L_t^* - L_{t+1}^*)^2 + 2a(t)^2(C_t - L_t)^2,
\end{align*}
where $C_t = q(t) + \sum_{k=1}^{N}\gamma_{k}(t)(h_k(t)-\alpha_k)$.The first inequality is due to $( x + y )^2 \leq 2x^2 + 2y^2$. Rearranging and summing from $\tau_t$ to $t$, we have
\begin{align*}
  \sum_{k = \tau_t}^{t}E[y_k^2] &\leq \underbrace{\sum_{k = \tau_t}^{t} \frac{1}{2a(k)}E(y_k^2 - y_{k+1}^2)}_{I_1} + \underbrace{\sum_{k=\tau_t}^{t}E[\hat{\Xi}(L_k,\theta_k,\gamma(k),q(k),h(k))] }_{I_2}\\
&\qquad+ \underbrace{\sum_{k = \tau_t}^{t} \frac{1}{a(k)}E[y_k(L_k^* - L_{k+1}^*)]}_{I_3} + \underbrace{\sum_{k = \tau_t}^{t} \frac{1}{a(k)}E[(L_k^* - L_{k+1}^*)^2]}_{I_4}\\
&\qquad+ \underbrace{\sum_{k = \tau_t}^{t}a(k)E[(C_k - L_k)^2]}_{I_5}.
\end{align*}
We now consider $I_1,\ldots,I_5$ term by term.
For $I_1$, we have
\begin{align*}
    I_1 &= \sum_{k = \tau_t}^{t} \frac{1}{2a(k)}E(y_k^2 - y_{k+1}^2)\\
&= \sum_{k = \tau_t}^{t}(\frac{1}{2a(k)} - \frac{1}{2a(k-1)})E[y_k^2] + \frac{1}{2a(\tau_t - 1)}E[y_{\tau_t}^2] - \frac{1}{2a(t)}E[y_{t+1}^2]\\
&\leq \frac{2U_r^2}{a(t)}.
\end{align*}

For $I_2$, from Lemma \ref{bound_Xi}, we have
\begin{align*}
E[\hat{\Xi}(L_t,\theta_t,\gamma(t),q(t),h(t))] &\leq 6U_rN(U_c + U_\alpha)E\vert \gamma_p(t) - \gamma_p(t-\tau) \vert +8U_r\overline{C}E\Vert \theta_t - \theta_{t-\tau} \Vert +2U_rE\vert L_t - L_{t-\tau} \vert\\
     &\qquad+2U_r^2\vert A \vert L\sum_{i=t-\tau}^{t} E\Vert\theta_i - \theta_{t-\tau} \Vert+4U_r^2bk^{\tau - 1}\\
&\leq 16U_r\overline{C}D(U_r + U_v)\tau b(t-\tau) + 6U_rN(U_c + U_\alpha)\tau(U_c + U_\alpha)c(t-\tau)+ 4U_r^2\tau a(t-\tau) \\
&\qquad+ 4DU_r^2(U_r + U_v)\vert A \vert L\tau(\tau +1 )b(t-\tau) + 4U_r^2bk^{\tau - 1}\\
&\leq B_1\tau^2b(t-\tau) + B_2\tau a(t-\tau) + B_3bk^{\tau - 1} + B_4\tau c(t-\tau). 
\end{align*}
By the choice of $\tau_t$,  we then have
\begin{align*}
   I_2 &= \sum_{k=\tau_t}^{t}E[\hat{\Xi}(L_t,\theta_t,\gamma(t),q(t),h(t))]\\
&\leq (B_1\tau_t^2 + B_3)\sum_{k=\tau_t}^{t}b(k-\tau_t)+ B_2\tau_t\sum_{k=\tau_t}^{t}a(t-\tau_t) + B_4\tau_t\sum_{k=\tau_t}^{t}c(t-\tau_t).
\end{align*}
For $I_3$, we have
\begin{align*}
    I_3 &\leq \left(\sum_{k = \tau_t}^{t}E[y_k^2]\right)^{1/2}\left(\sum_{k = \tau_t}^{t}E\left[\frac{(L_k^* - L_{k+1}^*)^2}{a(k)^2}\right]\right)^{1/2}\\
&\qquad \leq \left(\sum_{k = \tau_t}^{t}E[y_k^2]\right)^{1/2}\left(\sum_{k = \tau_t}^{t}E\left[\frac{(C_1 \Vert \theta_k - \theta_{k+1} \Vert + C_2\Vert \gamma(k)^p - \gamma(k+1)^p \Vert)^2}{a(k)^2}\right]\right)^{1/2}\\
&\qquad \leq \left(\sum_{k = \tau_t}^{t}E[y_k^2]\right)^{1/2}\left(\sum_{k = \tau_t}^{t}E\left[\frac{(2C_1D(U_r + U_v)b(k) + C_2(U_c + U_\alpha)c(k))^2}{a(k)^2}\right]\right)^{1/2}\\
&\qquad \leq \left(\sum_{k = \tau_t}^{t}E[y_k^2]\right)^{1/2}\left(\overline{K}^2\sum_{k = \tau_t}^{t}\frac{b(k)^2}{a(k)^2}\right)^{1/2},
\end{align*} 
where $\overline{K} = ((2C_1D(U_r + U_v) + (U_c + U_\alpha))$.

For $I_4$, we have
\begin{align*}
    I_4 &= \sum_{k = \tau_t}^{t} \frac{1}{a(k)}E[(L_k^* - L_{k+1}^*)^2]\\
&= \sum_{k = \tau_t}^{t} \frac{1}{a(k)}E[(L(\theta_k,\gamma(k)) - L(\theta_{k+1},\gamma(k+1)))^2]\\
&\leq \sum_{k = \tau_t}^{t} \frac{\overline{K}^2b(k)^2}{a(k)}\\
& = \mathcal{O}\left(\sum_{k = \tau_t}^{t}\frac{b(k)^2}{a(k)}\right).
\end{align*}
For $I_5$, we have
\begin{align*}
I_5 &= \sum_{k = \tau_t}^{t}a(k)E[(C_k - L_k)^2]\\
&\leq \sum_{k = \tau_t}^{t}4U_r^2a(k)\\
&= \mathcal{O}\left(\sum_{k = \tau_t}^{t}a(k)\right).
\end{align*}
Next, after combining $I_1,\ldots,I_5$, using the uniform ergodicity requirement (Assumption 3), the definition of $\tau_t$ and the relation between step-size and mixing time in Equation (4) of the main paper, we obtain the following:

\begin{align*}
    \sum_{k=\tau_t}^{t}E[y_k^2] &\leq \frac{2U_r^2}{c_a}(1+t)^{\omega} +  (B_1\tau_t^2 + B_3)c_b\sum_{k=\tau_t}^{t}(1+k-\tau_t)^{-\sigma}+ B_2c_a\tau_t\sum_{k=\tau_t}^{t}(1+k-\tau_t)^{-\omega}\\
&\qquad +B_4\tau_tc_c\sum_{k=\tau_t}^{t}(1+k-\tau_t)^{-\beta} + \overline{K}\frac{c_b}{c_a}(\sum_{k = \tau_t}^{t}E[y_k^2])^{1/2}(\sum_{k = \tau_t}^{t}(1+k)^{-2(\sigma - \omega)})^{1/2}\\
&\qquad + \overline{K}\frac{c_b^2}{c_a}\sum_{k = \tau_t}^{t} (1+k)^{\omega-2\sigma} + 4U_r^2c_a\sum_{k = \tau_t}^{t}(1+k)^{-\omega}\\
& \leq \frac{2U_r^2}{c_a}(1+t)^{\omega} + ((B_1\tau_t^2 + B_3)c_b + B_2c_a\tau_t + \overline{K}c_b^2 + 4U_r^2c_a + B_4\tau_tc_c)\sum_{k=\tau_t}^{t}(1+k-\tau_t)^{-\omega} \\
&\qquad+ \overline{K}\frac{c_b}{c_a}(\sum_{k = \tau_t}^{t}E[y_k^2])^{1/2}(\sum_{k = \tau_t}^{t}(1+k)^{-2(\sigma - \omega)})^{1/2}\\
& \leq \frac{2U_r^2}{c_a}(1+t)^{\omega} + ((B_1\tau_t^2 + B_3)c_b + B_2c_a\tau_t + \overline{K}c_b^2 + 4U_r^2c_a + B_4\tau_tc_c)\sum_{k=0}^{t -\tau_t}(1+k)^{-\omega}\\ 
&\qquad+  \overline{K}\frac{c_b}{c_a}(\sum_{k = \tau_t}^{t}E[y_k^2])^{1/2}(\sum_{k = \tau_t}^{t}(1+k)^{-2(\sigma - \omega)})^{1/2}\\
& \leq \frac{2U_r^2}{c_a}(1+t)^{\omega} + ((B_1\tau_t^2 + B_3)c_b + B_2c_a\tau_t + \overline{K}c_b^2 + 4U_r^2c_a + B_4\tau_tc_c)\frac{(t-\tau_t +1)^{1-\omega}}{1-\omega}\\
&\qquad+ \overline{K}\frac{c_b}{c_a}(\sum_{k = \tau_t}^{t}E[y_k^2])^{1/2}(\frac{(1+t-\tau_t)^{1-2(\sigma-\omega)}}{1-2(\sigma - \omega)})^{1/2}\\
\end{align*}
Note also that we have used above the precise form of the step-sizes as mentioned towards the end of Section 4.1 (main paper).
After applying the squaring technique (as in proof of Theorem \ref{thmm:actor_1}), we have:
\begin{align*}
    \sum_{k=\tau_t}^{t}E[y_k^2] &\leq \frac{4U_r^2}{c_a}(1+t)^{\omega}  + 2((B_1\tau_t^2 + B_3)c_b + B_2c_a\tau_t + \overline{K}c_b^2 + 4U_r^2c_a + B_4\tau_tc_c)\frac{(t-\tau_t +1)^{1-\omega}}{1-\omega}\\
    &\qquad \qquad+ 8\overline{K}^2\frac{c_b^2}{c_a^2}\frac{(1+t-\tau_t)^{1-2(\sigma-\omega)}}{1-2(\sigma - \omega)}\\
    & = \mathcal{O}(t^\omega) + \mathcal{O}(\log^2t \cdot t^{1-\omega}) + \mathcal{O}(t^{1-2(\sigma - \omega)}) .
\end{align*}

Dividing by $(1+t-\tau_t)$  and assuming $t \geq 2\tau_t - 1$, we have
\begin{align*}
    \sum_{k=\tau_t}^{t}E[y_k^2]/(1+t-\tau_t) = \mathcal{O}(t^{\omega-1}) + \mathcal{O}(\log^2t \cdot t^{-\omega}) + \mathcal{O}(t^{-2(\sigma - \omega)}).
\end{align*}

\subsection{Proof of Theorem 2: Estimating the convergence point of Critic for Constrained Actor Critic }\label{thm2_critic_converge}
\hfill\break
We first describe the notations used here.
\begin{align}
    \begin{split}
        O_t &:= (s_t,a_t,s_{t+1}),\\
        O &:= (s,a,s^{'}),\\
        v^*(t) &:= v^*(\theta_t,\gamma(t)),\\
         L_t^* &:= L(\theta_t,\gamma(t)),\\
         m_t &:= v_{t} - v^*(t),\\
         y_t &:= L_t - L_t^*,\\
        g(O,v,\theta,\gamma,q,h) &:= (q + \sum_{k=1}^{k=N}\gamma(k)(h_k-\alpha_k) - L(\theta,\gamma) + v^{T}(f_s^{'} - f_s))f_s,\\
        c(s,a,s^{'},\gamma) &:= \sum\limits_{q}(q\cdot \bar{p}(q|s,a,s^{'})) + \sum\limits_{k=1}^{k=N}\gamma(k)(\sum\limits_{h}(h\cdot p_{k}(h|s,a,s^{'}) )- \alpha_k), \\
        \overline{g}(v,\theta,\gamma) &:= E_{s \sim \mu_\theta,a \sim \pi_\theta,s^{'} \sim p}[(c(s,a,s^{'},\gamma) - L(\theta,\gamma) + (f_{s^{'} }- f_s)^Tv)f_s],\\
        \Lambda(O,v,\theta,\gamma,q,h) &:= \langle v - v^*(\theta,\gamma) , g(O,v,\theta,\gamma,q,h) - \overline{g}(v,\theta,\gamma) \rangle,\\
        \Delta g(O,L,\theta,\gamma) &:= (L(\theta,\gamma) - L)f_s.
   \end{split}
\end{align}
In the above,  $h = (h_1,h_2,h_3,...,h_N)$.

Before we proceed further, we first state and prove Lemma 7 below that will be used in the proof of Theorem 2(estimating the convergence point of critic) . 

\begin{lemma}
\label{bound_Lambda}
From the definition of $\Lambda(O_t,v_{t},\theta_t,\gamma(t),q(t),h(t))$, for any $0 \leq \tau \leq t$, we have
\begin{align*}
     E[\Lambda(O_t,v_t,\theta_t,\gamma(t),q(t),h(t))] &\leq C_1(\tau +1)E\Vert \theta_t - \theta_{t-\tau}\Vert + C_2 bk^{\tau - 1} + C_3E\Vert v_{t} - v(t-\tau) \Vert \\
&\qquad+ C_4E\vert  \gamma_m(t) - \gamma_m(t-\tau)\vert,
\end{align*}
where $C_1,C_2,C_3,C_4$ are positive constants and $\vert \gamma_m(t) - \gamma_m(t-\tau) \vert = \max\limits_{i=1,2,...,N}\vert \gamma_i(t) - \gamma_i(t-\tau) \vert$.
\end{lemma}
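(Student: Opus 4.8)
The plan is to decompose $E[\Lambda(O_t,v_t,\theta_t,\gamma(t),q(t),h(t))]$ exactly as in the proofs of Lemmas~\ref{lemma:gamma} and \ref{bound_Xi}, by introducing the auxiliary Markov chain \eqref{chain-au} and inserting a telescoping chain of intermediate terms that replace the current iterates $(\theta_t,\gamma(t),v_t,O_t)$ by their delayed counterparts $(\theta_{t-\tau},\gamma(t-\tau),v_{t-\tau},\widetilde O_t)$ and then by the stationary sample $O_t'$. First I would take the conditional expectation with respect to $q(t),h(t)$ given $(s_t,a_t,s_{t+1})$, using that $\theta_t,\gamma(t),v_t$ do not depend on $q(t),h(t)$, so that $g(\cdot)$ averages to $\overline{g}(v_t,\theta_t,\gamma(t))$ and the object becomes $E[\langle v_t - v^*(\theta_t,\gamma(t)),\ \overline{g}(O_t,v_t,\theta_t,\gamma(t)) - \overline{g}(v_t,\theta_t,\gamma(t))\rangle]$ where $\overline g(O,\cdot)$ is the per-sample version. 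Call this $E[\Lambda'(O_t,v_t,\theta_t,\gamma(t))]$; note the stationary term $E[\Lambda'(O_t',v_{t-\tau},\theta_{t-\tau},\gamma(t-\tau))]=0$ because under the stationary distribution $\overline g(O_t',\cdot)$ averages to $\overline g(\cdot)$.

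Then I would bound each increment in the telescoping sum by a Lipschitz-type estimate, exactly mirroring the sublemmas already used:
\begin{itemize}
\item[(i)] the $\gamma$-increment $|\Lambda' (O,v,\theta,\gamma^1) - \Lambda'(O,v,\theta,\gamma^2)|$ is $\le C_4'|\gamma^1_m - \gamma^2_m|$, using boundedness of $v,v^*$ (set $C$, Assumption~\ref{bound_G} / $\|v\|\le U_v$), Proposition~\ref{prop:optimal-lipschitz2} for $\|v^*(\theta,\gamma^1)-v^*(\theta,\gamma^2)\|$, Lemma~\ref{lemma: L_bound_theta_gamma} for $|L(\theta,\gamma^1)-L(\theta,\gamma^2)|$, and $|c(s,a,s',\gamma^1)-c(s,a,s',\gamma^2)|\le N(U_c+U_\alpha)|\gamma^1_m-\gamma^2_m|$;
\item[(ii)] the $\theta$-increment $|\Lambda'(O,v,\theta_1,\gamma) - \Lambda'(O,v,\theta_2,\gamma)|\le C_1'\|\theta_1-\theta_2\|$, using Proposition~\ref{prop:optimal-lipschitz1}, Lemma~\ref{lemma: L_bound_theta_gamma}, $\|f_s\|\le 1$ (Assumption~\ref{assum:bounded_feature_norm}), and that $\overline g(\cdot,\theta,\gamma)$ and its sample version are Lipschitz in $\theta$ (this requires the $d_{TV}$-Lipschitzness of $\mu_\theta\otimes\pi_\theta$, i.e.\ Lemma~B.1 of \cite{fta_2_timescale});
\item[(iii)] the $v$-increment $|\Lambda'(O,v_1,\theta,\gamma) - \Lambda'(O,v_2,\theta,\gamma)|\le C_3'\|v_1 - v_2\|$, since $v\mapsto \langle v - v^*,\ \overline g(O,v,\cdot)-\overline g(v,\cdot)\rangle$ is (at most) quadratic on the compact set $C$, hence Lipschitz there with $\|f_s\|\le 1$;
\item[(iv)] the chain-coupling term $E[\Lambda'(O_t,\cdot)-\Lambda'(\widetilde O_t,\cdot)\mid \mathcal F_{t-\tau}]\le C_1''\sum_{i=t-\tau}^t E\|\theta_i-\theta_{t-\tau}\|$ via the $d_{TV}$ bound of Lemma~D.2 of \cite{fta_2_timescale} (the two chains share dynamics up to accumulated actor drift), using that $|\Lambda'|$ is uniformly bounded on the compact domain;
\item[(v)] the mixing term $E[\Lambda'(\widetilde O_t,\cdot)\mid \mathcal F_{t-\tau}]\le C_2' b k^{\tau-1}$ by uniform ergodicity (Assumption~\ref{assum:ergodicity}), exactly as in Lemma~D.3 / D.7 of \cite{fta_2_timescale}.
\end{itemize}
Summing (i)--(v) and absorbing the accumulated drift $E\|\theta_t-\theta_{t-\tau}\|$ and $\sum_{i=t-\tau}^t E\|\theta_i - \theta_{t-\tau}\|$ into a single $\mathcal O((\tau+1)\,E\|\theta_t-\theta_{t-\tau}\|)$ term (and similarly for $v$ and $\gamma$) gives the stated bound with suitable constants $C_1,C_2,C_3,C_4$.

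The main obstacle is item (iii)/(iv): unlike in the average-reward estimate (Lemma~\ref{bound_Xi}), $\Lambda$ is not affine in the iterates because $g(O,v,\cdot)$ depends on $v$ through $v^T(f_{s'}-f_s)$, so the inner product is genuinely quadratic in $v$. I would handle this by exploiting that $v_t,v_{t-\tau}\in C$ which is compact and convex by assumption, so $\|v_t\|,\|v_{t-\tau}\|\le U_v$ and $\|v^*\|\le U_v$ (since $v^*=-\Ab^{-1}\bbb$ lies in the bounded set by Assumption~\ref{assum:negative-definite} and the bound $\|\bbb\|\le 2U_r$), which makes every quadratic expression locally Lipschitz with explicit constants; and that the one-step drift $\|v_{t}-v_{t-1}\| = \|\Gamma(\cdot)-\Gamma(\cdot)\|\le \|a(t-1)\delta_{t-1}f_{s_{t-1}}\| = \mathcal O(b(t-1))$ (nonexpansiveness of $\Gamma$, boundedness of $\delta$) so that $E\|v_t-v_{t-\tau}\| = \mathcal O(\tau\, a(t-\tau))$, which will later feed into the $\omega$-rate. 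Everything else is a routine, if lengthy, replay of the sublemma structure already established for $\check\Gamma$ and $\hat\Xi$.
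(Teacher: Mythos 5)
Your proposal follows essentially the same route as the paper's proof: condition on $(s_t,a_t,s_{t+1})$ to replace the random costs by their means, telescope over the $\gamma$-, $\theta$-, and $v$-arguments down to $(\theta_{t-\tau},\gamma(t-\tau),v_{t-\tau})$, couple with the auxiliary chain and the stationary sample (whose contribution vanishes), and bound each increment by Lipschitz estimates from Propositions \ref{prop:optimal-lipschitz1}--\ref{prop:optimal-lipschitz2} and Lemma \ref{lemma: L_bound_theta_gamma} plus total-variation/mixing bounds as in \cite{fta_2_timescale}. The paper only spells out the $\gamma$-increment and refers to Lemmas D.8--D.11 of \cite{fta_2_timescale} for the remaining terms, so your write-up is just a more explicit version of the same argument (the stray $\mathcal{O}(b(t-1))$ for the one-step critic drift should read $\mathcal{O}(a(t-1))$, but this does not affect the lemma).
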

\begin{proof}
    We have,
    \begin{align*}
        E[\Lambda(O_t,v_t,\theta_t,\gamma(t),q(t),h(t))] = E[\langle v_t - v^*(\theta_t,\gamma(t)) , g(O_t,v_t,\theta_t,\gamma(t),q(t),h(t)) - \overline{g}(v_t,\theta_t,\gamma(t)) \rangle].
    \end{align*}
    Note now that $v_t,\theta_t,\gamma(t)$ do not depend on $q(t)$ and $h(t)$. Hence we can write,
    \begin{align*}
        &E[\langle v_t - v^*(\theta_t,\gamma(t)) , g(O_t,v_t,\theta_t,\gamma(t),q(t),h(t)) - \overline{g}(v_t,\theta_t,\gamma(t)) \rangle]\\
        &=E_{s_{t} \sim p,a_t \sim \pi_{\theta_t},s_{t+1} \sim p}[E[\langle v_t - v^*(\theta_t,\gamma(t)) , g(O_t,v_t,\theta_t,\gamma(t),q(t),h(t)) - \overline{g}(v_t,\theta_t,\gamma(t)) \rangle|s_t,a_t,s_{t+1}]]\\
        &= E[\langle v_t - v^*(\theta_t,\gamma(t)) , \check{g}(O_t,v_t,\theta_t,\gamma(t)) - \overline{g}(v_t,\theta_t,\gamma(t)) \rangle],
    \end{align*}
    where, 
    \begin{align*}
         \check{g}(O_t,v_t,\theta_t,\gamma(t)) &= (c(s_t,a_t,s_{t+1},\gamma(t)) - L(\theta,\gamma) + v^{T}(f_s^{'} - f_s))f_s,\\
         c(s_t,a_t,s_{t+1},\gamma(t)) &= \sum\limits_{q}(q\cdot \bar{p}(q|s_t,a_t,s_{t+1})) + \sum\limits_{k=1}^{k=N}\gamma_{k}(t)(\sum\limits_{h}(h\cdot p_{k}(h|s_t,a_t,s_{t+1}) )- \alpha_k).
    \end{align*}
    Let $\langle v_t - v^*(\theta_t,\gamma(t)) , \check{g}(O_t,v_t,\theta_t,\gamma(t)) - \overline{g}(v_t,\theta_t,\gamma(t)) \rangle = \bar{\Lambda}(O_t,v_t,\theta_t,\gamma(t))$. 
    Note that we can decompose $E[\bar{\Lambda}(O_t,v_t,\theta_t,\gamma(t))]$ as follows:
    \begin{align*}
        E[\bar{\Lambda}(O_t,v_t,\theta_t,\gamma(t))] &= \underbrace{E[\bar{\Lambda}(O_t,v_t,\theta_t,\gamma(t)) -\bar{\Lambda}(O_t,v_t,\theta_t,\gamma(t-\tau)) ]}_{I_1} + \underbrace{E[\bar{\Lambda}(O_t,v_t,\theta_t,\gamma(t-\tau)) -\bar{\Lambda}(O_t,v_t,\theta_{t-\tau},\gamma(t-\tau)) ]}_{I_2}\\
        &\qquad+ \underbrace{E[ \bar{\Lambda}(O_t,v_t,\theta_{t-\tau},\gamma(t-\tau)) - \bar{\Lambda}(O_t,v_{t-\tau},\theta_{t-\tau},\gamma(t-\tau)) ]}_{I_3}\\
        &\qquad + \underbrace{E[\bar{\Lambda}(O_t,v_{t-\tau},\theta_{t-\tau},\gamma(t-\tau)) - \bar{\Lambda}(\tilde{O_t},v_{t-\tau},\theta_{t-\tau},\gamma(t-\tau))]}_{I_4}\\
        &\qquad +\underbrace{E[\bar{\Lambda}(\tilde{O_t},v_{t-\tau},\theta_{t-\tau},\gamma(t-\tau))]}_{I_5}.
    \end{align*}

For term $I_1$,
\begin{align*}
    &\bar{\Lambda}(O_t,v_t,\theta_t,\gamma(t)) -\bar{\Lambda}(O_t,v_t,\theta_t,\gamma(t-\tau))\\
    &= \langle v_t - v^*(\theta_t,\gamma(t)) , \check{g}(O_t,v_t,\theta_t,\gamma(t)) - \overline{g}(v_t,\theta_t,\gamma(t)) \rangle\\
    &\qquad- \langle v_t - v^*(\theta_t,\gamma(t-\tau)) , \check{g}(O_t,v_t,\theta_t,\gamma(t-\tau)) - \overline{g}(v_t,\theta_t,\gamma(t-\tau)) \rangle\\
    &=\langle v_t - v^*(\theta_t,\gamma(t)) , \check{g}(O_t,v_t,\theta_t,\gamma(t)) -\check{g}(O_t,v_t,\theta_t,\gamma(t-\tau)) +\overline{g}(v_t,\theta_t,\gamma(t-\tau)) - \overline{g}(v_t,\theta_t,\gamma(t)) \rangle\\
    &\qquad + \langle  v^*(\theta_t,\gamma(t-\tau)) - v^*(\theta_t,\gamma(t))  , \check{g}(O_t,v_t,\theta_t,\gamma(t-\tau)) - \overline{g}(v_t,\theta_t,\gamma(t-\tau)) \rangle\\
    &\leq 8U_vN(U_c + U_\alpha)\vert \gamma_m(t) - \gamma_m(t-\tau) \vert +4L_2(U_r + U_v)\vert \gamma_m(t) - \gamma_m(t-\tau) \vert,
\end{align*}
where $\vert \gamma_m(t) - \gamma_m(t-\tau) \vert = \max\limits_{i=1,2,...,N}\vert \gamma_i(t) - \gamma_i(t-\tau) \vert$.

For the remaining terms $I_2--I_5$, exactly similar analysis as Lemmas D.8--D.11 of \cite{fta_2_timescale} can be carried out to obtain similar claims. For terms $I_4$ and $I_5$ we bound the expectation conditioned on $\theta_{t-\tau},\gamma(t-\tau),v_{t-\tau}$ and $s_{t-\tau +1}$. Hence, after combining all the terms we get
\begin{align*}
     E[\Lambda(O_t,v_t,\theta_t,\gamma(t),q(t),h(t))] &\leq C_1(\tau +1)E\Vert \theta_t - \theta_{t-\tau}\Vert + C_2 bk^{\tau - 1} + C_3E\Vert v_{t} - v(t-\tau) \Vert \\
&\qquad+ C_4E\vert  \gamma_m(t) - \gamma_m(t-\tau)\vert,
\end{align*}
where $C_1,C_2,C_3,C_4$ are positive constants.
\end{proof}

\textit{Proof of Theorem 2: Estimating the convergence point of Critic for Constrained Actor Critic} 

We use here the update rule of $v_{t}$ with projection. We shall assume here that the projection set $C$ is large enough so that $v^*(t+1)$ lies within the set $C$. If this is not the case, then the algorithm will practically converge to a point that is closest in $C$ to $v^*(t+1)$. We avoid such a case by assuming that $v^*(t+1)$ lies within $C$ itself. Recall also that the set $C$ is compact and convex which ensures that the point in $C$ to which the update with increment is projected to is the closest to it and is also unique. Thus, we obtain using the definition of $m_t$ described at the beginning of this section that
\begin{align*}
  \Vert m_{t+1} \Vert^2 &= \Vert \Gamma(v_{t} + a(t)\delta_{t}f_{s_t}) - v^*(t+1)\Vert^2\\
    &\leq \Vert v_{t} + a(t)\delta_{t}f_{s_t} - v^*(t+1)\Vert^2\\
    &= \Vert m_t + a(t)\delta_{t}f_{s_t} + v^*(t) - v^*(t+1) \Vert^2\\
    &= \Vert m_t + a(t)(q(t) + \sum_{k=1}^{N}\gamma_{k}(t)(h_k(t)-\alpha_k) - L_t + v_{t}^{T}(f_{s_{t+1}} - f_{s_t}))f_{s_t} + v^*(t) - v^*(t+1) \Vert^2\\
    &= \Vert m_t + a(t)(g(O_t,v_{t},\theta_t,\gamma(t),q(t),h(t)) + \Delta g(O_t,L_t,\theta_t,\gamma(t))) + v^*(t) - v^*(t+1) \Vert^2\\
    &= \Vert m_t \Vert^2 + 2a(t)\langle m_t,(g(O_t,v_{t},\theta_t,\gamma(t),q(t),h(t)) \rangle + 2a(t) \langle m_t ,\Delta g(O_t,L_t,\theta_t,\gamma(t)) \rangle\\
   &\qquad+ 2 \langle m_t,v^*(t) - v^*(t+1) \rangle \\
   &\qquad+ \Vert a(t)(g(O_t,v_{t},\theta_t,\gamma(t),q(t),h(t)) + \Delta g(O_t,L_t,\theta_t,\gamma(t))) + v^*(t) - v^*(t+1)\Vert^2\\
   & \leq \Vert m_t \Vert^2 + 2a(t)\langle m_t,(g(O_t,v_{t},\theta_t,\gamma(t),q(t),h(t)) \rangle + 2a(t) \langle m_t ,\Delta g(O_t,L_t,\theta_t,\gamma(t)) \rangle\\
   &\qquad + 2 \langle m_t,v^*(t) - v^*(t+1) \rangle \\
   &\qquad+ 2 a(t)^2\Vert (g(O_t,v_{t},\theta_t,\gamma(t),q(t),h(t)) + \Delta g(O_t,L_t,\theta_t,\gamma(t))) \Vert^2 + 2\Vert v^*(t) - v^*(t+1)\Vert^2\\
   & = \Vert m_t \Vert^2 + 2a(t)\langle m_t,\overline{g}(v_{t},\theta_t,\gamma(t)) \rangle + 2a(t)\Lambda(O_t,v_{t},\theta_t,\gamma(t),q(t),h(t))\\ 
   &\qquad + 2 a(t)\langle m_t , \Delta g(O_t,L_t,\theta_t,\gamma(t)) \rangle  + 2 \langle m_t,v^*(t) - v^*(t+1) \rangle \\
  &\qquad+ 2 a(t)^2\Vert (g(O_t,v_{t},\theta_t,\gamma(t),q(t),h(t)) + \Delta g(O_t,L_t,\theta_t,\gamma(t))) \Vert^2 + 2\Vert v^*(t) - v^*(t+1)\Vert^2\\
  & \leq \Vert m_t \Vert^2 + 2a(t)\langle m_t,\overline{g}(v_{t},\theta_t,\gamma(t)) \rangle + 2a(t)\Lambda(O_t,v_{t},\theta_t,\gamma(t),q(t),h(t)) \\
  &\qquad + 2 a(t)\langle m_t , \Delta g(O_t,L_t,\theta_t,\gamma(t)) \rangle  + 2 \langle m_t,v^*(t) - v^*(t+1) \rangle \\
  &\qquad+ 8 a(t)^2(U_r + U_v)^2+ 2\Vert v^*(t) - v^*(t+1)\Vert^2, 
\end{align*}
where the second inequality is due to $\Vert x + y \Vert^2 \leq 2\Vert x \Vert^2 + 2\Vert y \Vert^2$ and the third one is due to $\Vert (g(O_t,v_{t},\theta_t,\gamma(t),q(t),h(t)) + \Delta g(O_t,L_t,\theta_t,\gamma(t))) \Vert \leq 2(U_r + U_v)$. Now, as a consequence of Assumption 2, we have
\begin{align*}
    \langle m_t,\overline{g}(v_{t},\theta_t,\gamma(t)) \rangle &= \langle m_t,\overline{g}(v_{t},\theta_t,\gamma(t)) - \overline{g}(v^*(t),\theta_t,\gamma(t)) \rangle \\
    & = \langle m_t,E[(f_{s^{'} }- f_s)^T(v_{t} - v^*(t))f_s] \rangle \\
    & = m_t^TE[f_s(f_{s^{'} }- f_s)^T]m_t   \\
    &= m_t^TAm_t   \\
    &\leq -\lambda_e\Vert m_t \Vert^2,
\end{align*}
where the first equation is because of the equation in Section 4.1 of the main paper. Taking expectations up to $s_{t+1}$, we have
\begin{align*}
   E\Vert m_{t+1} \Vert^2 &\leq E\Vert m_t \Vert^2 + 2a(t)E\langle m_t,\overline{g}(v_{t},\theta_t,\gamma(t),) \rangle + 2a(t)E\Lambda(O_t,v_{t},\theta_t,\gamma(t),q(t),h(t))\\ 
   &\qquad + 2 a(t)E\langle m_t , \Delta g(O_t,L_t,\theta_t,\gamma(t)) \rangle  + 2 E\langle m_t,v^*(t) - v^*(t+1) \rangle \\
   &\qquad+ 8 a(t)^2(U_r + U_v)^2+ 2E\Vert v^*(t) - v^*(t+1)\Vert^2\\ 
   & \leq (1-2\lambda_ea(t))E\Vert m_t \Vert^2  \\
   &\qquad+ 2a(t)E\Lambda(O_t,v_{t},\theta_t,\gamma(t),q(t),h(t)) 
   + 2 a(t)E\langle m_t , \Delta g(O_t,L_t,\theta_t,\gamma(t)) \rangle  \\
   &\qquad+ 2 E\langle m_t,v^*(t) - v^*(t+1) \rangle\\
   &\qquad+ 8 a(t)^2(U_r + U_v)^2+ 2E\Vert v^*(t) - v^*(t+1)\Vert^2\\ 
    &\leq (1-2\lambda_ea(t))E\Vert m_t \Vert^2\\ 
    &\qquad+ 2a(t)E\Lambda(O_t,v_{t},\theta_t,\gamma(t),q(t),h(t)) 
    + 2 a(t)E \Vert m_t \Vert \vert y_t\vert\\
    &\qquad+ 2 E\Vert m_t\Vert(\Vert v^*(\theta_t,\gamma(t)) - v^*(\theta_t,\gamma(t+1))\Vert + \Vert v^*(\theta_t,\gamma(t+1)) - v^*(\theta_{t+1},\gamma(t+1)) \Vert) \\
    &\qquad+ 8 a(t)^2(U_r + U_v)^2\\
    &\qquad+ 4E[\Vert v^*(\theta_t,\gamma(t)) - v^*(\theta_t,\gamma(t+1))\Vert^2 + \Vert v^*(\theta_t,\gamma(t+1)) - v^*(\theta_{t+1},\gamma(t+1)) \Vert^2]\\
    &\leq (1-2\lambda_ea(t))E\Vert m_t \Vert^2  \\
    &\qquad+ 2a(t)E\Lambda(O_t,v_{t},\theta_t,\gamma(t),q(t),h(t)) 
    + 2 a(t)E \Vert m_t \Vert \vert y_t\vert\\
    &\qquad+ 2 E\Vert m_t\Vert(L_2\Vert \gamma_m(t) - \gamma_m(t+1)\Vert + L_1 \Vert \theta_t - \theta_{t+1} \Vert) \\
    &\qquad+ 8 a(t)^2(U_r + U_v)^2 + 4E[L_2^2\Vert \gamma_m(t) - \gamma_m(t+1)\Vert^2 + L_1^2 \Vert \theta_t - \theta_{t+1} \Vert^2] \\
    &\leq (1-2\lambda_ea(t))E\Vert m_t \Vert^2  \\
    &\qquad+ 2a(t)E\Lambda(O_t,v_{t},\theta_t,\gamma(t),q(t),h(t)) 
    + 2 a(t)E \Vert m_t \Vert \vert y_t\vert\\
    &\qquad+ 2 E\Vert m_t\Vert(L_2(U_c + U_\alpha)c(t) + 2DL_1(U_r + U_v)b(t) ) \\
    &\qquad+ 8 a(t)^2(U_r + U_v)^2 + 4E[L_2^2(U_c + U_\alpha)^2c(t)^2 + 4L_1^2D^2(U_r + U_v)^2b(t)^2 ]\\
   &\leq (1-2\lambda_ea(t))E\Vert m_t \Vert^2  \\
   &\qquad + 2a(t)E[\Lambda(O_t,v_{t},\theta_t,\gamma(t),q(t),h(t))]
  + 2 a(t)E \Vert m_t \Vert \vert y_t\vert\\
   &\qquad+ 2((L_2(U_c + U_\alpha) + 2DL_{1}(U_r + U_v))b(t) )  E\Vert m_t\Vert\\
   &\qquad+ 8 a(t)^2((U_r + U_v)^2 + 4L_2^2(U_c + U_\alpha)^2 + 16L_{1}^2D^2(U_r + U_v)^2) ).
\end{align*}
Rearranging now the terms in the inequality results in the following:
\begin{align*}
   2\lambda_e E\Vert m_t \Vert^2 &\leq \frac{1}{a(t)}\big{(}E\Vert m_t \Vert^2  - E\Vert m_{t+1} \Vert^2 \big{)} + 2E\Lambda(O_t,v_{t},\theta_t,\gamma(t),q(t),h(t)) \\
   &\qquad+  2 E \Vert m_t \Vert \vert y_t\vert+ B_q\frac{b(t)}{a(t)} )  E\Vert m_t\Vert + C_qa(t)\\
   &\leq \frac{1}{a(t)}\big{(}E\Vert m_t \Vert^2  - E\Vert m_{t+1} \Vert^2 \big{)} +2E\Lambda(O_t,v_{t},\theta_t,\gamma(t),q(t),h(t))  \\
   &\qquad + 2\sqrt{E \Vert m_t \Vert^2}\sqrt{ E y_t^2}+ B_q\frac{b(t)}{a(t)}  \sqrt{E \Vert m_t \Vert^2}+ C_qa(t), 
\end{align*}
where
\begin{align*}
    B_q &= 2((L_2(U_c + U_\alpha) + 2DL_{1}(U_r + U_v)), \\
    C_q &= 8 ((U_r + U_v)^2 + 4L_2^2(U_c + U_\alpha)^2 + 16L_{1}^2D^2(U_r + U_v)^2) ).
\end{align*}
Now,
\begin{align}\label{inequality:critic}
    2\lambda_e \sum_{k = \tau_t}^{t}E\Vert m_k \Vert^2 &\leq \underbrace{\sum_{k = \tau_t}^{t}\frac{1}{a(k)}\big{(}E\Vert m_k \Vert^2  - E\Vert m_{k+1} \Vert^2 \big{)} }_{I_1}\notag\\
    &\qquad + \underbrace{2\sum_{k = \tau_t}^{t}E\Lambda(O_k,v_{k},\theta_k,\gamma(k),q(k),h(k))}_{I_2} + \underbrace{2\sum_{k = \tau_t}^{t}\sqrt{E \Vert m_k \Vert^2}\sqrt{ E [y_k^2]}}_{I_3}\notag\\
   &\qquad+ \underbrace{B_q\sum_{k = \tau_t}^{t}\frac{b(k)}{a(k)} ) \sqrt{E \Vert m_k \Vert^2}}_{I_4}+ \underbrace{C_q\sum_{k = \tau_t}^{t}a(k)}_{I_5}.
\end{align}
\hfill\break
Consider now the term $I_1$. We have the following:
\begin{align*}
    I_1 &:= \sum_{k = \tau_t}^{t}\frac{1}{a(k)}\big{(}E\Vert m_k \Vert^2  - E\Vert m_{k+1} \Vert^2 \big{)}\\
   & =\sum_{k = \tau_t}^{t}\big{(}\frac{1}{a(k)} - \frac{1}{a(k-1)}\big{)}E\Vert m_k \Vert^2 + \frac{1}{a(\tau_t - 1)}E\Vert m_{\tau_t} \Vert^2 - \frac{1}{a(t)}E\Vert m_{t+1} \Vert^2\\
   & \leq \sum_{k = \tau_t}^{t}\big{(}\frac{1}{a(k)} - \frac{1}{a(k-1)}\big{)}E\Vert m_k \Vert^2 + \frac{1}{a(\tau_t - 1)}E\Vert m_{\tau_t} \Vert^2\\
   & \leq 4U_v^2\big{(} \sum_{k = \tau_t}^{t}\big{(}\frac{1}{a(k)} - \frac{1}{a(k-1)} \big{)} +  \frac{1}{a(\tau_t - 1)}\big{)}\\
   & = 4U_v^2\frac{1}{a(t)} = 4\frac{U_v^2}{c_a}(1+t)^{\omega} = \mathcal{O}(t^{\omega}).
\end{align*} 
For the term $I_2$, note that  
\begin{align*}
    I_2 &= 2\sum_{k = \tau_t}^{t}E[\Lambda(O_k,v_k,\theta_k,\gamma(k),q(k),h(k))]\\
    &\leq 4DC_1(U_r + U_v)(\tau_t +1)^2\sum_{k=\tau_t}^{t}b(k-\tau_t)+ 2C_2\sum_{k=\tau_t}^{t}b(t) \\
    &\qquad+ 4C_3(U_r + U_v)\tau_t\sum_{k=0}^{t-\tau_t}a(k)
    + 2C_4(U_c + U_\alpha)\tau_t\sum_{k=0}^{t-\tau_t}c(k)\\
    & \leq 4DC_1(U_r + U_v)(\tau_t +1)^2\sum_{k = 0}^{t-\tau_t}b(k)+ 2C_2(t-\tau_t + 1)b(t) \\
    &\qquad + (4C_3(U_r + U_v)\tau_t + 2C_4(U_c + U_\alpha)\tau_t)\sum_{k=0}^{t-\tau_t}a(k)\\  
    & \leq 4DC_1(U_r + U_v)(\tau_t +1)^2c_b\frac{(1+t-\tau_t)^{1-\sigma}}{1-\sigma} + 2C_2(t-\tau_t + 1)c_b(1+t)^{-\sigma}\\
    &\qquad+ (4C_3(U_r + U_v)\tau_t + 2C_4(U_c + U_\alpha)\tau_t)c_a\frac{(1+t-\tau_t)^{1-\omega}}{1-\omega}\\
   &\leq \big{[}\frac{4DC_1(U_r + U_v)(\tau_t +1)^2c_b}{1-\sigma} + 2C_2c_b + \frac{(4C_3(U_r + U_v)\tau_t + 2C_4(U_c + U_\alpha)\tau_t)c_a}{1-\omega}\big{]}(1+t)^{1-\omega}\\
   &= \mathcal{O}((\log t)^2t^{1-\omega}).
\end{align*}
Now, we get the following inequalities for the terms $I_3$, $I_4$ and $I_5$, respectively: 
\begin{align*}
    I_3 &:= 2\sum_{k = \tau_t}^{t}\sqrt{E \Vert m_k \Vert^2}\sqrt{ E y_k^2} \leq 2\big{(} \sum_{k = \tau_t}^{t }E  y_k^2\big{)}^{1/2}\big{(} \sum_{k = \tau_t}^{t}E \Vert m_k \Vert^2\big{)}^{1/2},\\
    I_4 &:= B_q\sum_{k = \tau_t}^{t}\frac{b(k)}{a(k)} ) \sqrt{E \Vert m_k \Vert^2} \leq  \big{(} \sum_{k = 0}^{t - \tau_t}\frac{b(k)^2}{a(k)^2}\big{)}^{1/2}\big{(} \sum_{k = \tau_t}^{t}E \Vert m_k \Vert^2\big{)}^{1/2}, \\
    I_5 &:= C_q\sum_{k = \tau_t}^{t}a(k) \leq C_qc_a(1+t)^{1-\omega}/(1-\omega).
\end{align*} 
Combining all the terms, we obtain
\begin{align*}
    2\lambda_e \sum_{k = \tau_t}^{t}E\Vert m_k \Vert^2
     &\leq \frac{4U_v^2}{c_a}(1+t)^{\omega}\\
     &\qquad+  \big{[}\frac{4DC_1(U_r + U_v)(\tau_t +1)^2c_b}{1-\sigma} + 2C_2c_b + \frac{(4C_3(U_r + U_v)\tau_t + 2C_4(U_c + U_\alpha)\tau_t + C_q)c_a}{1-\omega}\big{]}(1+t)^{1-\omega}\\
     &\qquad + \big{(} \sum_{k = \tau_t}^{t}E  y_k^2\big{)}^{1/2}\big{(} \sum_{k = \tau_t}^{t}E \Vert m_k \Vert^2\big{)}^{1/2} + \big{(} \sum_{k = 0}^{t - \tau_t}\frac{b(k)^2}{a(k)^2}\big{)}^{1/2}\big{(} \sum_{k = \tau_t}^{t}E \Vert m_k \Vert^2\big{)}^{1/2} .
\end{align*}
We assume $t \geq 2\tau_t - 1$. After substituting the value of $y_k$ and applying the squaring technique as in the proof of Theorem 1, we obtain
\begin{align*}
    \bigg(\sum_{k=\tau_t}^{t} E\Vert v_{k} - v^*(k) \Vert^2 \bigg)/(1+t-\tau_t)=\mathcal{O}\bigg(\frac{1}{t^{1-\omega}}\bigg) + \mathcal{O}\bigg(\frac{\log t}{t^{\omega}}\bigg) + \mathcal{O}\bigg(\frac{1}{t^{2(\sigma - \omega)}}\bigg).
\end{align*}

\begin{remark}
It is important to mention here that the requirement that $v^*_{t+1}$ lies within the projection region $C$ has also been made by \cite{fta_2_timescale} except however that they assume that the set $C$ is a ball of some radius $R_w$. We do not assume any such structure on the set $C$ except that it be compact and convex which suffices for our purpose.
\end{remark}

\subsection{Proof of Corollary 1}
Note that we have the following result from Theorem 1:
\begin{align}\label{thm_1_result}
    \min\limits_{0\leq k\leq t}E[\Vert\nabla L(\theta_k,\gamma(k))\Vert^2 ]  = \mathcal{O}(t^{\sigma - \beta}) ) + \mathcal{O}((\log t)^2 t^{-\sigma }) + \mathcal{O}(\epsilon_{app}) + \mathcal{O}(\varepsilon(t)),
\end{align}
where,
\begin{align*}
    \varepsilon(t) &= (2\sum_{k=\tau_t}^{t}E\Vert A_k\Vert^2 + 8\sum_{k=\tau_t}^{t}E\Vert B_k\Vert^2)/(1 + t -\tau_t), \\
     A_k  &=  L_k - L(\theta_k,\gamma(k)),\\
     B_k &= v_k - v(\theta_k,\gamma(k)).
\end{align*}
Now, from the results of Theorem 2, we have
\begin{align*}
    \varepsilon(t) = \mathcal{O}(t^{\omega-1}) + \mathcal{O}(\log t \cdot t^{-\omega}) + \mathcal{O}(t^{-2(\sigma - \omega)}).
\end{align*}
Substituting the above in (\ref{thm_1_result}), we have
\begin{align*}
    \min\limits_{0\leq k\leq t}E[\Vert\nabla L(\theta_k,\gamma(k))\Vert^2 ]   &=  \mathcal{O}(t^{\sigma - \beta})    + \mathcal{O}(\log^2t \cdot t^{-\sigma})+  \mathcal{O}(t^{\omega-1}) + \mathcal{O}(\log t \cdot t^{-\omega}) + \mathcal{O}(t^{-2(\sigma - \omega)}) + \mathcal{O}(\epsilon_{app})\\
    &= \mathcal{O}(t^{\sigma - \beta})    + \mathcal{O}(\log^2t \cdot t^{-\omega})+  \mathcal{O}(t^{\omega-1})  + \mathcal{O}(t^{-2(\sigma - \omega)}) + \mathcal{O}(\epsilon_{app})\\
    &= \mathcal{O}(t^{\sigma - \beta}) + \mathcal{O}(\log^2t \cdot t^{-\omega}) + \mathcal{O}(t^{-2(\sigma - \omega)}) + \mathcal{O}(\epsilon_{app}).
\end{align*}
The second equality holds because $\omega < \sigma$ while the third equality is true because $\sigma - \beta > \omega - 1$.
Optimising over the choice of $\omega, \sigma, \beta$, we obtain $\omega = 0.4$, $\sigma = 0.6$ and $\beta = 1$. Hence,
\begin{align*}
    \min\limits_{0\leq k\leq t}E[\Vert\nabla L(\theta_k,\gamma(k))\Vert^2 ]   = \mathcal{O}(\log^2 t \cdot t^{-0.4}) +\mathcal{O}(\epsilon_{app}).
\end{align*}
Therefore, in order to obtain an $\epsilon$-approximate (ignoring the approximation error as with \cite{fta_2_timescale}) stationary point of the performance function $L(\theta,\gamma)$, namely,
\begin{align*}
    \min\limits_{0\leq k\leq T}E[\Vert\nabla L(\theta_k,\gamma(k))\Vert^2 ]  = \mathcal{O}(\log^2 T \cdot T^{-0.4}) +\mathcal{O}(\epsilon_{app})  \leq \mathcal{O}(\epsilon_{app}) + \epsilon,
\end{align*}
we need to set $T = \tilde{\mathcal{O}}(\epsilon^{-2.5})$.

\subsection{Proof of Theorem 3}

We use the following notation here.
\begin{align*}
  \zeta(O,\theta,\gamma,q,h,G) =  \langle \nabla L(\theta,\gamma) , G^{-1}H(O,\theta,\gamma,q,h) E_{O^{'},q,h}[G^{-1}H(O^{'},\theta,\gamma,q,h)]\rangle,
\end{align*}
where $H(\cdot)$ has been defined in the proof of Theorem 1. Further, $O^{'} = (s,a,s^{'})$ denotes the independent sample $s \sim \mu_{\theta},$ $a \sim \pi_{\theta}$, $s^{'} \sim p(s,.,a)$. Hence, $E_{O^{'},q,h}[\cdot]$ denotes the expectation w.r.t.~the joint distribution of $s \sim \mu_{\theta},$ $a \sim \pi_{\theta}$, $s^{'} \sim p(s,.,a)$, $q \sim \bar{p}(.|s,a,s^{'})$, $h_{i} \sim p_{i}(.|s,a,s^{'})$, $i=1,\ldots,N$. The remaining notations are the same as those used in the proof of Theorem 1.

Now we will state and prove Lemma 8 below that will be used in the proof of Theorem 3 . Moreover, the proof of Lemma 8 shall rely on Lemmas \ref{Lemma:zeta_G}--\ref{Lemma:zeta_second} that we also state and prove in the following. Finally, collecting all these results together, we shall obtain the claim for Theorem 3.

\begin{lemma} \label{lemma:zeta}
For any $t \geq 0$,
\begin{align*}    E[\zeta(O_t,\theta_t,\gamma(t),q(t),h(t),G(t))] &\geq -(D_1(\tau + 1)\sum_{k=t-\tau + 1}^{t} E[\Vert \theta_k - \theta_{k-1} \Vert] + D_2bk^{\tau - 1} + T_1\sum_{i=t-\tau +1 }^{ t} E[\vert \gamma_m(i) - \gamma_m(i-1)\vert]\\
   &\qquad + T_{G}\sum_{i=t-\tau +1 }^{ t} E\Vert G(i)^{-1} - G(i-1)^{-1}\Vert),
\end{align*}
\end{lemma}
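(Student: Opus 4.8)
The plan is to mimic the proof of Lemma~\ref{lemma:gamma} (the corresponding statement for the non-natural actor critic), but now carry along the extra factor $G(t)^{-1}$ that sits between the gradient and $H(\cdot)$. First I would show, exactly as in the proof of Lemma~\ref{lemma:gamma}, that taking the conditional expectation over $q(t),h(t)$ (which are independent of $\theta_t,\gamma(t),G(t),O_t$) collapses $H(\cdot)$ to its ``bar'' version, so that
\begin{align*}
    E[\zeta(O_t,\theta_t,\gamma(t),q(t),h(t),G(t))] = E[\bar{Q}_G(O_t,\theta_t,\gamma(t),G(t))],
\end{align*}
where $\bar{Q}_G(O,\theta,\gamma,G) := \langle \nabla L(\theta,\gamma), G^{-1}\bar{H}(O,\theta,\gamma) - E_{O'}[G^{-1}\bar{H}(O',\theta,\gamma)]\rangle$, with $\bar{H}$ as in the proof of Lemma~\ref{lemma:gamma}. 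Then I would decompose $E[\bar{Q}_G(O_t,\theta_t,\gamma(t),G(t))]$ into a telescoping chain of five differences, replacing in turn $\gamma(t)\to\gamma(t-\tau)$, $\theta_t\to\theta_{t-\tau}$, $G(t)\to G(t-\tau)$, $O_t\to\tilde O_t$ (auxiliary chain), and $\tilde O_t\to O_t'$ (stationary sample), where the last term has zero expectation because $E_{O'}[\bar H(O',\theta,\gamma)]$ is subtracted off; this is the same skeleton as in Lemma~\ref{lemma:gamma} with one extra link for $G$.

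Next I would bound each difference by an auxiliary lemma. The $\gamma$-difference and $\theta$-difference terms are handled by Lemmas~\ref{Lemma:zeta_G}--\ref{Lemma:zeta_second} (the natural-actor analogues of Lemmas~\ref{Lemma:Gamma_gamma}--\ref{Lemma:Gamma_second}): I would establish Lipschitz-type bounds $|\bar Q_G(O,\theta,\gamma^1,G)-\bar Q_G(O,\theta,\gamma^2,G)|\le T_1'|\gamma^1_m-\gamma^2_m|$ and $|\bar Q_G(O,\theta_1,\gamma,G)-\bar Q_G(O,\theta_2,\gamma,G)|\le T_2'\|\theta_1-\theta_2\|$, using Assumption~\ref{bound_G} ($\|G^{-1}\|\le U_G$) to pull the constant $U_G$ out of each inner product, plus the already-proved bounds $\|\bar H(O,\theta,\gamma^1)-\bar H(O,\theta,\gamma^2)\|\le$ (const)$\cdot|\gamma^1_m-\gamma^2_m|$ and $\|\bar H(O,\theta_1,\gamma)-\bar H(O,\theta_2,\gamma)\|\le A_1\|\theta_1-\theta_2\|$ from the proofs of Lemmas~\ref{Lemma:Gamma_gamma} and \ref{Lemma:Gamma_theta}, and Lemmas~\ref{Lemma:L-smooth},~\ref{Lemma:L-smooth2} for the gradient-difference pieces. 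For the new $G$-difference link I would write $G(t)^{-1}\bar H - G(t-\tau)^{-1}\bar H = (G(t)^{-1}-G(t-\tau)^{-1})\bar H$ and bound $|\bar Q_G(O,\theta,\gamma,G(t))-\bar Q_G(O,\theta,\gamma,G(t-\tau))| \le 2 U_L (U_r+U_v)\|G(t)^{-1}-G(t-\tau)^{-1}\|$ (using $\|\bar H\|\le 2(U_r+U_v)$ and $\|\nabla L\|\le U_L$), which after telescoping $G(t)^{-1}-G(t-\tau)^{-1} = \sum_{i=t-\tau+1}^{t}(G(i)^{-1}-G(i-1)^{-1})$ gives the term $T_G\sum_{i=t-\tau+1}^{t}E\|G(i)^{-1}-G(i-1)^{-1}\|$. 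The auxiliary-chain difference (link $O_t\to\tilde O_t$) and the mixing difference (link $\tilde O_t\to O_t'$) are bounded by total-variation arguments identical to those in Sublemmas~\ref{Lemma:Gamma_first} and \ref{Lemma:Gamma_second}, again only picking up the harmless extra factor $U_G$; these give $2DU_G(U_r+U_v)U_L|A|L\sum_{i=t-\tau}^{t}E\|\theta_i-\theta_{t-\tau}\|$ and $4DU_G(U_r+U_v)U_L\, bk^{\tau-1}$ respectively.

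Finally I would collect the five bounds, convert $E\|\theta_t-\theta_{t-\tau}\|$ and the nested sums into $\sum_{k=t-\tau+1}^{t}E\|\theta_k-\theta_{k-1}\|$ via the triangle inequality (bounding the double sum by $\tau\sum_{j}E\|\theta_j-\theta_{j-1}\|$, exactly as in Lemma~\ref{lemma:gamma}), and absorb constants to get the $(\tau+1)$ prefactor, yielding the stated inequality with $D_1 := \max\{T_2', 2DU_G(U_r+U_v)U_LL|A|\}$, $D_2 := 4DU_G(U_r+U_v)U_L$, $T_1$ the $\gamma$-Lipschitz constant, and $T_G$ the $G$-Lipschitz constant. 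The main obstacle I anticipate is the bookkeeping for the new $G$-link: one must verify that $\bar Q_G$ is genuinely Lipschitz in $G^{-1}$ uniformly over the relevant range (which needs $\sup_t\|G(t)^{-1}\|\le U_G$ from Assumption~\ref{bound_G} and boundedness of $\bar H$ and $\nabla L$), and that inserting this extra link into the telescoping decomposition does not break the zero-expectation property of the final term — it does not, because the $O_t'$ in the last link is still drawn from the stationary distribution under $\theta_{t-\tau}$ and $G(t-\tau)$ is deterministic given the past, so $E[\bar Q_G(O_t',\theta_{t-\tau},\gamma(t-\tau),G(t-\tau))] = \langle\nabla L,\, E_{O'}[G^{-1}\bar H] - E_{O'}[G^{-1}\bar H]\rangle = 0$. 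Everything else is a routine re-run of the Theorem~1 machinery with $U_G$ carried along.
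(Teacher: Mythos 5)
Your proposal is correct and follows essentially the same route as the paper: collapse $H$ to $\bar H$ by conditioning on $(s_t,a_t,s_{t+1})$, telescope through $\gamma$, $\theta$, $G$, the auxiliary chain, and the stationary sample (whose expectation vanishes), bound each link by the natural analogues of the Lemma~\ref{lemma:gamma} sublemmas with the extra factor $U_G$ from Assumption~\ref{bound_G}, and then convert the increments into sums of adjacent differences to obtain the stated constants $D_1$, $D_2$, $T_1$, $T_G$. The only deviation is trivial bookkeeping (e.g.\ your bound $\|\bar H\|\le 2(U_r+U_v)$ omits the factor $D$ from $\|\nabla\log\pi_\theta\|$), which only changes the value of the absorbed constants, not the argument.
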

where $D_1,D_2,T_1,T_G$ are positive constants and   $t \geq \tau \geq 0$.

\begin{proof}
 We have 
\begin{align*}
&E[\zeta(O_t,\theta_t,\gamma(t),q(t),h(t),G(t))]\\
    &=
E_{s_{t} \sim p,a_t \sim \pi_{\theta_t},s_{t+1} \sim p}[E[\langle \nabla L(\theta_t,\gamma(t)) , G(t)^{-1}H(O_t,\theta_t,\gamma(t),q(t),h(t))\\
&\qquad- E_{O^{'},q,h}[G(t)^{-1}H(O^{'},\theta_t,\gamma(t),q,h)]\rangle|s_t,a_t,s_{t+1}]]\\
&= E[\langle \nabla L(\theta_t,\gamma(t)) , G(t)^{-1}\bar{H}(O_t,\theta_t,\gamma(t)) - E_{O^{'},q,h}[G(t)^{-1}H(O^{'},\theta_t,\gamma(t),q,h)]\rangle]\\
&=  E[\langle \nabla L(\theta_t,\gamma(t)) , G(t)^{-1}\bar{H}(O_t,\theta_t,\gamma(t)) - E_{O^{'}}[E_{q,h}[G(t)^{-1}H(O^{'},\theta_t,\gamma(t),q,h)|s,a,s^{'}]]\rangle]\\
& = E[\langle \nabla L(\theta_t,\gamma(t)) , G(t)^{-1}\bar{H}(O_t,\theta_t,\gamma(t)) - E_{O^{'}}[G(t)^{-1}\bar{H}(O^{'},\theta_t,\gamma(t))]\rangle]\\
&= E[\hat{Q}(O_t,\theta_t,\gamma(t),G(t))],
\end{align*}
where 
\begin{align*}
    \bar{H}(O,\theta,\gamma) &= (c(s,a,s^{'},\gamma) -L(\theta,\gamma)  +({f_{s^{'}}}^{T} - {f_{s}}^{T})v^{*}(\theta,\gamma))\nabla \log\pi_{\theta}(a|s),\\
    c(s,a,s^{'},\gamma) &= \sum\limits_{q}(q\cdot \bar{p}(q|s,a,s^{'})) + \sum\limits_{k=1}^{k=N}\gamma_{k}(\sum\limits_{h}(h\cdot p_{k}(h|s,a,s^{'}) )- \alpha_k).
\end{align*}
The proof makes use of the supporting lemmas \ref{Lemma:zeta_G}--\ref{Lemma:zeta_second} below.
\begin{sublemma}\label{Lemma:zeta_G}
For any $t \geq 0$,
\begin{align*}
    \vert \hat{Q}(O_t,\theta_t,\gamma(t),G(t)) - \hat{Q}(O_t,\theta_t,\gamma(t),G(t-\tau))\vert \leq T_{G}\Vert G(t)^{-1} - G(t-\tau)^{-1}\Vert,
\end{align*}
for some $T_G > 0$.
\end{sublemma}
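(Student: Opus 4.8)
\textbf{Proof proposal for Sublemma \ref{Lemma:zeta_G}.}
The plan is to exploit the fact that $\hat{Q}(O,\theta,\gamma,G)$ is \emph{linear} in the matrix argument through the factor $G^{-1}$, so that the difference collapses to an inner product against $(G(t)^{-1} - G(t-\tau)^{-1})$ applied to uniformly bounded vectors. Writing $G_1 = G(t)$, $G_2 = G(t-\tau)$, and using the definition of $\hat{Q}$ recalled in the proof of Lemma~\ref{lemma:zeta}, namely $\hat{Q}(O,\theta,\gamma,G) = \langle \nabla L(\theta,\gamma),\, G^{-1}\bar{H}(O,\theta,\gamma) - E_{O'}[G^{-1}\bar{H}(O',\theta,\gamma)]\rangle$, I would first observe that
\begin{align*}
\hat{Q}(O_t,\theta_t,\gamma(t),G_1) - \hat{Q}(O_t,\theta_t,\gamma(t),G_2)
&= \big\langle \nabla L(\theta_t,\gamma(t)),\, (G_1^{-1} - G_2^{-1})\bar{H}(O_t,\theta_t,\gamma(t))\\
&\qquad - E_{O'}[(G_1^{-1} - G_2^{-1})\bar{H}(O',\theta_t,\gamma(t))]\big\rangle.
\end{align*}

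Next I would apply Cauchy--Schwarz to pull out $\Vert \nabla L(\theta_t,\gamma(t))\Vert \le U_L$, then the triangle inequality and sub-multiplicativity of the operator norm to bound the remaining factor by $\Vert G_1^{-1} - G_2^{-1}\Vert \big(\Vert \bar{H}(O_t,\theta_t,\gamma(t))\Vert + E_{O'}\Vert\bar{H}(O',\theta_t,\gamma(t))\Vert\big)$. The only genuine computation is a uniform bound on $\Vert \bar{H}(O,\theta,\gamma)\Vert$: from its definition $\bar{H}(O,\theta,\gamma) = (c(s,a,s',\gamma) - L(\theta,\gamma) + (f_{s'}^\top - f_s^\top)v^*(\theta,\gamma))\nabla\log\pi_\theta(a|s)$, Assumption~\ref{assum:bounded_feature_norm} gives $\Vert f_{s'} - f_s\Vert \le 2$, Assumption~\ref{assum:policy-lipschitz-bounded}(a) gives $\Vert\nabla\log\pi_\theta(a|s)\Vert \le D$, and the standing bounds from Section~\ref{prel} give $\vert c(s,a,s',\gamma)\vert \le U_r$, $\vert L(\theta,\gamma)\vert \le U_r$ and $\Vert v^*(\theta,\gamma)\Vert \le U_v$; hence $\Vert \bar{H}(O,\theta,\gamma)\Vert \le 2D(U_r + U_v)$ uniformly in $O,\theta,\gamma$.

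Combining these estimates yields
\[
\vert \hat{Q}(O_t,\theta_t,\gamma(t),G_1) - \hat{Q}(O_t,\theta_t,\gamma(t),G_2)\vert \le 4DU_L(U_r + U_v)\,\Vert G_1^{-1} - G_2^{-1}\Vert,
\]
so the claim holds with $T_G := 4DU_L(U_r + U_v) > 0$, which is finite and independent of $t$. I do not expect any real obstacle here — the argument is entirely parallel to the $\gamma$- and $\theta$-Lipschitz sublemmas (Lemmas~\ref{Lemma:Gamma_gamma}--\ref{Lemma:Gamma_theta}) already proved, the simplification being that the dependence on $G^{-1}$ is linear rather than through the nonlinear quantities $v^*$, $L$ and $\mu_\theta\otimes\pi_\theta$, so no Lipschitz-in-$\theta$ or total-variation arguments are needed; the mild care point is simply citing the correct uniform bounds from Section~\ref{prel} and Assumptions~\ref{assum:bounded_feature_norm} and~\ref{assum:policy-lipschitz-bounded} to control $\Vert\bar H\Vert$ and $\Vert\nabla L\Vert$.
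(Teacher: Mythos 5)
Your proposal is correct and follows essentially the same route as the paper: both exploit the linearity of $\hat{Q}$ in $G^{-1}$ to collapse the difference into a single inner product against $(G(t)^{-1}-G(t-\tau)^{-1})$, then bound $\Vert\nabla L\Vert$ by $U_L$ and $\Vert\bar H\Vert$ by $2D(U_r+U_v)$ using Assumptions \ref{assum:bounded_feature_norm} and \ref{assum:policy-lipschitz-bounded}(a). The only difference is the explicit constant ($4DU_L(U_r+U_v)$ versus the paper's $2DU_L(U_r+U_v)$, the latter arguably dropping a factor of $2$ when bounding $\bar H(O_t)-E_{O'}[\bar H(O')]$), which is immaterial since $T_G$ is only required to be some positive constant.
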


\textit{Proof}
The following holds:
\begin{align*}
     &\hat{Q}(O_t,\theta_t,\gamma(t),G(t)) - \hat{Q}(O_t,\theta_t,\gamma(t),G(t-\tau))\\
     &= \langle \nabla L(\theta_t,\gamma(t)) , G(t)^{-1}\bar{H}(O_t,\theta_t,\gamma(t)) - E_{O^{'}}[G(t)^{-1}\bar{H}(O^{'},\theta_t,\gamma(t)) ]\rangle\\
     &\qquad - \langle \nabla L(\theta_t,\gamma(t)) , G(t-\tau)^{-1}\bar{H}(O_t,\theta_t,\gamma(t)) - E_{O^{'}}[G(t-\tau)^{-1}\bar{H}(O^{'},\theta_t,\gamma(t)) ]\rangle\\
     &= \langle \nabla L(\theta_t,\gamma(t)) ,(G(t)^{-1} -G(t-\tau)^{-1})\bar{H}(O_t,\theta_t,\gamma(t)) - E_{O^{'}}[(G(t)^{-1} -G(t-\tau)^{-1})\bar{H}(O^{'},\theta_t,\gamma(t)) ]\rangle\\
     &\leq 2D(U_r + U_v)\Vert \nabla L(\theta_t,\gamma(t))\Vert \Vert G(t)^{-1} -G(t-\tau)^{-1}\Vert\\
     &\leq 2D(U_r + U_v)U_{L} \Vert G(t)^{-1} -G(t-\tau)^{-1}\Vert.
\end{align*}
The claim follows by letting $T_G = 2D(U_r + U_v)U_{L} >0$. 
\begin{sublemma}\label{Lemma:zeta_gamma}
For any $t \geq 0$,
\begin{align*}
    \vert \hat{Q}(O_t,\theta_t,\gamma(t),G(t-\tau)) - \hat{Q}(O_t,\theta_t,\gamma(t-\tau),G(t-\tau))\vert \leq T_1\vert\gamma_{m}(t) - \gamma_{m}(t-\tau)\vert
\end{align*}
for some $T_1 > 0$.
\end{sublemma}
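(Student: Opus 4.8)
\textbf{Proof proposal for Sublemma \ref{Lemma:zeta_gamma}.}

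The plan is to mimic the proof of Sublemma~\ref{Lemma:Gamma_gamma} almost verbatim, carrying along an extra factor $G(t-\tau)^{-1}$ which is controlled by the uniform bound $\sup_t\Vert G(t)^{-1}\Vert \le U_G$ (equivalently Assumption~\ref{bound_G}). First I would fix $\theta = \theta_t$, set $\gamma^1 = \gamma(t)$, $\gamma^2 = \gamma(t-\tau)$ and write $G = G(t-\tau)$; note that since $G$ is a fixed matrix here, $E_{O'}[G^{-1}\bar H(O',\theta,\gamma)] = G^{-1}E_{O'}[\bar H(O',\theta,\gamma)]$. Then I would add and subtract $\langle \nabla L(\theta,\gamma^1), G^{-1}\bar H(O,\theta,\gamma^2) - E_{O'}[G^{-1}\bar H(O',\theta,\gamma^2)]\rangle$ to split the difference into
\begin{align*}
\hat Q(O,\theta,\gamma^1,G) - \hat Q(O,\theta,\gamma^2,G)
&= \underbrace{\big\langle \nabla L(\theta,\gamma^1),\, G^{-1}(\bar H(O,\theta,\gamma^1) - \bar H(O,\theta,\gamma^2)) - G^{-1}E_{O'}[\bar H(O',\theta,\gamma^1) - \bar H(O',\theta,\gamma^2)]\big\rangle}_{I_1}\\
&\quad + \underbrace{\big\langle \nabla L(\theta,\gamma^1) - \nabla L(\theta,\gamma^2),\, G^{-1}\bar H(O,\theta,\gamma^2) - G^{-1}E_{O'}[\bar H(O',\theta,\gamma^2)]\big\rangle}_{I_2}.
\end{align*}

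For $I_1$, I would reuse the bound already established inside the proof of Sublemma~\ref{Lemma:Gamma_gamma}, namely $\Vert \bar H(O,\theta,\gamma^1) - \bar H(O,\theta,\gamma^2)\Vert \le D\big(2N(U_c+U_\alpha) + 2L_2\big)\vert \gamma_m^1 - \gamma_m^2\vert$, together with $\Vert G^{-1}\Vert \le U_G$, $\Vert\nabla L(\theta,\gamma^1)\Vert \le U_L$ and the triangle inequality on the expectation term, to get $I_1 \le 4DU_GU_L\big(N(U_c+U_\alpha) + L_2\big)\vert \gamma_m^1 - \gamma_m^2\vert$. For $I_2$, I would use Lemma~\ref{Lemma:L-smooth2} to bound $\Vert\nabla L(\theta,\gamma^1) - \nabla L(\theta,\gamma^2)\Vert \le C\vert \gamma_m^1 - \gamma_m^2\vert$, the bound $\Vert G^{-1}\Vert \le U_G$, and the uniform bound $\Vert \bar H(O,\theta,\gamma^2) - E_{O'}[\bar H(O',\theta,\gamma^2)]\Vert \le 4D(U_r+U_v)$ (the same estimate used for term $I_2$ of Sublemma~\ref{Lemma:Gamma_theta}), giving $I_2 \le 4DCU_G(U_r+U_v)\vert \gamma_m^1 - \gamma_m^2\vert$. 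Adding the two pieces yields the claim with
\[
T_1 = 4DU_GU_L\big(N(U_c+U_\alpha) + L_2\big) + 4DCU_G(U_r+U_v) > 0,
\]
and specializing to $\gamma^1 = \gamma(t)$, $\gamma^2 = \gamma(t-\tau)$ gives exactly the stated inequality.

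I do not anticipate a genuine obstacle here: the only new ingredient relative to Sublemma~\ref{Lemma:Gamma_gamma} is the matrix $G(t-\tau)^{-1}$, and since it is a single fixed matrix it factors out of the expectation and is absorbed by the constant $U_G$ from Assumption~\ref{bound_G}. The one point requiring a little care is bookkeeping — ensuring that every intermediate bound on $\bar H$, $E_{O'}[\bar H]$ and $\nabla L$ invoked from the Theorem~1 lemmas is still valid under the hypothesis $0 \le \gamma_i \le M$ (which it is, since those lemmas were proved under the same constraint), and that the constant $T_1$ here is understood as a new, larger constant than the one appearing in Sublemma~\ref{Lemma:Gamma_gamma}.
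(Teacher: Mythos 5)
Your proposal is correct and matches the paper's own argument essentially verbatim: the same add-and-subtract decomposition into $I_1$ and $I_2$, the same bound on $\Vert \bar{H}(O,\theta,\gamma^1)-\bar{H}(O,\theta,\gamma^2)\Vert$, the same use of Lemma \ref{Lemma:L-smooth2} for $I_2$, and absorption of $G(t-\tau)^{-1}$ via $\Vert G^{-1}\Vert \le U_G$, yielding exactly the paper's constant $T_1 = 4D(N(U_c+U_\alpha)+L_2)U_LU_G + 4DC(U_r+U_v)U_G$. No gaps to report.
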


\textit{Proof}
Denoting $O = (s,a,s^{'})$,
we have for any $\theta ,\gamma_1,\gamma_{2}$, that
\begin{align*}
    &\hat{Q}(O,\theta,\gamma^1,G) - \hat{Q}(O,\theta,\gamma^2,G) \\
&= \langle \nabla L(\theta,\gamma^1) , G^{-1}\bar{H}(O,\theta,\gamma^1) - E_{O^{'}}[G^{-1}\bar{H}(O^{'},\theta,\gamma^1) ]\rangle - \langle \nabla L(\theta,\gamma^2) , G^{-1}\bar{H}(O,\theta,\gamma^2) - E_{O^{'}}[G^{-1}\bar{H}(O^{'},\theta,\gamma^2) ]\rangle\\
&=\langle \nabla L(\theta,\gamma^1) , G^{-1}\bar{H}(O,\theta,\gamma^1) - E_{O^{'}}[G^{-1}\bar{H}(O^{'},\theta,\gamma^1) ]\rangle -\langle \nabla L(\theta,\gamma^1) , G^{-1}\bar{H}(O,\theta,\gamma^2) - E_{O^{'}}[G^{-1}\bar{H}(O^{'},\theta,\gamma^2) ]\rangle \\
&\qquad +\langle \nabla L(\theta,\gamma^1) , G^{-1}\bar{H}(O,\theta,\gamma^2) - E_{O^{'}}[G^{-1}\bar{H}(O^{'},\theta,\gamma^2) ]\rangle - \langle \nabla L(\theta,\gamma^2) , G^{-1}\bar{H}(O,\theta,\gamma^2) - E_{O^{'}}[G^{-1}\bar{H}(O^{'},\theta,\gamma^2) ]\rangle\\
&= \underbrace{\langle \nabla L(\theta,\gamma^1) , G^{-1}\bar{H}(O,\theta,\gamma^1)  -  G^{-1}\bar{H}(O,\theta,\gamma^2) - E_{O^{'}}[G^{-1}\bar{H}(O^{'},\theta,\gamma^1)] + E_{O^{'}}[G^{-1}\bar{H}(O^{'},\theta,\gamma^2)] \rangle}_{I_1}\\
&\qquad + \underbrace{\langle \nabla L(\theta,\gamma^1) - \nabla  L(\theta,\gamma^2),G^{-1}\bar{H}(O,\theta,\gamma^2) - E_{O^{'}}[G^{-1}\bar{H}(O^{'},\theta,\gamma^2) ]\rangle}_{I_2},
\end{align*}
where $G$ is a non-singular square matrix and $\Vert G \Vert < U_{G}$. We have by lemma \ref{Lemma:L-smooth2} that
\begin{align*}
     \Vert \nabla L(\theta,\gamma^1) - \nabla L(\theta_,\gamma^2)\Vert \leq C| \gamma_m^1 - \gamma_m^2|.
\end{align*}
Now,
\begin{align*}
    &\Vert \bar{H}(O,\theta,\gamma^1)  -  \bar{H}(O,\theta,\gamma^2) \Vert\\
    &= \Vert (c(s,a,s^{'},\gamma^1)- c(s,a,s^{'},\gamma^2) - L(\theta,\gamma^1) + L(\theta,\gamma^2)  +({f_{s^{'}}}^{T} - {f_{s}}^{T})(v^{*}(\theta,\gamma^1)-v^{*}(\theta,\gamma^2)))\nabla \log\pi_{\theta}(a|s) \Vert\\
    &\leq D(\vert c(s,a,s^{'},\gamma^1)- c(s,a,s^{'},\gamma^2) \vert + \vert L(\theta,\gamma^1) - L(\theta,\gamma^2) \vert + 2\Vert v^{*}(\theta,\gamma^1)-v^{*}(\theta,\gamma^2) \Vert )\\
    &\leq D(2N(U_c + U_{\alpha})\vert \gamma^1_m - \gamma^2_m \vert + 2L_2\vert \gamma^1_m - \gamma^2_m \vert),
\end{align*}
where  $\vert\gamma_m^1-\gamma_m^2\vert = \max\limits_{i = 1,2,3.... , N}|\gamma_i^1-\gamma_i^2|$. Hence (for the term $I_1$), we have that 
\begin{align*}
    I_1 &\leq 4D(N(U_c + U_\alpha) + L_2)\Vert \nabla L(\theta,\gamma^1)\Vert U_{G} \vert \gamma^1_m - \gamma^2_m \vert.
\end{align*}
Now observe that (for the term $I_{2}$),
\begin{align*}
    I_2 &\leq \Vert \nabla L(\theta,\gamma^1) - \nabla  L(\theta,\gamma^2) \Vert \Vert \bar{H}(O,\theta,\gamma^2) - E_{O^{'}}[\bar{H}(O^{'},\theta,\gamma^2) ]\Vert\\
    &\leq 4D(U_r + U_v)\Vert \nabla L(\theta,\gamma^1) - \nabla  L(\theta,\gamma^2) \Vert\\
    &\leq 4DC(U_r + U_v)U_{G}\vert \gamma^1_m - \gamma^2_m\vert
\end{align*} 
Combining the RHS of the two terms, we obtain
\begin{align*}
    \vert \hat{Q}(O,\theta,\gamma^1,G) - \hat{Q}(O,\theta,\gamma^2,G)\vert &\leq T_1\vert \gamma^1_m - \gamma^2_m\vert,
\end{align*}
where $T_1 =  4D(N(U_c + U_\alpha) + L_2)U_{L} U_{G}+  4DC(U_r + U_v)U_{G}$.

\begin{sublemma}\label{Lemma:zeta_theta}
For any $t \geq 0, \theta_1,\theta_2,G,\gamma = (\gamma_1,\gamma_2,...,\gamma_N)^T$ with $0 \leq \gamma_i \leq M$ for $i \in \{1,2,,,..,N\}$ and $G$ being a non-singular square matrix with  $\Vert G^{-1} \Vert \leq U_G$,
\begin{align*}
    \vert \hat{Q}(O,\theta_1,\gamma,G) - \hat{Q}(O,\theta_2,\gamma,G)\vert &\leq T_2\Vert \theta_1 - \theta_2 \Vert,
\end{align*}
for some $T_2 > 0$.
\end{sublemma}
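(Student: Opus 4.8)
The plan is to mirror the proof of Sublemma~\ref{Lemma:Gamma_theta} (the non-natural case), the only new ingredient being the fixed matrix $G^{-1}$, which depends neither on $O$ nor on the sampling policy and whose norm is controlled by $\|G^{-1}\|\le U_G$. Writing $O=(s,a,s')$ and recalling from the proof of Lemma~\ref{lemma:zeta} that $\hat{Q}(O,\theta,\gamma,G) = \langle \nabla L(\theta,\gamma),\, G^{-1}\bar{H}(O,\theta,\gamma) - E_{O'}[G^{-1}\bar{H}(O',\theta,\gamma)]\rangle$, I would first decompose
\begin{align*}
\hat{Q}(O,\theta_1,\gamma,G) - \hat{Q}(O,\theta_2,\gamma,G)
&= \underbrace{\big\langle \nabla L(\theta_1,\gamma),\, G^{-1}\bar{H}(O,\theta_1,\gamma) - G^{-1}\bar{H}(O,\theta_2,\gamma) - E_{O'}[G^{-1}\bar{H}(O',\theta_1,\gamma)] + E_{O'}[G^{-1}\bar{H}(O',\theta_2,\gamma)]\big\rangle}_{I_1} \\
&\quad + \underbrace{\big\langle \nabla L(\theta_1,\gamma) - \nabla L(\theta_2,\gamma),\, G^{-1}\bar{H}(O,\theta_2,\gamma) - E_{O'}[G^{-1}\bar{H}(O',\theta_2,\gamma)]\big\rangle}_{I_2},
\end{align*}
exactly as in Sublemma~\ref{Lemma:Gamma_theta}, by adding and subtracting the term $\langle \nabla L(\theta_1,\gamma),\, G^{-1}\bar{H}(O,\theta_2,\gamma) - E_{O'}[G^{-1}\bar{H}(O',\theta_2,\gamma)]\rangle$.

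For $I_1$, since $G^{-1}$ is constant it factors out of the expectation; Cauchy--Schwarz together with $\|G^{-1}\|\le U_G$, $\|\nabla L(\theta_1,\gamma)\|\le U_L$, and the two Lipschitz-in-$\theta$ estimates $\|\bar{H}(O,\theta_1,\gamma) - \bar{H}(O,\theta_2,\gamma)\|\le A_1\|\theta_1-\theta_2\|$ and $\|E_{O'}[\bar{H}(O',\theta_1,\gamma)] - E_{O'}[\bar{H}(O',\theta_2,\gamma)]\|\le A_2\|\theta_1-\theta_2\|$ already established within the proof of Sublemma~\ref{Lemma:Gamma_theta} give $I_1 \le U_G U_L (A_1+A_2)\|\theta_1-\theta_2\|$. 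For $I_2$, Cauchy--Schwarz with $\|G^{-1}\|\le U_G$, the uniform bound $\|\bar{H}(O,\theta_2,\gamma) - E_{O'}[\bar{H}(O',\theta_2,\gamma)]\|\le 4D(U_r+U_v)$, and Lemma~\ref{Lemma:L-smooth} (which gives $\|\nabla L(\theta_1,\gamma) - \nabla L(\theta_2,\gamma)\|\le M_L\|\theta_1-\theta_2\|$) yield $I_2 \le 4DM_L U_G(U_r+U_v)\|\theta_1-\theta_2\|$. Summing, the claim follows with $T_2 = U_G\big[(A_1+A_2)U_L + 4DM_L(U_r+U_v)\big]$, where $A_1,A_2$ are the constants from Sublemma~\ref{Lemma:Gamma_theta}.

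The argument is essentially bookkeeping and I do not expect a genuine obstacle; the only point to watch is that $E_{O'}$ is taken under the $\theta$-dependent stationary distribution $\mu_\theta\otimes\pi_\theta\otimes p$, so that $E_{O'}[\bar{H}(O',\theta_1,\gamma)] - E_{O'}[\bar{H}(O',\theta_2,\gamma)]$ simultaneously reflects the change in the integrand and in the measure --- but this is precisely the quantity bounded by $A_2\|\theta_1-\theta_2\|$ in the non-natural proof (which itself invokes Lemma~B.1 of \cite{fta_2_timescale} for the total-variation distance between $\mu_{\theta_1}\otimes\pi_{\theta_1}$ and $\mu_{\theta_2}\otimes\pi_{\theta_2}$), and since $G^{-1}$ commutes with the expectation it only contributes the multiplicative factor $U_G$.
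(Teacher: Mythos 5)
Your proposal is correct and follows essentially the same route as the paper: the identical add-and-subtract decomposition into $I_1$ and $I_2$, with $G^{-1}$ pulled out at cost $U_G$, the Lipschitz-in-$\theta$ bounds $A_1$, $A_2$ on $\bar{H}$ and its stationary expectation (via Lemma B.1 of the two-timescale reference), and Lemma~\ref{Lemma:L-smooth} for $I_2$. The resulting constant $T_2 = U_G\big[(A_1+A_2)U_L + 4DM_L(U_r+U_v)\big]$ matches the paper's expression term by term.
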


\textit{Proof}
Denote $O = (s,a,s^{'})$. We have for any $\theta_{1} ,\theta_{2},\gamma,G$, the following:

\begin{align*}
    &\hat{Q}(O,\theta_1,\gamma,G) - \hat{Q}(O,\theta_2,\gamma,G) \\
&= \langle \nabla L(\theta_1,\gamma) , G^{-1}\bar{H}(O,\theta_1,\gamma) - E_{O^{'}}[G^{-1}\bar{H}(O^{'},\theta_1,\gamma) ]\rangle - \langle \nabla L(\theta_2,\gamma) , G^{-1}\bar{H}(O,\theta_2,\gamma) - E_{O^{'}}[G^{-1}\bar{H}(O^{'},\theta_2,\gamma) ]\rangle\\
&=\langle \nabla L(\theta_1,\gamma) , G^{-1}\bar{H}(O,\theta_1,\gamma) - E_{O^{'}}[G^{-1}\bar{H}(O^{'},\theta_1,\gamma) ]\rangle -\langle \nabla L(\theta_1,\gamma) , G^{-1}\bar{H}(O,\theta_2,\gamma) - E_{O^{'}}[G^{-1}\bar{H}(O^{'},\theta_2,\gamma) ]\rangle \\
&\qquad +\langle \nabla L(\theta_1,\gamma) , G^{-1}\bar{H}(O,\theta_2,\gamma) - E_{O^{'}}[G^{-1}\bar{H}(O^{'},\theta_2,\gamma) ]\rangle - \langle \nabla L(\theta_2,\gamma) ,G^{-1} \bar{H}(O,\theta_2,\gamma) - E_{O^{'}}[G^{-1}\bar{H}(O^{'},\theta_2,\gamma) ]\rangle\\
&= \underbrace{\langle \nabla L(\theta_1,\gamma) , G^{-1}\bar{H}(O,\theta_1,\gamma)  - G^{-1} \bar{H}(O,\theta_2,\gamma) - E_{O^{'}}[G^{-1}\bar{H}(O^{'},\theta_1,\gamma)] + E_{O^{'}}[G^{-1}\bar{H}(O^{'},\theta_2,\gamma)] \rangle}_{I_1}\\
&\qquad + \underbrace{\langle \nabla L(\theta_1,\gamma) - \nabla  L(\theta_2,\gamma),G^{-1}\bar{H}(O,\theta_2,\gamma) - E_{O^{'}}[G^{-1}\bar{H}(O^{'},\theta_2,\gamma) ]\rangle}_{I_2}.
\end{align*}
Now,

\begin{align*}
    &\Vert \bar{H}(O,\theta_1,\gamma) - \bar{H}(O,\theta_2,\gamma)\Vert  \\
    &=\Vert (c(s,a,s^{'},\gamma) -L(\theta_1,\gamma)  +({f_{s^{'}}}^{T} - {f_{s}}^{T})v^{*}(\theta_1,\gamma))\nabla \log\pi_{\theta_1}(a|s)\\
    &\qquad- (c(s,a,s^{'},\gamma) -L(\theta_2,\gamma)  +({f_{s^{'}}}^{T} - {f_{s}}^{T})v^{*}(\theta_2,\gamma))\nabla \log\pi_{\theta_2}(a|s) \Vert\\
    &\leq \Vert (c(s,a,s^{'},\gamma) -L(\theta_1,\gamma)  +({f_{s^{'}}}^{T} - {f_{s}}^{T})v^{*}(\theta_1,\gamma))(\nabla \log\pi_{\theta_1}(a|s) - \nabla \log\pi_{\theta_2}(a|s)) \Vert\\
    &\qquad+ \Vert (L(\theta_2,\gamma) - L(\theta_1,\gamma) + ({f_{s^{'}}}^{T} - {f_{s}}^{T})(v^{*}(\theta_1,\gamma) - v^{*}(\theta_1,\gamma)))\nabla \log\pi_{\theta_2}(a|s)\Vert\\
    &\leq 2(U_r + U_v)M_m\Vert \theta_1 - \theta_2\Vert + D(\vert L(\theta_2,\gamma) - L(\theta_1,\gamma) \vert + 2L_1\Vert \theta_1 - \theta_2 \Vert).
\end{align*}
Also, clearly

\begin{align*}
    &\vert L(\theta_1,\gamma) - L(\theta_2,\gamma) \vert\\
    &= \vert \sum\limits_{s \in S}\mu_{\theta_1}(s)\sum\limits_{a \in A(s)}\pi_{\theta_1}(s,a) (d(s,a) + \sum\limits_{k=1}^{N}\gamma(k)(h_k(s,a) -\alpha_k)) - \sum\limits_{s \in S}\mu_{\theta_2}(s)\sum\limits_{a \in A(s)}\pi_{\theta_2}(s,a) (d(s,a) + \sum\limits_{k=1}^{N}\gamma(k)(h_k(s,a) -\alpha_k)) \vert\\
    &\leq 2U_r d_{TV}(\mu_{\theta_1} \otimes \pi_{\theta_1},\mu_{\theta_2} \otimes \pi_{\theta_2})\\
    & \leq 2U_r\vert A\vert L\bigg( 1  + \lceil\log_{k}b^{-1}\rceil + 1/(1-k)\bigg)\Vert \theta_1 - \theta_2 \Vert\\
    &=C_{L}\Vert \theta_1 - \theta_2 \Vert.
\end{align*}

Hence,
\begin{align*}
\Vert \bar{H}(O,\theta_1,\gamma) - \bar{H}(O,\theta_2,\gamma)\Vert \leq  2(U_r + U_v)M_m\Vert \theta_1 - \theta_2\Vert + DC_{L}\Vert \theta_1 - \theta_2 \Vert +2L_1D\Vert \theta_1 - \theta_2\Vert.
\end{align*}

Also, note that 
    \begin{align*}
    &\Vert E_{O^{'}}[\bar{H}(O^{'},\theta_1,\gamma)] - E_{O^{'}}[\bar{H}(O^{'},\theta_2,\gamma)] \Vert\\ &= \Vert E_{\theta_1}[\bar{H}(O^{'},\theta_1,\gamma)] - E_{\theta_2}[\bar{H}(O^{'},\theta_2,\gamma)] \Vert\\
    &\leq \Vert E_{\theta_1}[\bar{H}(O^{'},\theta_1,\gamma)] - E_{\theta_1}[\bar{H}(O^{'},\theta_2,\gamma)] \Vert + \Vert E_{\theta_1}[\bar{H}(O^{'},\theta_2,\gamma)] - E_{\theta_2}[\bar{H}(O^{'},\theta_2,\gamma)] \Vert\\
    &\leq E_{\theta_1}\Vert \bar{H}(O^{'},\theta_1,\gamma)-\bar{H}(O^{'},\theta_2,\gamma)\Vert + 4D(U_r + U_v)d_{TV}(\mu_{\theta_1} \otimes \pi_{\theta_1} , \mu_{\theta_2} \otimes \pi_{\theta_2} )\\
    &\leq \bigg[2(U_r + U_v)M_m + DC_L + 2L_1D + 4D(U_r + U_v)|A|U_rL \bigg(1 + \lceil \log_{k}b^{-1} \rceil + \frac{1}{1-k} \bigg) \bigg]\norm{\theta_1 - \theta_2}\\
    &=A_2\norm{\theta_1-\theta_2}.
\end{align*}

Thus, we have
\begin{align*}
    I_1 \leq 2U_{L}U_{G}(U_r + U_v)M_m\Vert \theta_1 - \theta_2\Vert + U_{L}U_{G}DC_{L}\Vert \theta_1 - \theta_2 \Vert +2U_{L}U_{G}L_1D\Vert \theta_1 - \theta_2\Vert + U_{G}U_{L}A_2\norm{\theta_1 - \theta_2}.
\end{align*}

For the term $I_2$, we have
\begin{align*}
    I_2 &\leq \Vert \nabla L(\theta_1,\gamma) - \nabla  L(\theta_2,\gamma) \Vert \Vert G^{-1}\bar{H}(O,\theta_2,\gamma) - E_{O^{'}}[G^{-1}\bar{H}(O^{'},\theta_2,\gamma) ] \Vert\\
    &\leq 4D(U_r + U_v)U_{G}M_{L}\Vert \theta_1 - \theta_2\Vert.
\end{align*}

The last inequality follows from Lemma \ref{Lemma:L-smooth}.

Finally, we have
\begin{align*}
    \hat{Q}(O,\theta_1,\gamma,G) - \hat{Q}(O,\theta_2,\gamma,G) &\leq T_2\Vert \theta_1 - \theta_2 \Vert,
\end{align*}
where,
\begin{align*}
    T_2 &= 2U_{L}U_{G}(U_r + U_v)M_m + U_{L}U_{G}DC_{L} +2U_{L}U_{G}L_1D + U_{G}U_{L}A_2 +4D(U_r + U_v)U_GM_{L},\\
    A_2 &=2(U_r + U_v)M_m + DC_L + 2L_1D + 4D(U_r + U_v)|A|U_rL \bigg(1 + \lceil \log_{k}b^{-1} \rceil + \frac{1}{1-k} \bigg).
\end{align*} 

\begin{sublemma}\label{Lemma:zeta_first}

For any $t \geq 0$,conditioned on $\theta_{t-\tau},\gamma(t-\tau)$ and $G(t-\tau)$,
\begin{align*}
     &\vert E[(\hat{Q}(O_t,\theta_{t-\tau},\gamma(t-\tau),G(t-\tau)) - \hat{Q}(\tilde{O_{t}},\theta_{t-\tau},\gamma(t-\tau),G(t-\tau)))|\theta_{t-\tau},\gamma(t-\tau),G(t-\tau)]\vert \\
     &\qquad\leq 2DU_{G}(U_{r} + U_{v})U_{L}\vert A \vert L\sum_{i=t-\tau}^{t} E\Vert\theta_i - \theta_{t-\tau} \Vert 
\end{align*}

\end{sublemma}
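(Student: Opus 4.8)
The plan is to mirror the proof of Sublemma~\ref{Lemma:Gamma_first} almost verbatim, the only new ingredient being that the natural-gradient preconditioner $G(t-\tau)^{-1}$ enters the relevant expression linearly and is uniformly bounded by $U_G$ (Assumption~\ref{bound_G}). First I would expand $\hat{Q}$ through its definition, $\hat{Q}(O,\theta,\gamma,G)=\langle \nabla L(\theta,\gamma),\, G^{-1}\bar{H}(O,\theta,\gamma)-E_{O'}[G^{-1}\bar{H}(O',\theta,\gamma)]\rangle$, and note that since both $\hat{Q}$-terms in the difference use the \emph{same} parameters $\theta_{t-\tau}$, $\gamma(t-\tau)$, $G(t-\tau)$, the centering term $E_{O'}[G^{-1}\bar{H}(O',\cdot)]$ is identical in the two and cancels. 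This leaves
\begin{align*}
&\hat{Q}(O_t,\theta_{t-\tau},\gamma(t-\tau),G(t-\tau)) - \hat{Q}(\tilde{O_t},\theta_{t-\tau},\gamma(t-\tau),G(t-\tau))\\
&\qquad = \big\langle \nabla L(\theta_{t-\tau},\gamma(t-\tau)),\ G(t-\tau)^{-1}\big(\bar{H}(O_t,\theta_{t-\tau},\gamma(t-\tau)) - \bar{H}(\tilde{O_t},\theta_{t-\tau},\gamma(t-\tau))\big)\big\rangle.
\end{align*}

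Next I would introduce the scalar test function $g(O):=\langle \nabla L(\theta_{t-\tau},\gamma(t-\tau)),\, G(t-\tau)^{-1}\bar{H}(O,\theta_{t-\tau},\gamma(t-\tau))\rangle$ and bound it uniformly over $O$. Using $\Vert f_{s'}-f_s\Vert\le 2$ (Assumption~\ref{assum:bounded_feature_norm}), $\Vert\nabla\log\pi_\theta(a|s)\Vert\le D$ (Assumption~\ref{assum:policy-lipschitz-bounded}(a)), the bounds $\vert c(s,a,s',\gamma)\vert,\vert L(\theta,\gamma)\vert\le U_r$ and $\Vert v^*(\theta,\gamma)\Vert\le U_v$, one gets $\Vert \bar{H}(O,\theta,\gamma)\Vert\le 2D(U_r+U_v)$; combined with $\Vert G^{-1}\Vert\le U_G$ and $\Vert\nabla L\Vert\le U_L$ this gives $\vert g(O)\vert\le 2DU_G(U_r+U_v)U_L$ for every $O$.

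The conditional expectation of the displayed difference then equals $\int g\, d\nu_1-\int g\, d\nu_2$, where $\nu_1$ is the conditional law of $O_t$ along the original chain \eqref{chain-or} and $\nu_2$ the conditional law of $\tilde{O_t}$ along the auxiliary chain \eqref{chain-au}, both started from $s_{t-\tau+1}$ and differing only in which actor parameter is used at each subsequent step. The elementary coupling inequality gives $\vert\int g\, d\nu_1-\int g\, d\nu_2\vert\le 2\Vert g\Vert_\infty\, d_{TV}(\nu_1,\nu_2)$, and the total-variation distance is controlled by $d_{TV}(\nu_1,\nu_2)\le \tfrac12\vert A\vert L\sum_{i=t-\tau}^{t}E\Vert\theta_i-\theta_{t-\tau}\Vert$, exactly the estimate used in Sublemma~\ref{Lemma:Gamma_first} (from the proof of Lemma D.2 of \cite{fta_2_timescale}). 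Taking total expectation and collecting constants yields $2DU_G(U_r+U_v)U_L\vert A\vert L\sum_{i=t-\tau}^{t}E\Vert\theta_i-\theta_{t-\tau}\Vert$, which is the claim.

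There is no genuine obstacle here beyond bookkeeping; the one point that needs care is the cancellation of the $E_{O'}[\cdot]$ centering terms, which relies on all three parameters $\theta,\gamma,G$ being frozen at their time-$(t-\tau)$ values in both $\hat{Q}$-terms (otherwise one would pick up an extra total-variation term), together with checking that the $d_{TV}$ estimate of \cite{fta_2_timescale} transfers. It does, because the auxiliary chain in \eqref{chain-au} is constructed precisely so that $\theta$ is frozen at $\theta_{t-\tau}$ while the starting state $s_{t-\tau+1}$ is shared with the original chain, so the only discrepancy between $\nu_1$ and $\nu_2$ comes from the drift of $\theta$ over the intervening $\tau$ steps, which is captured by $\sum_{i=t-\tau}^{t}\Vert\theta_i-\theta_{t-\tau}\Vert$.
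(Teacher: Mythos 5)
Your proposal is correct and follows essentially the same route as the paper: cancel the common centering term $E_{O'}[G(t-\tau)^{-1}\bar{H}(O',\cdot)]$, bound the resulting test function by $2DU_GU_L(U_r+U_v)$ using Assumptions~\ref{assum:bounded_feature_norm}, \ref{assum:policy-lipschitz-bounded}(a) and \ref{bound_G}, and control the difference of the two conditional laws of $O_t$ and $\tilde{O}_t$ via the total-variation estimate $d_{TV}\le \tfrac12\vert A\vert L\sum_{i=t-\tau}^{t}E\Vert\theta_i-\theta_{t-\tau}\Vert$ from Lemma D.2 of \cite{fta_2_timescale}. The constants combine exactly as in the paper's proof, so there is nothing to add.
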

\textit{Proof}
By the definition of $\hat{Q}(O,\theta,\gamma)$,
\begin{align*}
    &E[(\hat{Q}(O_t,\theta_{t-\tau},\gamma(t-\tau),G(t-\tau)) - \hat{Q}(\tilde{O_{t}},\theta_{t-\tau},\gamma(t-\tau),G(t-\tau))|\theta_{t-\tau},\gamma(t-\tau),G(t-\tau)]\\
&= E[\langle \nabla L(\theta_{t-\tau},\gamma(t-\tau)),G(t-\tau)^{-1}\bar{H}(O_t,\theta_{t-\tau},\gamma(t-\tau)- G(t-\tau)^{-1}\bar{H}(\tilde{O_{t}},\theta_{t-\tau},\gamma(t-\tau)\rangle|\theta_{t-\tau},\gamma(t-\tau),G(t-\tau)]\\
&=  E[(\langle \nabla L(\theta_{t-\tau},\gamma(t-\tau)),G(t-\tau)^{-1}\bar{H}(O_t,\theta_{t-\tau},\gamma(t-\tau)\rangle\\
&\qquad- \langle \nabla L(\theta_{t-\tau},\gamma(t-\tau)) , G(t-\tau)^{-1}\bar{H}(\tilde{O_{t}},\theta_{t-\tau},\gamma(t-\tau))\rangle)|\theta_{t-\tau},\gamma(t-\tau),G(t-\tau)]\\
&\leq 4U_{G} D(U_{r} + U_{v})U_{L}d_{TV}(P(O_t = .|s_{t-\tau +1},\theta_{t-\tau}),(P(\tilde{O_{t}} = .|s_{t-\tau +1},\theta_{t-\tau})).
\end{align*}

Now,
\begin{align*}
    d_{TV}(P(O_t = .|s_{t-\tau +1},\theta_{t-\tau}),(P(\tilde{O_{t}} = .|s_{t-\tau +1},\theta_{t-\tau})) \leq \frac{1}{2}\vert A \vert L\sum_{i=t-\tau}^{t} E\Vert\theta_i - \theta_{t-\tau} \Vert.
\end{align*}
This inequality follows in a similar manner as the proof of Lemma D.2 of \cite{fta_2_timescale}.
Hence,
\begin{align*}
    &E[\hat{Q}(O_t,\theta_{t-\tau},\gamma(t-\tau),G(t-\tau)) - \hat{Q}(\tilde{O_{t}},\theta_{t-\tau},\gamma(t-\tau),G(t-\tau))|\theta_{t-\tau},\gamma(t-\tau),G(t-\tau)] \\
    &\qquad\leq 2DU_{G}(U_{r} + U_{v})U_{L}\vert A \vert L\sum_{i=t-\tau}^{t} E\Vert\theta_i - \theta_{t-\tau} \Vert. 
\end{align*}
The claim follows.

\begin{sublemma}\label{Lemma:zeta_second}

For any $t \geq 0$, conditioned on $\theta_{t-\tau},\gamma(t-\tau)$ and $G(t-\tau)$,
\begin{align*}
    &\vert E[(\hat{Q}(\tilde{O_{t}},\theta_{t-\tau},\gamma(t-\tau),G(t-\tau)) - \hat{Q}(O_{t}^{'},\theta_{t-\tau},\gamma(t-\tau),G(t-\tau)))|\theta_{t-\tau},\gamma(t-\tau),G(t-\tau)]\vert\\
    &\qquad\leq 4DU_{G}(U_{r} + U_{v})U_{L} bk^{\tau - 1}.
\end{align*} 

\end{sublemma}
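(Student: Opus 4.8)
\textit{Proof proposal.} The plan is to adapt the argument used for Sublemma~\ref{Lemma:Gamma_second} (which in turn follows Lemma D.3 of \cite{fta_2_timescale}), carrying the extra preconditioning matrix $G(t-\tau)^{-1}$ through the estimates. First I would expand the difference using the definition of $\hat Q$: because the term $E_{O^{'}}[G(t-\tau)^{-1}\bar H(O^{'},\theta_{t-\tau},\gamma(t-\tau))]$ occurs in both $\hat Q(\tilde O_t,\theta_{t-\tau},\gamma(t-\tau),G(t-\tau))$ and $\hat Q(O_t^{'},\theta_{t-\tau},\gamma(t-\tau),G(t-\tau))$, it cancels, leaving
\[
\hat Q(\tilde O_t,\theta_{t-\tau},\gamma(t-\tau),G(t-\tau)) - \hat Q(O_t^{'},\theta_{t-\tau},\gamma(t-\tau),G(t-\tau)) = \langle \nabla L(\theta_{t-\tau},\gamma(t-\tau)),\, G(t-\tau)^{-1}\big(\bar H(\tilde O_t,\theta_{t-\tau},\gamma(t-\tau)) - \bar H(O_t^{'},\theta_{t-\tau},\gamma(t-\tau))\big)\rangle .
\]

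Next I would take the conditional expectation given $\theta_{t-\tau},\gamma(t-\tau),G(t-\tau)$ (and $s_{t-\tau+1}$). Since $\nabla L(\theta_{t-\tau},\gamma(t-\tau))$ and $G(t-\tau)^{-1}$ are measurable with respect to the conditioning, linearity pulls them outside, and Cauchy--Schwarz together with $\|\nabla L(\theta,\gamma)\|\le U_L$ (Section~\ref{prel}) and $\|G(t-\tau)^{-1}\|\le U_G$ (Assumption~\ref{bound_G}) gives the bound $U_L U_G\,\big\|E[\bar H(\tilde O_t,\theta_{t-\tau},\gamma(t-\tau))\mid\cdot] - E[\bar H(O_t^{'},\theta_{t-\tau},\gamma(t-\tau))\mid\cdot]\big\|$. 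I would then invoke the uniform bound $\|\bar H(O,\theta,\gamma)\|\le 2D(U_r+U_v)$ valid for every $O$ (from $|c(s,a,s^{'},\gamma)|\le U_r$, $|L(\theta,\gamma)|\le U_r$, $|(f_{s^{'}}-f_s)^\top v^*(\theta,\gamma)|\le 2U_v$ since $\|f_s\|\le 1$ by Assumption~\ref{assum:bounded_feature_norm} and $\|v^*\|\le U_v$, and $\|\nabla\log\pi_\theta(a|s)\|\le D$ by Assumption~\ref{assum:policy-lipschitz-bounded}(a)). Since $\tilde O_t$ has law $P(\tilde O_t=\cdot\mid s_{t-\tau+1},\theta_{t-\tau})$ (from the auxiliary chain~(\ref{chain-au})) while $O_t^{'}$ is drawn from the stationary distribution $\mu_{\theta_{t-\tau}}\otimes\pi_{\theta_{t-\tau}}\otimes\mathcal P$, the standard estimate $\|E_M[g]-E_N[g]\|\le 2(\sup\|g\|)\,d_{TV}(M,N)$ yields
\[
\big\|E[\bar H(\tilde O_t,\theta_{t-\tau},\gamma(t-\tau))\mid\cdot] - E[\bar H(O_t^{'},\theta_{t-\tau},\gamma(t-\tau))\mid\cdot]\big\| \le 4D(U_r+U_v)\, d_{TV}\!\big(P(\tilde O_t=\cdot\mid s_{t-\tau+1},\theta_{t-\tau}),\ \mu_{\theta_{t-\tau}}\otimes\pi_{\theta_{t-\tau}}\otimes\mathcal P\big).
\]

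Finally, I would bound the total variation term: the auxiliary chain runs $\tau-1$ transitions from $s_{t-\tau+1}$ under the frozen policy $\pi_{\theta_{t-\tau}}$, so Assumption~\ref{assum:ergodicity} gives $d_{TV}\!\big(P(\tilde s_t=\cdot\mid s_{t-\tau+1},\theta_{t-\tau}),\mu_{\theta_{t-\tau}}\big)\le bk^{\tau-1}$; appending the matching conditional laws of $\tilde a_t$ given $\tilde s_t$ and of $\tilde s_{t+1}$ given $(\tilde s_t,\tilde a_t)$ does not increase the total variation (data-processing/coupling), so the displayed TV distance is also $\le bk^{\tau-1}$. Collecting the three estimates gives $|E[\hat Q(\tilde O_t,\theta_{t-\tau},\gamma(t-\tau),G(t-\tau)) - \hat Q(O_t^{'},\theta_{t-\tau},\gamma(t-\tau),G(t-\tau))\mid\cdot]| \le U_L U_G\cdot 4D(U_r+U_v)\cdot bk^{\tau-1} = 4DU_G(U_r+U_v)U_L\, bk^{\tau-1}$, which is the claim. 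I expect the only delicate point to be the bookkeeping of the exponent in the ergodicity bound --- the auxiliary chain in~(\ref{chain-au}) has taken exactly $\tau-1$ steps with the fixed parameter $\theta_{t-\tau}$, which is what produces $k^{\tau-1}$ rather than $k^{\tau}$; the rest is routine and structurally identical to Sublemma~\ref{Lemma:Gamma_second}, the sole new ingredient being the factor $U_G$ extracted from $\|G(t-\tau)^{-1}\|$ via Assumption~\ref{bound_G}.
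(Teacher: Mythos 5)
Your proposal is correct and follows essentially the same route as the paper's proof: cancel the common $E_{O'}[G^{-1}\bar H]$ term, bound the integrand $\langle \nabla L, G^{-1}\bar H\rangle$ uniformly by $2DU_GU_L(U_r+U_v)$, convert the difference of expectations into $4DU_G(U_r+U_v)U_L\,d_{TV}$, and control the total variation distance by $bk^{\tau-1}$ via uniform ergodicity (the paper cites Lemma D.3 of \cite{fta_2_timescale} for this last step). Your bookkeeping of the exponent $\tau-1$ and the extraction of the factor $U_G$ match the paper exactly.
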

\textit{Proof}

\begin{align*}
    &E[(\hat{Q}(\tilde{O_{t}},\theta_{t-\tau},\gamma(t-\tau),G(t-\tau)) - \hat{Q}(O_{t}^{'},\theta_{t-\tau},\gamma(t-\tau),G(t-\tau)))|\theta_{t-\tau},\gamma(t-\tau),G(t-\tau)]\\
&= E[\langle \nabla L(\theta_{t-\tau},\gamma(t-\tau)),G(t-\tau)^{-1}\bar{H}(\tilde{O_{t}},\theta_{t-\tau},\gamma(t-\tau))- G(t-\tau)^{-1}\bar{H}(O_{t}^{'},\theta_{t-\tau},\gamma(t-\tau))\rangle|\theta_{t-\tau},\gamma(t-\tau),G(t-\tau)]\\
&= E[(\langle \nabla L(\theta_{t-\tau},\gamma(t-\tau)),G(t-\tau)^{-1}\bar{H}(\tilde{O_{t}},\theta_{t-\tau},\gamma(t-\tau))\rangle\\
&\qquad- \langle \nabla L(\theta_{t-\tau},\gamma(t-\tau)) , G(t-\tau)^{-1}\bar{H}(O_{t}^{'},\theta_{t-\tau},\gamma(t-\tau))\rangle)|\theta_{t-\tau},\gamma(t-\tau),G(t-\tau)]\\
&\leq 4DU_{G}(U_{r} + U_{v})U_{L}d_{TV}(P(\tilde{O_{t}} = .|s_{t-\tau +1},\theta_{t-\tau}),\mu_{\theta_{t-\tau}} \otimes \pi_{\theta_{t-\tau}} \otimes P)\\
&\leq 4DU_{G}(U_{r} + U_{v})U_{L} bk^{\tau - 1}.
\end{align*}

The last inequality comes using an inequality on the total variation distance that is shown in the proof of Lemma D.3  of \cite{fta_2_timescale}.

\textit{Proof of Lemma \ref{lemma:zeta}:}

We decompose $E[\hat{Q}(O_t,\theta_t,\gamma(t),G(t))]$ as:

\begin{align*}
    &E[\hat{Q}(O_t,\theta_t,\gamma(t),G(t))] \\
    &= E[\hat{Q}(O_t,\theta_t,\gamma(t),G(t)) - \hat{Q}(O_t,\theta_t,\gamma(t),G(t-\tau))] + E[\hat{Q}(O_t,\theta_t,\gamma(t),G(t-\tau)) - \hat{Q}(O_t,\theta_t,\gamma(t-\tau),G(t-\tau))]\\
    &\qquad+ E[\hat{Q}(O_t,\theta_t,\gamma(t-\tau),G(t-\tau)) - \hat{Q}(O_t,\theta_{t-\tau},\gamma(t-\tau),G(t-\tau))]\\
     &\qquad+ E[\hat{Q}(O_t,\theta_{t-\tau},\gamma(t-\tau),G(t-\tau)) - \hat{Q}(\tilde{O_t},\theta_{t-\tau},\gamma(t-\tau),G(t-\tau))]\\
     &\qquad+ E[\hat{Q}(\tilde{O_t},\theta_{t-\tau},\gamma(t-\tau),G(t-\tau)) - \hat{Q}(O_t^{'},\theta_{t-\tau},\gamma(t-\tau),G(t-\tau))] 
    + E[\hat{Q}(O_t^{'},\theta_{t-\tau},\gamma(t-\tau))],
\end{align*}
  where $\tilde{O_t}$  is from the auxiliary  Markov chain  and $O_t^{'} = (s_t,a_t,s_{t+1})$ is  from the stationary distribution with $s_t \sim \mu_{\theta_{t-\tau}},a_t \sim \pi_{\theta_{t-\tau}},s_{t+1} \sim p(s_t,.,a_t)$ and which
actually satisfies $E[\hat{Q}(O_t^{'},\theta_{t-\tau},\gamma(t-\tau))] = 0$.
By collecting the corresponding bounds from Lemmas \ref{Lemma:zeta_G}--\ref{Lemma:zeta_second}, we have 
 
\begin{align*}
    E[\hat{Q}(O_t,\theta_t,\gamma(t),G(t))] &\geq -T_1E|\gamma_m(t) - \gamma_m(t-\tau)| - T_2E\Vert \theta_t - \theta_{t-\tau} \Vert - 2DU_G(U_{r} + U_{v})U_{L}\vert A \vert L\sum_{i=t-\tau}^{t} E\Vert\theta_i - \theta_{t-\tau} \Vert \\
    &\qquad- 4DU_G(U_{r} + U_{v})U_{L} bk^{\tau - 1} - T_{G}E\Vert G(t)^{-1} - G(t-\tau)^{-1}\Vert \\ 
   &\geq -T_1\sum_{i=t-\tau +1 }^{ t} E\vert \gamma_m(i) - \gamma_m(i-1)\vert - T_2\sum_{i=t-\tau +1 }^{ t}E\Vert \theta_i - \theta_{i-1} \Vert\\ 
   &\qquad- 2D(U_{r} + U_{v})U_{L}\vert A \vert L\sum_{i=t-\tau +1 }^{t}\sum_{j=t-\tau +1 }^{ i}E\Vert\theta_j - \theta_{j-1} \Vert - 4D(U_{r} + U_{v})U_{L} bk^{\tau - 1} \\
   &\qquad - T_{G}E\Vert G(t)^{-1} - G(t-\tau)^{-1}\Vert\\
   &\geq -T_1\sum_{i=t-\tau +1 }^{ t} E\vert \gamma_m(i) - \gamma_m(i-1)\vert - T_2\sum_{i=t-\tau +1 }^{ t}E\Vert \theta_i - \theta_{i-1} \Vert\\ 
   &\qquad- 2D(U_{r} + U_{v})U_{L}\vert A \vert L\tau\sum_{j=t-\tau +1 }^{ t}E\Vert\theta_j - \theta_{j-1} \Vert - 4D(U_{r} + U_{v})U_{L} bk^{\tau - 1} \\
   &\qquad - T_{G}E\Vert G(t)^{-1} - G(t-\tau)^{-1}\Vert\\
   &\geq -(D_1(\tau + 1)\sum_{k=t-\tau + 1}^{t} E\Vert \theta_k - \theta_{k-1} \Vert + D_2bk^{\tau - 1} + T_1\sum_{i=t-\tau +1 }^{ t} E\vert \gamma_m(i) - \gamma_m(i-1)\vert\\
   &\qquad + T_{G}\sum_{i=t-\tau +1 }^{ t} E\Vert G(i)^{-1} - G(i-1)^{-1}\Vert),
\end{align*} 
where $D_1 := \max \{T_2 , 2DU_G(U_{r} + U_{v})U_{L}\vert A \vert L\}$ and $D_2 := 4DU_G(U_{r} + U_{v})U_{L}$, respectively,
which completes the proof.
\end{proof}

Under the update rule of Algorithm 2 for the actor, we have:

\begin{align*}
    \theta_{t+1} = \theta_t + b(t)G(t)^{-1}\delta_{t}\Psi_{s_{t}a_{t}}.
\end{align*}
Now,
\begin{align*}
 &G(t)^{-1}\delta_{t}\nabla \log\pi_{\theta_t}(a_t|s_t) \\
&=
G(t)^{-1}( q(t) + \sum_{k=1}^{k=N}\gamma_{k}(t)(h_k(t)-\alpha_k) - L_t + v_{t}^T(f(s_{t+1}) - f(s_t))\nabla \log\pi_{\theta_t}(a_t|s_t) \\
&=
G(t)^{-1}(q(t) + \sum_{k=1}^{k=N}\gamma_{k}(t)(h_k(t)-\alpha_k) - L(\theta_t,\gamma(t)) + L(\theta_t,\gamma(t)) - L_t + (f(s_{t+1})^T - f(s_t)^T)(v_{t} - v_{t}^*) \\
&\qquad+ (f(s_{t+1})^T - f(s_t)^T)v_{t}^*)\nabla \log\pi_{\theta_t}(a_t|s_t) \\
&= G(t)^{-1}(L(\theta_t,\gamma(t)) - L_t + (f(s_{t+1})^T - f(s_t)^T)(v_{t} - v_{t}^*))\nabla \log\pi_{\theta_t}(a_t|s_t)\\ 
&\qquad  + G(t)^{-1}(q(t) + \sum_{k=1}^{k=N}\gamma_{k}(t)(h_k(t)-\alpha_k) - L(\theta_t,\gamma(t)) + (f(s_{t+1})^T - f(s_t)^T)v_{t}^* )\nabla \log\pi_{\theta}(a_t|s_t) \\
&=  G(t)^{-1}(\Delta H(O_t,L_t,v_{t},\theta_t,\gamma(t)) + H(O_t,\theta_t,\gamma(t),q(t),h(t))). 
\end{align*}
We have,

\begin{align*}
 & E_{O^{'},q,h}[G^{-1}(H(O^{'},\theta,\gamma,q,h) - \Delta H^{'}(O^{'},\theta,\gamma)) ] \\
 & =  G^{-1}E_{O^{'},q,h}[q + \sum_{k=1}^{k=N}\gamma(k)(h_k-\alpha_k) - L(\theta,\gamma) + V^{(\theta,\gamma)}(s^{'}) - V^{(\theta,\gamma)}(s))\nabla \log\pi_{\theta}(a|s)] \\
 &= G^{-1}\nabla L(\theta,\gamma), 
\end{align*}
where $E_{O^{'},q,h}[\cdot]$ denotes the expectation w.r.t $s \sim \mu_{\theta},a \sim \pi_{\theta},s^{'} \sim p(s,.,a),q \sim \bar{p}(.|s,a,s^{'}),h_{i} \sim p_{i}(.|s,a,s^{'})$.
Hence,
\begin{align*}
    L(\theta_{t+1},\gamma(t)) 
    &\geq L(\theta_t,\gamma(t)) + b(t) \langle \nabla L(\theta_t,\gamma(t)) ,  G(t)^{-1}\Delta H(O_t,L_t,v_{t},\theta_t,\gamma(t)) + G(t)^{-1}H(O_t,\theta_t,\gamma(t),q(t),h(t))\rangle\\ 
    &\qquad  - M_{L}b(t)^2\Vert G(t)^{-1}\delta_{t}\nabla \log\pi_{\theta_t}(a_t|s_t)\Vert^2\\
    &\geq L(\theta_t,\gamma(t)) + b(t) \langle \nabla L(\theta_t,\gamma(t)) ,  G(t)^{-1}\Delta H(O_t,L_t,v_{t},\theta_t,\gamma(t)) \rangle \\
    &\qquad +b(t)\langle \nabla L(\theta_t,\gamma(t)) , G(t)^{-1}H(O_t,\theta_t,\gamma(t),q(t),h(t)) - E_{O^{'},q(t),h(t)}[G(t)^{-1}H(O^{'},\theta_t,\gamma(t),q(t),h(t))]\rangle\\ &\qquad +b(t)\langle \nabla L(\theta_t,\gamma(t)) ,  E_{O^{'},q(t),h(t)}[G(t)^{-1}H(O^{'},\theta_t,\gamma(t),q(t),h(t))]\rangle - M_{L}b(t)^2\Vert G(t)^{-1}\delta_{t}\nabla \log\pi_{\theta_t}(a_t|s_t)\Vert^2\\
   & \geq L(\theta_t,\gamma(t)) + b(t) \langle \nabla L(\theta_t,\gamma(t)) ,  G(t)^{-1}\Delta H(O_t,L_t,v_{t},\theta_t,\gamma(t)) \rangle \\
   &\qquad+ b(t)\zeta(O_t,\theta_t,\gamma(t),q(t),h(t),G(t)) + b(t)\langle \nabla L(\theta_t,\gamma(t)) ,G(t)^{-1}\nabla L(\theta_t,\gamma(t))\rangle\\
   &\qquad+  b(t)\langle \nabla L(\theta_t,\gamma(t)) ,  E_{O^{'}}[G(t)^{-1}\Delta H^{'}(O^{'} , \theta_t,\gamma(t))]\rangle- M_{L}b(t)^2\Vert G(t)^{-1}\delta_{t}\nabla \log\pi_{\theta_t}(a_t|s_t)\Vert^2\\
   & \geq L(\theta_t,\gamma(t)) + b(t) \langle \nabla L(\theta_t,\gamma(t)) ,  G(t)^{-1}\Delta H(O_t,L_t,v_{t},\theta_t,\gamma(t)) \rangle \\
   &\qquad+ b(t)\zeta(O_t,\theta_t,\gamma(t),q(t),h(t),G(t)) + b(t)\lambda\Vert  \nabla L(\theta_t,\gamma(t))  \Vert^2\\
   &\qquad+  b(t)\langle \nabla L(\theta_t,\gamma(t)) ,  E_{O^{'}}[G(t)^{-1}\Delta H^{'}(O^{'} , \theta_t,\gamma(t))]\rangle- M_{L}b(t)^2\Vert G(t)^{-1}\delta_{t}\nabla \log\pi_{\theta_t}(a_t|s_t)\Vert^2.
\end{align*}
The last inequality holds as $G(t)^{-1}$ is a positive definite and symmetric matrix with minimum eigenvalue $\geq \lambda$.
After rearranging the terms and summing the expectation of the terms from $\tau_t$ to $t$, we have
\begin{align*}
    \lambda\sum\limits_{k=\tau_t}^{t}E \Vert  \nabla L(\theta_k,\gamma(k))\Vert^2 &\leq \underbrace{\sum\limits_{k=\tau_t}^{t}\frac{1}{b(k)}(E[L(\theta_{k+1},\gamma(k))] - E[L(\theta_k,\gamma(k))])}_{I_1}\\
&\qquad-\underbrace{ \sum\limits_{k=\tau_t}^{t}E\langle \nabla L(\theta_k,\gamma(k)) ,  G(k)^{-1}\Delta H(O_k,L_k,v_{k},\theta_k,\gamma(k))\rangle}_{I_2}\\
&\qquad- \underbrace{ \sum\limits_{k=\tau_t}^{t}E[\zeta(O_k,\theta_k,\gamma(k),q(k),h(k),G(k))] }_{I_3} \\
&\qquad  - \underbrace{\sum\limits_{k=\tau_t}^{t}E \langle \nabla L(\theta_k,\gamma(k)) ,  E_{O^{'}}[G(k)^{-1}\Delta H^{'}(O^{'} , \theta_k,\gamma(k))]\rangle}_{I_4}\\
 &\qquad +\underbrace{\sum\limits_{k=\tau_t}^{t}b(k)E[ M_{L}\Vert G(k)^{-1}\delta_{k}\nabla \log\pi_{\theta_k}(a_k|s_k)\Vert^2]}_{I_5}.
\end{align*} 
 \hfill\break
 For term $I_1$,
\begin{align*}
    \sum\limits_{k=\tau_t}^{t}\frac{1}{b(k)}(E[L(\theta_{k+1},\gamma(k))] - E[L(\theta_k,\gamma(k))]) \leq B_1(t-\tau_t + 1)^{1 - \beta + \sigma} + B_2(1+t)^{\sigma}.
\end{align*}
 This inequality comes from part $I_1$ of Section \ref{thmm:actor_1}.

 \hfill\break
 For term $I_2$,
\begin{align*}
    &- \sum\limits_{k=\tau_t}^{t}E\langle \nabla L(\theta_k,\gamma(k)) ,  G(k)^{-1}\Delta H(O_k,L_k,v_{k},\theta_k,\gamma(k))\rangle\\
& \leq \sqrt{\sum\limits_{k=\tau_t}^{t}E\Vert \nabla L(\theta_k,\gamma(k)) \Vert^2}\sqrt{\sum\limits_{k=\tau_t}^{t} E\Vert G(k)^{-1}\Delta H(O_k,L_k,v_{k},\theta_k,\gamma(k))\Vert^2}.
 \end{align*}
 
 Now,
\begin{align*}
&E\Vert G(k)^{-1}\Delta H(O_k,L_k,v_{k},\theta_k,\gamma(k)) \Vert^2\\
& = E\Vert G(k)^{-1}(L(\theta_k,\gamma(k)) - L_k + (f(s_{k+1})^T - f(s_k)^T)(v(k) - v(\theta_k,\gamma(k))^*))\nabla \log\pi_{\theta_k}(a_k|s_k)   \Vert^2\\
& \leq U_G^2D^2(2E\Vert A_k\Vert ^2 + 8E\Vert B_k\Vert^2).
\end{align*}

Hence,
\begin{align*}
    I_2 \leq DU_G\sqrt{\sum\limits_{k=\tau_t}^{t}E\Vert \nabla L(\theta_k,\gamma(k)) \Vert^2}\sqrt{\sum\limits_{k=\tau_t}^{t}(2E\Vert A_k\Vert ^2 + 8E\Vert B_k\Vert^2)}.
\end{align*}
 
 Next, for the term $I_3$, we have

\begin{align*}
E[\zeta(O_t,\theta_t,\gamma(t),q(t),h(t),G(t))] &\geq -(D_1(\tau + 1)\sum_{k=t-\tau + 1}^{t} E\Vert \theta_k - \theta_{k-1} \Vert + D_2bk^{\tau - 1} + T_1\sum_{i=t-\tau +1 }^{ t} E\vert \gamma_m(i) - \gamma_m(i-1)\vert\\
   &\qquad + T_{G}\sum_{i=t-\tau +1 }^{ t} E\Vert G(i)^{-1} - G(i-1)^{-1}\Vert).
\end{align*}

We can write 
\begin{align*}
    G(i)^{-1} - G(i-1)^{-1} = G(i)^{-1}(G(i-1) - G(i))G(i-1)^{-1}.
\end{align*}
By letting $\tau \stackrel{\triangle}{=} \tau_t$, we have

\begin{align*}
E[\zeta(O_k,\theta_k,\gamma(k),q(k),h(k),G(k))] &\geq -(2D_1D(\tau_t + 1)^2(U_r + U_v)U_G b(k-\tau_t) + D_2b(t)\\
&\qquad+ T_1(U_c + U_\alpha)(\tau_t + 1)c(k-\tau_t) +T_{G}(\tau_t + 1)U_G^2(U_G + D^2)a(k-\tau_t)).
\end{align*}

After simplifying, we have
\begin{align*}
\sum\limits_{k=\tau_t}^{t}E[\zeta(O_k,\theta_k,\gamma(k),q(k),h(k),G(k))] &\geq -A_1(\tau_t + 1)^2(1+t)^{1-\omega}
\end{align*}
for some $A_1 > 0$. Now, for term $I_4$, we have

\begin{align*}
   \sum\limits_{k=\tau_t}^{t}E \langle \nabla L(\theta_k,\gamma(k)) ,  E_{O^{'}}[G(k)^{-1}\Delta H^{'}(O^{'} , \theta_k,\gamma(k))]\rangle \geq -2DU_{L}U_{G}\epsilon_{app}( 1 + t -\tau_t).
\end{align*}

Also, for the term $I_5$, we have

\begin{align*}
    \sum\limits_{k=\tau_t}^{t}b(k)E[ M_{L}\Vert G(k)^{-1}\delta_{k}\nabla \log\pi_{\theta_k}(a_k|s_k)\Vert^2] \leq C_1(1 + t -\tau_t)^{1-\sigma},
\end{align*}

where $C_1$ is a positive constant. After combining all terms, we have

\begin{align*}
    \lambda\sum\limits_{k=\tau_t}^{t}E \Vert  \nabla L(\theta_k,\gamma(k))\Vert^2 &\leq \mathcal{O}((t + 1)^{1 - \beta + \sigma}) + \mathcal{O}((\tau_t + 1)^2(1+t)^{1-\omega})\\ 
&\qquad + DU_G\sqrt{\sum\limits_{k=\tau_t}^{t}E\Vert \nabla L(\theta_k,\gamma(k)) \Vert^2}\sqrt{\sum\limits_{k=\tau_t}^{t}(2E[A_k^2] + 8E[B_k^2])}\\
&\qquad   + 2DU_{L}U_{G}\epsilon_{app}( 1 + t -\tau_t).
\end{align*}
Dividing now  both sides by $(1 + t -\tau_t)$ and assuming $t \geq 2\tau_t - 1$, we obtain

\begin{align*}
    \lambda\sum\limits_{k=\tau_t}^{t}E \Vert  \nabla L(\theta_k,\gamma(k))\Vert^2/(1 + t -\tau_t)&\leq \mathcal{O}(1 +t)^{\sigma - \beta}  +  \mathcal{O}((\tau_t + 1)^2( 1 + t)^{ - \omega}) + 2DU_{L}U_{G}\epsilon_{app}\\
&\qquad+DU_G\sqrt{\frac{1}{ 1+ t -\tau_t}\sum_{k=\tau_t}^{t}E\Vert\nabla L(\theta_k,\gamma(k))\Vert^2}\sqrt{\frac{\sum\limits_{k=\tau_t}^{t}(2E[A_k^2] + 8E[B_k^2])}{1+t-\tau_t}}. 
\end{align*}

After applying the earlier square technique, we obtain

\begin{align*}
    \sum\limits_{k=\tau_t}^{t}E \Vert  \nabla L(\theta_k,\gamma(k))\Vert^2/(1 + t -\tau_t) &= \mathcal{O}(t^{\sigma - \beta}) +  \mathcal{O}(\log^2 t \cdot  t^{ - \omega})  + \mathcal{O}(\epsilon_{app}) + \mathcal{O}(\epsilon(t)),
\end{align*}

where $\epsilon(t) = \sum\limits_{k=\tau_t}^{t}(2E[A_k^2] + 8E[B_k^2])/(1+t-\tau_t)$.

\subsection{Proof of Theorem 4: Estimating the Average Reward for Constrained Natural Actor Critic}

We will use the same  notations as used in Section \ref{thm2_avg_reward}.

\textit{Proof of Theorem 4}

From the algorithm we have the update rule as 
\begin{align*}
    L_{t+1} = L_t + a(t)(q(t) + \sum_{k=1}^{k=N}\gamma_{k}(t)(h_k(t)-\alpha_k)) - L_t).
\end{align*}

\hfill\break
Unrolling the above recursion, we have
\begin{align*}
    y_{t+1}^2 &= (L_{t+1} - L_{t+1}^{*})^2\\
& =\left(L_t + a(t)\left(q(t) + \sum_{k=1}^{k=N}\gamma_{k}(t)(h_k(t)-\alpha_k) - L_t\right) - L_{t+1}^{*}\right)^2 \\
&= \left(y_t + L_t^* - L_{t+1}^* + a(t)\left(q(t) + \sum_{k=1}^{k=N}\gamma_{k}(t)(h_k(t)-\alpha_k) - L_t\right)\right)^2\\
&= y_t^2 + 2a(t)y_t(C_t - L_t) + 2y_t(L_t^* - L_{t+1}^*) + (L_t^* - L_{t+1}^* + a(t)(C_t - L_t))^2\\
&\leq y_t^2 + 2a(t)y_t(C_t - L_t) + 2y_t(L_t^* - L_{t+1}^*) + 2(L_t^* - L_{t+1}^*)^2 + 2a(t)^2(C_t - L_t)^2\\
&= y_t^2 - 2a(t)y_t^2 + 2a(t)y_t^2+ 2a(t)y_t(C_t - L_t) + 2y_t(L_t^* - L_{t+1}^*) + 2(L_t^* - L_{t+1}^*)^2 + 2a(t)^2(C_t - L_t)^2\\
&= y_t^2 - 2a(t)y_t^2 +  2a(t)y_t(C_t - L_t + y_t) + 2y_t(L_t^* - L_{t+1}^*) + 2(L_t^* - L_{t+1}^*)^2 + 2a(t)^2(C_t - L_t)^2\\
&= (1-2a(t))y_t^2 + 2a(t)y_t(C_t - L_t^*) +  2y_t(L_t^* - L_{t+1}^*) + 2(L_t^* - L_{t+1}^*)^2 + 2a(t)^2(C_t - L_t)^2,
\end{align*}

where $C_t = q(t) + \sum_{k=1}^{k=N}\gamma_{k}(t)(h_k(t)-\alpha_k) $. The first inequality is due to $( x + y )^2 \leq 2x^2 + 2y^2$.

Rearranging and summing from $\tau_t$ to $t$, we have

\begin{align*}
  \sum_{k = \tau_t}^{t}E[y_k^2] &\leq \underbrace{\sum_{k = \tau_t}^{t} \frac{1}{2a(k)}E(y_k^2 - y_{k+1}^2)}_{I_1} + \underbrace{\sum_{k=\tau_t}^{t}E[\hat{\Xi}(O_k,L_k,\theta_k,\gamma(k),q(k),h(k))] }_{I_2}\\
&\qquad+ \underbrace{\sum_{k = \tau_t}^{t} \frac{1}{a(k)}E[y_k(L_k^* - L_{k+1}^*)]}_{I_3} + \underbrace{\sum_{k = \tau_t}^{t} \frac{1}{a(k)}E[(L_k^* - L_{k+1}^*)^2]}_{I_4}\\
&\qquad+ \underbrace{\sum_{k = \tau_t}^{t}a(k)E[(C_k - L_k)^2]}_{I_5}.
\end{align*}
After carrying out an analysis similar to that of section \ref{thm2_avg_reward}, we obtain

\begin{align*}
    \sum_{k=\tau_t}^{t}E[y_k^2]/(1+t-\tau_t) = \mathcal{O}(t^{\omega-1}) + \mathcal{O}(\log^2t \cdot t^{-\omega}) + \mathcal{O}(t^{-2(\sigma - \omega)}).
\end{align*}

\subsection{Proof of Theorem 4: Estimating the convergence point of Critic  for Constrained Natural Actor Critic}

The update rule for the critic in Algorithm 2 is similar to the one in Algorithm 1. Hence we will get the inequality (\ref{inequality:critic}) for natural constrained actor critic also. After carrying out an analysis similar to Section \ref{thm2_critic_converge} we get

\begin{align*}
    \bigg(\sum_{k=\tau_t}^{t} E\Vert v_{k} - v^*(k) \Vert^2 \bigg)/(1+t-\tau_t)=\mathcal{O}\bigg(\frac{1}{t^{1-\omega}}\bigg) + \mathcal{O}\bigg(\frac{\log t}{t^{\omega}}\bigg) + \mathcal{O}\bigg(\frac{1}{t^{2(\sigma - \omega)}}\bigg).
\end{align*}

\subsection{Proof of Corollary 2}

We have the following result from Theorem 3:
\begin{align}\label{thm3_results}
    \min\limits_{0\leq k\leq t}E[\Vert\nabla L(\theta_k,\gamma(k))\Vert^2 ]  = \mathcal{O}(t^{\sigma - \beta}) ) + \mathcal{O}((\log t)^2 t^{-\omega }) + \mathcal{O}(\epsilon_{app}) + \mathcal{O}(\varepsilon(t)),
\end{align}
where,
\begin{align*}
    \varepsilon(t) &= (2\sum_{k=\tau_t}^{t}E\Vert A_k\Vert^2 + 8\sum_{k=\tau_t}^{t}E\Vert B_k\Vert^2)/(1 + t -\tau_t), \\
     A_k  &=  L_k - L(\theta_k,\gamma(k)),\\
     B_k &= v_k - v(\theta_k,\gamma(k)).
\end{align*}

From the results of Theorem 4, we have

\begin{align*}
    \varepsilon(t) = \mathcal{O}(t^{\omega-1}) + \mathcal{O}(\log t \cdot t^{-\omega}) + \mathcal{O}(t^{-2(\sigma - \omega)}).
\end{align*}
Putting this back in (\ref{thm3_results}), we obtain

\begin{align*}
    \min\limits_{0\leq k\leq t}E[\Vert\nabla L(\theta_k,\gamma(k))\Vert^2 ]   &=  \mathcal{O}(t^{\sigma - \beta})    + \mathcal{O}(\log^2t \cdot t^{-\omega})+ \mathcal{O}(t^{\omega-1}) + \mathcal{O}(\log t \cdot t^{-\omega}) + \mathcal{O}(t^{-2(\sigma - \omega)}) +\mathcal{O}(\epsilon_{app}) \\
    & =  \mathcal{O}(t^{\sigma - \beta})    + \mathcal{O}(\log^2t \cdot t^{-\omega})+ \mathcal{O}(t^{\omega-1}) +  \mathcal{O}(t^{-2(\sigma - \omega)}) +\mathcal{O}(\epsilon_{app}) \\
    &= \mathcal{O}(t^{\sigma - \beta})    + \mathcal{O}(\log^2t \cdot t^{-\omega}) +\mathcal{O}(t^{-2(\sigma - \omega)}) +\mathcal{O}(\epsilon_{app})
\end{align*}

The last equality above again holds because $(\sigma - \beta) > (\omega - 1)$.
Optimising over the choice of $\omega$, $\sigma$ and $\beta$, we have $\omega$ = 0.4, $\sigma$ = 0.6 and $\beta$ = 1, respectively. Hence,
\begin{align*}
    \min\limits_{0\leq k\leq t}E[\Vert\nabla L(\theta_k,\gamma(k))\Vert^2 ]   = \mathcal{O}(\log^2 t \cdot t^{-0.4}) +\mathcal{O}(\epsilon_{app}).
\end{align*}

Therefore, in order to obtain an $\epsilon$-approximate (ignoring the approximation error as with \cite{fta_2_timescale}) stationary point of the performance function (L($\theta,\gamma$)), namely,

\begin{align*}
    \min\limits_{0\leq k\leq T}E[\Vert\nabla L(\theta_k,\gamma(k))\Vert^2 ]  = \mathcal{O}(\log^2 T \cdot T^{-0.4}) +\mathcal{O}(\epsilon_{app})  \leq \mathcal{O}(\epsilon_{app}) + \epsilon,
\end{align*}
we need to set $T = \tilde{\mathcal{O}}(\epsilon^{-2.5})$.

\section{Experimental Setting}

For detailed information about the settings involved for the three Safety-Gym environments, Safety-PointGoal1-v0 (SPG1-v0), Safety-CarGoal1-v0 (SCG1-vo) and SafetyPointPush1-v0 (SPP1-v0), respectively, please see \href{https://safety-gymnasium.readthedocs.io/en/latest/environments/safe_navigation/goal.html}{Safety Gymnasium}.
We experimentally compare C-AC and C-NAC algorithms with C-DQN on the three settings. 

The C-DQN algorithm is obtained from the algorithm Deep Q-Network (DQN) \cite{mnih} by (a) modifying the basic setting to incorporate the average reward framework from the discounted reward setting considered there and (b) relaxing the constraints to form a Lagrangian in a similar manner as the C-AC and C-NAC algorithms. We also update the Lagrange parameter using the same updates as C-AC and C-NAC, respectively. This ensures a fair comparison across all the algorithms. Note, however, that such an update of the Lagrange parameter had not been used previously in the context of the DQN algorithm. 
We have taken the threshold level to be 0.1 for all the settings.

\begin{table*}
  \centering
  \caption{Comparision of C-AC , C-NAC and C-DQN in terms of constraint cost $\pm$ standard error.}
  \begin{tabular}{|c|c|c|c|}
    \hline
  Algorithm & SafetyPointGoal1-v0 & SafetyCarGoal1-v0 & SafetyPointPush1-v0\\ \hline 
  C-AC &  0.038 $\pm$ 0.026 & 0.0195 $\pm$  0.027 & 0.028 $\pm$ 0.018\\ \hline
  C-NAC & 0.049 $\pm$ 0.045 & 0.047 $\pm$ 0.046 & 0.0295 $\pm$ 0.032\\ \hline
  C-DQN & 0.039 $\pm$  0.023 & 0.00872 $\pm$ 0.0076 & 0.035 $\pm$ 0.022\\ \hline
  \end{tabular}
  \label{tab:experiment_avgconstraint}
\end{table*}

We observe that the constraint threshold is satisfied by all the three algorithms in the three different environments. Table \ref{tab:experiment_avgconstraint} exhibits  the values of the constraint costs (both average and standard error) over ten independent runs of each algorithm. As can be seen from the table as well as the bottom row of plots in Figure \ref{fig:experiment}, the constraint threshold is met by all the three algorithms on each of the settings.

\end{document}